\documentclass[portrait]{article}
\usepackage[margin = 1in]{geometry}
\usepackage{setspace}
\onehalfspacing

\usepackage{graphicx,amsmath,amsfonts,amssymb,booktabs,multirow,color}
\usepackage{xfrac}
\usepackage[section]{algorithm}
\usepackage{algorithmic}
\usepackage[hidelinks]{hyperref}
\usepackage{subcaption}

\DeclareMathOperator*{\argmax}{\arg\!\max}

\DeclareMathOperator*{\tr}{tr}

\makeatletter
\newenvironment{subtheorem}[1]{%
  \def\subtheoremcounter{#1}%
  \refstepcounter{#1}%
  \protected@edef\theparentnumber{\csname the#1\endcsname}%
  \setcounter{parentnumber}{\value{#1}}%
  \setcounter{#1}{0}%
  \expandafter\def\csname the#1\endcsname{\theparentnumber.\Alph{#1}}%
  \ignorespaces
}{%
  \setcounter{\subtheoremcounter}{\value{parentnumber}}%
  \ignorespacesafterend
}
\makeatother
\newcounter{parentnumber}

\usepackage{amsthm}

\newtheorem{theorem}{Theorem}


\title{Certifiably Optimal Sparse Inverse Covariance Estimation}

\author{Dimitris Bertsimas \and Jourdain Lamperski \and Jean Pauphilet}
\date{%
	Operations Research Center, Massachusetts Institute of Technology, Cambridge, MA. \\ 
	\texttt{ \{dbertsim, jourdain, jpauph\}@mit.edu } \\[2ex]%
    June 2019
}
\begin{document}

\maketitle 

\begin{abstract}
We consider the maximum likelihood estimation of sparse inverse covariance matrices. We demonstrate that  current heuristic approaches primarily encourage robustness, instead of the desired sparsity. We give a novel approach that solves the cardinality constrained likelihood problem to certifiable optimality. The approach uses techniques from mixed-integer optimization and convex optimization, and provides a high-quality solution with a guarantee on its suboptimality, even if the algorithm is terminated early. Using a variety of synthetic and real datasets, we demonstrate that our approach can solve problems where the dimension of the inverse covariance matrix is up to $1,000$s. We also demonstrate that our approach produces significantly sparser solutions than Glasso and other popular learning procedures, makes less false discoveries, while still maintaining state-of-the-art accuracy. 
\end{abstract}

\section{Introduction} \label{sec:introduction} 
Estimating inverse covariance (precision) matrices is a fundamental task in modern multivariate analysis. Applications include undirected Gaussian graphical models \cite{lauritzen1996graphical}, high dimensional discriminant analysis \cite{cai2011constrained}, portfolio allocation \cite{fan2008high,fan2012vast}, complex data visualization \cite{tokuda2011visualizing}, amongst many others, see \cite{fan2014challenges} for a review. For example, in the context of undirected Gaussian graphical models, estimating the precision matrix corresponds to inferring the conditional independence structure on the related graphical model; zero entries in the precision matrix indicate that variables are conditionally independent.  

{Sparsity of the true precision matrix is a prevailing assumption \cite{yuan2007model,bickel2008covariance,lam2009sparsistency,el2010high,rigollet2012estimation} for two reasons.} 
\begin{enumerate} 
\item The covariance matrix is often estimated empirically using the maximum likelihood estimator: 
\begin{equation}
\label{eq:empCov}
\overline{\mathbf{\Sigma}} = \dfrac{1}{n} \sum_{i=1}^n (x^{(i)}-\bar x) (x^{(i)}-\bar x)^T,
\end{equation}
where the number of samples $n$ can be lower than the space dimension $p$. When this is the case, it is known that the empirical covariance matrix\footnote{Note that $\overline{\mathbf{\Sigma}}$ is not the only estimate of the covariance matrix. In particular,  $\tfrac{n}{n-1} \overline{\mathbf{\Sigma}}$ is a widely-used unbiased estimator of the covariance matrix. In this paper, we will only consider $\overline{\mathbf{\Sigma}}$, which we might refer to as the empirical or sample covariance matrix.} $\overline{\mathbf{\Sigma}}$ is singular, and thus does not accurately model the true covariance matrix. Moreover, the empirical covariance matrix can not be inverted to obtain an estimate of the precision matrix. Assuming sparsity of the true precision matrix is required for the precision matrix estimation problem to be well-defined. 

\item {In many applications, we use models to improve our knowledge of a given phenomenon and it is fair to admit that humans are limited in their ability to understand complex models. As Rutherford D. Roger said `We are drowning in information but starving for knowledge'. Models which only involve a small number variables, i.e. sparse models, are inherently simple. Sparse models with high predictive power can thus be extremely valuable in practice. We refer skeptic readers to the first chapter of \cite{hastie2015statistical}, which makes a strong case for sparsity in statistical learning.}
\end{enumerate} 
The most common method for encouraging sparsity in precision matrix estimation involves solving a $\ell_1$-regularized maximum likelihood problem. The problem is convex and can be solved in high dimensions. Though this approach is tractable, solutions suffer from similar drawbacks as Lasso solutions in linear regression \cite{bertsimas2016best}. For example, one drawback is the $\ell_1$-penalty introduces extra bias when estimating nonzero entries in the precision matrix with large absolute values \cite{lam2009sparsistency}.

In this paper, we seek to confront these drawbacks by solving the cardinality constrained optimization problem for which the $\ell_1$-regularized problem is a convex surrogate. The cardinality constrained problem parallels the relation the best subset selection (or feature selection) problem plays in linear regression with Lasso. The main goal of this work is to solve the cardinality constrained problem for problem sizes of interest, and compare the solutions with current approaches. A summary of the contributions in this paper is given below.  

\begin{enumerate} 
\item Recent results in linear regression establish that Lasso can be viewed as a robust optimization problem for an appropriately chosen uncertainty set \cite{xu2009robust,bertsimas2011theory}. {In a seminal paper on precision matrix estimation, \cite{banerjee2008model} already uncovered a similar connection, suggesting that the $\ell_1$-regularization approach is primarily encouraging robustness and that sparsity is a fortunate by-product. We generalize their result and show that a wide family of regularization can indeed be viewed as a robust version of the inverse covariance estimation problem.}

\item We formulate the cardinality constrained maximum likelihood problem for the inverse covariance matrix as a binary optimization problem. {We show that the resulting discrete optimization problem is non-smooth in general, but that adding some well-chosen regularization penalty leads to a smooth convex discrete optimization problem. In particular, we show that the well-known big-$M$ formulation  or the Ridge regularization term satisfy this property.}

\item We propose a combination of {outer-approximation algorithm} and first-order methods to solve the mixed-integer convex problem. To our knowledge, this is the first time in which {such a scheme} is used to solve a mixed-integer nonlinear optimization problem with semidefinite constraints. It is well-known that problems of this type are notoriously hard to solve, and we observe that our approach significantly outperforms available mixed-integer nonlinear solvers. An advantage of our approach over existing approaches is that it provides near optimal solutions fast, and a guarantee on the solutions suboptimality if the method is terminated early.

\item We report computational results with both synthetic and real-world datasets that show that our proposed approach can deliver near optimal solutions in a matter of seconds, and provably optimal solutions in a matter of minutes for {$p$ in the $100$s and $k$ in the $10$s. The algorithm also provides high-quality solutions to problems in the $1,000$s, 
but a certificate of optimality is  more computationally expensive  for those sizes.}

\item We investigate empirically statistical properties of solutions for the cardinality constrained problem. We compare solutions with $\ell_1$-regularized estimates and other popular learning procedures, and observe that cardinality-constrained estimates {recover the sparsity pattern of the true underlying precision matrix with comparable accuracy as state-of-the-art but significantly better false detection rate and predictive power.} 

\item {Finally, we show the modeling power of our framework and illustrate how it can be easily adapted to estimate Gaussian graphical with more structural information.}
\end{enumerate}  

The structure of the paper is as follows: In Section \ref{sec:overview}, we describe the problem of interest and provide a more detailed overview of relevant results from the literature. We generalize existing results about the equivalence between regularization and robustness. From this perspective, $\ell_1$-regularized approaches primarily encourage robustness instead of sparsity, which could  explain the known drawbacks of these techniques. In Section \ref{sec:IO} (supplemented by Appendix \ref{sec:A.dual}), we provide a mixed-integer formulation for the cardinality-constrained problem. Though non-smooth in general, we show that adding big-$M$ constraints or a ridge penalty term turns the problem  into a smooth convex integer optimization problem, for which we propose an efficient cutting-plane procedure. We also discuss practical implementation and parameter tuning in Section \ref{sec:IO.cv} and Appendix \ref{sec:A.bigM}. In Section \ref{sec:covsel}, we describe and numerically compare first-order and coordinate descent methods to solve variants of the covariance selection problem, used in our algorithm to provide valid cuts. We perform a variety of computational tests in Section \ref{sec:exp} and Appendix \ref{sec:A.comptime}, and use synthetic and real datasets to assess the algorithmic and statistical performance of our approach. Section \ref{sec:structural} illustrates the modeling power of our approach by discussing  extensions to cases where structural information about the correlation structure is available. In Section \ref{sec:conclusion}, we provide concluding remarks. 

\section{Overview and Preliminaries} \label{sec:overview} 
In this section, we provide a description of the problem formulation and an overview of current approaches for inducing sparsity in inverse covariance estimation. {Previous work \cite{banerjee2008model} showed that the $\ell_1$-regularization approach is equivalent to a robust optimization problem with an appropriately chosen uncertainty set. We generalize their result and discuss practical implications.} In particular, this equivalence suggests that current approaches are primarily encouraging robustness, not sparsity. 

\subsection{Problem Description} \label{sec:overview.description} 
Let us consider a Gaussian random variable $X \sim N(\boldsymbol{\mu},\mathbf{\Sigma})$ with unknown mean $\boldsymbol{\mu} \in \mathbb{R}^p$ and covariance $\mathbf{\Sigma} \in S_{++}^p$, where $S_{++}^p$ denotes the set of symmetric positive definite matrices in $\mathbb{R}^{p \times p}$. Given a random sample $x^{(1)},...,x^{(n)}$ of $X$, we seek to estimate the precision matrix $\mathbf{\Sigma}^{-1}$. Let $\overline{\mathbf{\Sigma}} \in \mathbb{R}^{p \times p}$ be the empirical covariance matrix corresponding to the $n$ observations as defined in \eqref{eq:empCov}. The maximum likelihood estimate of $\mathbf{\Sigma}^{-1}$ is the solution of the optimization problem 
\begin{equation}  \label{eq:MLE} 
\min_{\mathbf{\Theta} \succ \mathbf{0}} \quad \langle \overline{\mathbf{\Sigma}}, \mathbf{\Theta} \rangle - \log \det \mathbf{\Theta},
\end{equation}
where the expression $\langle \cdot, \cdot \rangle$ is the usual trace inner product $\langle \overline{\mathbf{\Sigma}}, \mathbf{\Theta} \rangle = \tr(\overline{\mathbf{\Sigma}}^{\top} \mathbf{\Theta})$ and the objective function in \eqref{eq:MLE} is the negative Gaussian log-likelihood of the data \cite{yuan2007model}. 

As mentioned in introduction, a more interesting problem in practice is the cardinality-constrained version of \eqref{eq:MLE} 
\begin{equation}  \label{eq:L0MLE} 
\min_{\mathbf{\Theta} \succ \mathbf{0}} \quad \langle \overline{\mathbf{\Sigma}}, \mathbf{\Theta} \rangle - \log \det \mathbf{\Theta} \quad \text{s.t.} \quad \Vert \mathbf{\Theta} \Vert_0 \leqslant k,
\end{equation}
where $k \in \mathbb{Z}_+$, and $\Vert \mathbf{\Theta} \Vert_0 := \sum_{i > j} 1_{\Theta_{ij} \neq 0}$ counts the number of nonzero entries in the strictly lower triangular part of $\mathbf{\Theta}$. 

Problem \eqref{eq:L0MLE} parallels the role best subset selection plays in the context of linear regression. Like best subset selection, the cardinality constraint makes it computationally challenging and indeed NP-hard {\cite{chickering1996learning}}. There is also the extra difficulty that the problem is a minimization over positive definite matrices $S_{++}^p$. To our knowledge, the problem has yet to be considered in the literature as a discrete optimization problem over positive definite matrices. Thus, this paper provides the first {provably exact} optimization approach for solving Problem \eqref{eq:L0MLE}. { Closest to our approach are recent works for approximately solving a variant of Problem \eqref{eq:L0MLE} with an $\ell_0$ penalty instead of a constraint. \cite{marjanovic2015l_} propose a coordinate descent method to find good stationary solutions. \cite{liu2016sparse} approximate the $\ell_0$ pseudo-norm by a series of ridge penalties and implement a variant of the alternating direction method of multipliers. }

At the core of our methodology is the exploitation of novel techniques in discrete optimization. Recently, best subset selection and other cardinality constrained problems have been { solved  in high dimensions, } using discrete optimization \cite{bertsimas2014least,bertsimas2016best,bertsimas2017sparse}. These approaches exploit the significant progress  in mixed-integer optimization in the past decades and motivate our approach.

\subsection{Notations} \label{sec:overview.notations}
{In the remaining of the paper, we will use bold characters to denote matrices or matrix-valued functions. Unless otherwise stated, all norms on matrices are vector norms and matrices are $p \times p$ matrices.

Let us recall some linear algebra identities, which will be useful in Section \ref{sec:covsel.cd}. For any invertible matrix $\mathbf{A}$ and vectors $u$, $v$, we can compute the determinant of  $\mathbf{A} + u v^T$ \cite[][Eqn. 6.2.3]{meyer2000matrix}
\begin{align*}
\det (\mathbf{A} + u v^T) = \det (\mathbf{A}) \, (1+v^T \mathbf{A}^{-1} u ),
\end{align*}
and its inverse \cite[Woodbury-Sherman-Morrison Formula in ][Eqn. 3.8.2]{meyer2000matrix}
\begin{align*}
(\mathbf{A} + u v^T)^{-1} = \mathbf{A}^{-1} - \dfrac{1}{1+v^T \mathbf{A}^{-1} u} \mathbf{A}^{-1} u v^T \mathbf{A}^{-1} .
\end{align*}

By default, all vectors are $p$-dimensional vectors. We will denote by $e_i$, $i=1,\dots,p$ the unit vectors with $1$ at the $i$th coordinate and zero elsewhere, and $e$ the vector of all ones. }

\subsection{Current Approaches} \label{sec:overview.compet}
A variety of convex and nonlinear based optimization methods have been proposed to induce sparsity using the maximum likelihood problem \cite{fan2016overview}. Many of these methods can be interpreted as convex relaxation for Problem \eqref{eq:L0MLE}, the most common of which being the $\ell_1$-regularized negative log-likelihood minimization
\begin{equation}  \label{eq:L1MLE} 
\min_{\mathbf{\Theta} \succ \mathbf{0}} \quad \langle \overline{\mathbf{\Sigma}}, \mathbf{\Theta} \rangle - \log \det \mathbf{\Theta} + \lambda \Vert \mathbf{\Theta} \Vert_1,
\end{equation}
where $\Vert \mathbf{\Theta} \Vert_1 := \sum_{i,j} | \Theta_{ij} |$ is the $\ell_1$ vector norm. In practice, it has  been observed that the penalty term shrinks the coefficients of $\mathbf{\Theta}$ towards zero, and produces a sparse solution by setting many coefficients equal to zero. Problem \eqref{eq:L1MLE} was originally motivated by the development and successes of Lasso as a convex surrogate for the best subset selection problem \cite{yuan2007model}. The problem is well-studied in the literature \cite{yuan2007model,banerjee2008model,friedman2008sparse,rothman2008sparse,scheinberg2009sinco} and solved efficiently with a block coordinate descent procedure. {\cite{banerjee2008model} originally proposed the block coordinate descent schema and solved each sub-problem using Nesterov's first-order method. \cite{friedman2008sparse} then suggested a modified version of the algorithm, commonly referred to as Graphical Lasso or Glasso for each sub-problem is reformulated as a Lasso regression problem and solved as such. \cite{mazumder2012exact, mazumder2012graphical} then further improved the Glasso algorithm through smart feature screening rules. More recently, \cite{krishnamurthy201118} used coordinate descent to solve each sub-problem and released an \verb|R| package which can solve \eqref{eq:L1MLE} for a whole regularization path in a short amount of time - within a minute for $p=1,000$. Coordinate descent \cite{scheinberg2009sinco}, alternating linearization \cite{scheinberg2010sparse}, quadratic approximation and Newton's method \cite{hsieh2011sparse,oztoprak2012newton,hsieh2013big}, and stochastic proximal methods \cite{atchade2015scalable} have also been explored.}

{In earlier work, \cite{meinshausen2006high} proposed an efficient algorithm to discover the sparsity pattern of $\mathbf{\Sigma}^{-1}$ by fitting a Lasso model to each variable, using the others as predictors. It has later been shown \cite{banerjee2008model,friedman2008sparse} that their approach can be viewed as an approximation of Problem \eqref{eq:L1MLE}. More recently, \cite{fattahi2017graphical} proposed a simple thresholding heuristic and explored its connection with the graphical lasso \eqref{eq:L1MLE} }

Though the problem is tractable, it shares in the statistical shortcomings of its motivator, Lasso. Problem \eqref{eq:L1MLE} leads to biased estimates because the $\ell_1$-norm penalty term penalizes large entries more than the smaller entries \cite{lam2009sparsistency}. Accordingly, upon increasing the degree of regularization, \eqref{eq:L1MLE} sets more entries of $\mathbf{\Theta}$ to zero but leaves true predictors {outside of the support}. Thus, as soon as certain regularity conditions on the data are violated, Problem \eqref{eq:L1MLE} becomes suboptimal as a variable selector and in terms of delivering a model with good predictive performance. In contrast, Problem \eqref{eq:L0MLE} chooses variables to enter the active set without shrinking the entries in $\mathbf{\Theta}$. \cite{lam2009sparsistency} discuss other statistical shortcomings of \eqref{eq:L1MLE}. 
 
To address these shortcomings, other relaxation of \eqref{eq:L0MLE} have been proposed using smooth nonconvex penalties such as smoothly clipped absolute deviation (SCAD) \cite{fan2001variable} and minimax concave penalty (MCP) \cite{zhang2010nearly}, which are folded concave penalties that do not introduce extra bias for estimating nonzero entries with large absolute values. Theoretical properties of these methods are well studied \cite{rothman2008sparse,lam2009sparsistency}. However, these formulations are nonconvex and cannot provide a guarantee on how close their optimal solution is to the optimal solution of Problem \eqref{eq:L0MLE}. 

Estimators and approaches other than using maximum likelihood have also been proposed for inducing sparsity. Two such estimators are the constrained $\ell_1$-minimization for inverse matrix estimation (CLIME) estimator \cite{cai2011constrained} and the graphical Dantzig selector \cite{yuan2010high}. Rank and factor based methods have also been proposed; for a more complete survey of the different methods, see \cite{fan2016overview}. 

{From an optimization perspective, mixed-integer semi-definite optimization (MI-SDP) has received a lot of attention in recent years, for they naturally appear in robust optimization problems with ellipsoidal uncertainty sets \cite{ben2009robust} or as reformulations of combinatorial problems \cite{sotirov2012sdp}.  Problem-specific MI-SDP strategies have been developed for problems such as binary quadratic programming \cite{helmberg1998solving}, robust truss topology \cite{yonekura2010global} or the max-cut problem \cite{rendl2010solving}. More recently, rounding and Gomory cuts \cite{ccezik2005cuts,atamturk2010conic}, branch-and-bound \cite{gally2018framework} and outer-approximation schemes \cite{lubin2018polyhedral} have also been developed, in an attempt to provide the same level of general-purpose solvers for MI-SDP as there are for mixed-integer linear optimization. Our approach is similar to the outer-approximation procedure described by \cite{lubin2018polyhedral} but leverages the specific dependency between the binary and continuous variables in our problem. It also disconnects the combinatorial aspect of the problem from its SDP component, allowing us to benefit both from advances in mixed-integer linear optimization and tailor-made semidefinite strategies.} 

\subsection{Equivalence between Regularization and Robustness}  \label{sec: overview.robust}
{As originally enunciated by \cite{banerjee2008model}, the $\ell_1$-regularization in \eqref{eq:L1MLE} is the aftermath of a robust optimization problem. } Indeed, one can prove a clear equivalence between regularization and robustification in the case of sparse inverse covariance problems:

\begin{subtheorem}{theorem}\label{thm:robust}
\begin{theorem}\label{thm:robust.A}
For any vector norm $\Vert \cdot \Vert$, 
\begin{align*}
\min_{\mathbf{\Theta} \succ 0} \: \langle \overline{\mathbf{\Sigma}}, \mathbf{\Theta} \rangle - \log \det \mathbf{\Theta} + \lambda \| \mathbf{\Theta} \| &= \min_{\mathbf{\Theta} \succ 0} \: \max_{ \mathbf{U}: \|  \mathbf{U} \|_\star \leqslant \lambda} \: \langle \overline{\mathbf{\Sigma}} + \mathbf{U}, \mathbf{\Theta} \rangle - \log \det \mathbf{\Theta},
\end{align*}
where $\| \cdot \|_\star$ denotes the dual norm of $\Vert \cdot \Vert$. \end{theorem}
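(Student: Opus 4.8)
The plan is to reduce the claimed identity to the variational (bidual) characterization of a vector norm and then observe that, because the equality holds pointwise in $\mathbf{\Theta}$, taking the minimum over $\mathbf{\Theta} \succ \mathbf{0}$ on both sides requires no minimax interchange. Concretely, I would fix an arbitrary $\mathbf{\Theta} \succ \mathbf{0}$ and work entirely with the penalty term $\lambda \| \mathbf{\Theta} \|$, treating the matrix $\mathbf{\Theta}$ as an element of the finite-dimensional inner-product space $\mathbb{R}^{p \times p}$ equipped with the trace inner product $\langle \cdot, \cdot \rangle$.

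The key step is the dual-norm representation
\begin{equation*}
\| \mathbf{\Theta} \| = \max_{\mathbf{U} : \| \mathbf{U} \|_\star \leqslant 1} \langle \mathbf{U}, \mathbf{\Theta} \rangle,
\end{equation*}
which is precisely the statement that the bidual of a norm equals the norm itself. In finite dimensions the dual-norm ball $\{ \mathbf{U} : \| \mathbf{U} \|_\star \leqslant 1 \}$ is compact, so the supremum is attained and writing ``$\max$'' is justified; the identity itself follows from the supporting-hyperplane (Hahn--Banach) theorem. Rescaling through the substitution $\mathbf{V} = \lambda \mathbf{U}$, whose constraint becomes $\| \mathbf{V} \|_\star \leqslant \lambda$, yields $\lambda \| \mathbf{\Theta} \| = \max_{\| \mathbf{V} \|_\star \leqslant \lambda} \langle \mathbf{V}, \mathbf{\Theta} \rangle$.

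It then remains to absorb the two terms $\langle \overline{\mathbf{\Sigma}}, \mathbf{\Theta} \rangle - \log \det \mathbf{\Theta}$, which do not depend on the maximizing variable, into the maximization. Using bilinearity of the inner product to combine $\langle \overline{\mathbf{\Sigma}}, \mathbf{\Theta} \rangle + \langle \mathbf{U}, \mathbf{\Theta} \rangle = \langle \overline{\mathbf{\Sigma}} + \mathbf{U}, \mathbf{\Theta} \rangle$, I obtain
\begin{equation*}
\langle \overline{\mathbf{\Sigma}}, \mathbf{\Theta} \rangle - \log \det \mathbf{\Theta} + \lambda \| \mathbf{\Theta} \| = \max_{\| \mathbf{U} \|_\star \leqslant \lambda} \big( \langle \overline{\mathbf{\Sigma}} + \mathbf{U}, \mathbf{\Theta} \rangle - \log \det \mathbf{\Theta} \big).
\end{equation*}
Taking the minimum over $\mathbf{\Theta} \succ \mathbf{0}$ of both sides of this pointwise identity produces the theorem.

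I expect the only genuine subtlety to be the justification of the bidual identity, namely that the dual of the dual norm recovers the original norm; everything else is bookkeeping. Because the equivalence is established pointwise in $\mathbf{\Theta}$ \emph{before} minimizing, no saddle-point or Sion-type minimax theorem is needed, which sidesteps the usual difficulty (verifying the convexity/concavity and compactness hypotheses) that typically accompanies such arguments.
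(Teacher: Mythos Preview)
Your proposal is correct and follows essentially the same route as the paper: the paper's proof simply invokes the identity $\| \mathbf{\Theta} \| = \max_{\mathbf{U}:\|\mathbf{U}\|_\star \leqslant 1} \langle \mathbf{U}, \mathbf{\Theta} \rangle$ (which it calls the ``definition of the dual norm'') and leaves the remaining steps implicit. Your write-up is more explicit about the rescaling by $\lambda$, the pointwise-in-$\mathbf{\Theta}$ nature of the identity, and the fact that no minimax interchange is required, but the underlying argument is identical.
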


{\begin{theorem}\label{thm:robust.B}
For any $(p,q)$-induced norm $\Vert \cdot \Vert_{(p,q)} $, 
\begin{align*}
\min_{\mathbf{\Theta} \succ 0} \: \langle \overline{\mathbf{\Sigma}}, \mathbf{\Theta} \rangle - \log \det \mathbf{\Theta} + \lambda \| \mathbf{\Theta} \|_{(p,q)}  &= \min_{\mathbf{\Theta} \succ 0} \: \max_{ \mathbf{U} \in \mathcal{U}_{(p,q)} } \: \langle \overline{\mathbf{\Sigma}} + \lambda \mathbf{U}, \mathbf{\Theta} \rangle - \log \det \mathbf{\Theta},
\end{align*}
with $\mathcal{U}_{(p,q)} := \left\lbrace u v^T : \| u \|_p = 1, \, \| v \|_{q^\star} =1 \right\rbrace $ and $q^\star$ defined such that $\sfrac{1}{q} + \sfrac{1}{q^\star} =1$.\end{theorem}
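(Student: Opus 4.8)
The plan is to reduce the claimed identity to a single variational formula for the induced norm and then transport a pointwise equality through the outer minimization. Since the terms $\langle\overline{\mathbf{\Sigma}},\mathbf{\Theta}\rangle - \log\det\mathbf{\Theta}$ on the left do not involve $\mathbf{U}$, it suffices to show, for every fixed $\mathbf{\Theta}\succ 0$, that
\begin{equation*}
\lambda\,\|\mathbf{\Theta}\|_{(p,q)} \;=\; \max_{\mathbf{U}\in\mathcal{U}_{(p,q)}}\;\langle\lambda\mathbf{U},\mathbf{\Theta}\rangle ;
\end{equation*}
adding the common $\mathbf{U}$-free terms inside the maximum and then taking $\min_{\mathbf{\Theta}\succ 0}$ of both sides yields the theorem. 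Because $\lambda\ge 0$ merely scales the objective, this in turn reduces to the rank-one variational identity $\|\mathbf{\Theta}\|_{(p,q)} = \max_{\mathbf{U}\in\mathcal{U}_{(p,q)}}\langle\mathbf{U},\mathbf{\Theta}\rangle$.

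To establish that identity I would start from the definition of the $(p,q)$-induced (operator) norm, $\|\mathbf{\Theta}\|_{(p,q)} = \max_{\|u\|_p=1}\|\mathbf{\Theta}u\|_q$, and expand the inner $q$-norm through its dual characterization, $\|\mathbf{\Theta}u\|_q = \max_{\|v\|_{q^\star}=1} v^T(\mathbf{\Theta}u)$, which is Hölder's inequality made tight (using $(q^\star)^\star=q$). Combining the two maximizations gives
\begin{equation*}
\|\mathbf{\Theta}\|_{(p,q)} \;=\; \max_{\|u\|_p=1,\;\|v\|_{q^\star}=1} v^T\mathbf{\Theta}u \;=\; \max_{\|u\|_p=1,\;\|v\|_{q^\star}=1} \langle uv^T,\mathbf{\Theta}\rangle ,
\end{equation*}
and the right-hand side is precisely $\max_{\mathbf{U}\in\mathcal{U}_{(p,q)}}\langle\mathbf{U},\mathbf{\Theta}\rangle$ by the definition of $\mathcal{U}_{(p,q)}$. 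The only nontrivial rewriting is the last equality $v^T\mathbf{\Theta}u = \langle uv^T,\mathbf{\Theta}\rangle$: since $\langle uv^T,\mathbf{\Theta}\rangle = \tr(vu^T\mathbf{\Theta}) = u^T\mathbf{\Theta}v$, it requires $u^T\mathbf{\Theta}v = v^T\mathbf{\Theta}u$, i.e. the symmetry of $\mathbf{\Theta}$, which holds throughout because every feasible $\mathbf{\Theta}\succ 0$ is symmetric.

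I expect the main (and essentially only) obstacle to be bookkeeping of the duality pairing rather than any analytic difficulty. One must match the asymmetric indexing of the uncertainty set — the factor $u$ is constrained in $\|\cdot\|_p$ while $v$ is constrained in the conjugate norm $\|\cdot\|_{q^\star}$ — to the correct convention for the induced norm (mapping the $p$-ball into $q$-space), and one must track the transpose so that the maximizer is the rank-one matrix $uv^T$ and not $vu^T$; this is exactly the point where symmetry of $\mathbf{\Theta}$ enters. Notably, no minimax or saddle-point theorem is needed: unlike a genuine robust-optimization exchange of $\min$ and $\max$, the identity here holds pointwise in $\mathbf{\Theta}$, so the outer minimization is carried along for free. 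This mirrors Theorem \ref{thm:robust.A}, where $\max_{\|\mathbf{U}\|_\star\le\lambda}\langle\mathbf{U},\mathbf{\Theta}\rangle=\lambda\|\mathbf{\Theta}\|$ is immediate from the definition of the dual norm; the present statement is the refinement in which, for induced norms, the dual description specializes to a maximization over the structured rank-one set $\mathcal{U}_{(p,q)}$, yielding the interpretable rank-one perturbation $\lambda\mathbf{U}$ of the sample covariance.
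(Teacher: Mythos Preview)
Your proposal is correct and follows exactly the paper's approach: expand the induced norm as $\|\mathbf{\Theta}\|_{(p,q)}=\max_{\|u\|_p=1}\max_{\|v\|_{q^\star}=1} v^T\mathbf{\Theta}u$ via the dual characterization of $\|\cdot\|_q$, then identify the double maximization with $\max_{\mathbf{U}\in\mathcal{U}_{(p,q)}}\langle\mathbf{U},\mathbf{\Theta}\rangle$. You are in fact more careful than the paper, which omits the observation that matching $v^T\mathbf{\Theta}u$ with $\langle uv^T,\mathbf{\Theta}\rangle=u^T\mathbf{\Theta}v$ uses the symmetry of $\mathbf{\Theta}$.
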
 }
\end{subtheorem}
Let us recall that for any matrix $\mathbf{A}$ and $p,q \in \mathbb{Z}_+ \cup \{ \infty \}$, the $(p,q)$-induced norm of $\mathbf{A}$ is defined as 
\begin{align*}
\Vert \mathbf{A} \Vert_{(p,q)} &:= \max_{u : \| u \|_p = 1} \quad \| \mathbf{A} u \|_q. 
\end{align*}
In particular, the operator norm or the largest singular value of $\mathbf{A}$ is equal to its $(2,2)$-induced norm.
\begin{proof}
Theorem \ref{thm:robust.A} follows directly from the definition of the dual norm
\begin{align*}
\| \mathbf{\Theta} \| &= \max_{ \mathbf{U}: \|  \mathbf{U} \|_\star \leqslant 1} \: \langle  \mathbf{U}, \mathbf{\Theta} \rangle.
\end{align*}
Theorem \ref{thm:robust.B} follows from the fact that the dual norm of the $\ell_q$-norm is the $\ell_{q^\star}$-norm, so that: 
\begin{align*}
\Vert \mathbf{A} \Vert_{(p,q)} &= \max_{u : \| u \|_p = 1} \quad \| \mathbf{A} u \|_q \quad = \max_{u : \| u \|_p = 1}  \quad \max_{v : \| v \|_{q^\star} = 1} v^T \mathbf{A} u.
\end{align*}
\end{proof}
{In the result above, the matrix $\mathbf{U}$ should be interpreted as the amount of noise on the covariance matrix $\overline{\mathbf{\Sigma}}$ one wishes to be protected against. Similar equivalence results have been proved in a wide range of other statistical settings \cite{bertsimas2017characterization}. From a Bayesian perspective, regularization can also be derived by imposing some prior distribution on the entries of $\mathbf{\Theta}$ and there is a one-to-one correspondence between the class of prior distributions, the corresponding uncertainty set in the robust perspective and the resulting penalty. }

{In addition to this robustness property, the $\ell_1$-norm is fortunately sparsity-inducing. Killing two birds with one stone, $\ell_1$-regularization has naturally received a lot of attention from the statistical community. Yet, it is fair to admit that the robustness interpretation of the $\ell_1$-norm has been neglected and that many variants of \eqref{eq:L1MLE} use the $\ell_1$-norm solely for sparsity, even though it makes little sense from a robust perspective. For instance, diagonal entries of $\mathbf{\Theta}$ should be nonzero - a consequence of Hadamard's inequality and the constraint $\mathbf{\Theta} \succ 0$. This motivates the fact that diagonal entries are excluded from the cardinality constraint in \eqref{eq:L0MLE}. Similarly, many derivatives of \eqref{eq:L1MLE} exclude diagonal entries from the $\ell_1$-penalty, which, from a robust point of view, is equivalent to considering that diagonal entries of $\overline{\mathbf{\Sigma}}$ are noiseless. To avoid such unrealistic assumptions, robustness and sparsity should, in our opinion, be considered as two distinct properties and be treated as such.}

\section{Integer Optimization Perspective}  \label{sec:IO}
{We first formulate Problem \eqref{eq:L0MLE} as binary optimization problem in Section \ref{sec:IO.general}, and prove that it is non-smooth in general. In practice, introducing big-$M$ constants is a simple way to linearize such mixed-integer bilinear problems. Yet, choosing the right big-$M$ values is hard, making these reformulations not always amenable for computation. We show in Section \ref{sec:IO.regularization} that big-$M$ formulations can be viewed as a special case of regularization. With regularization as a unifying perspective, we prove that a certain class of penalty functions leads to smooth convex integer optimization problems and propose a general cutting-plane algorithm to solve them in Section \ref{sec:IO.algo}. We believe our approach provides a novel perspective on the big-$M$ paradigm. In particular, we regard big-$M$ more as a smoothing technique than a simple modeling trick and reveal promising alternatives, such as ridge regularization.}
\subsection{Problem Formulation}  \label{sec:IO.general}
Let us introduce binary variables $\mathbf{Z}_{ij}$ to encode the support of the inverse covariance matrix $\mathbf{\Theta}$. The set of feasible supports is 
\begin{align*}
\mathcal{S}_p^k = \left\lbrace \mathbf{Z} \in \{0,1\}^{p \times p} : \forall i, {Z}_{ii} = 1 \: \mbox{ and } \: \forall i > j, {Z}_{ij} = {Z}_{ji}  \: \mbox{ and } \: \sum_{i, j>i} {Z}_{ij} \leqslant k \right\rbrace.
\end{align*} The first set of constraints {allows diagonal elements of $\mathbf{\Theta}$ to take nonzero values}. The second set of constraints follows from the fact that $\mathbf{\Theta}$ is symmetric. With these notations, we formulate the cardinality constrained Problem \eqref{eq:L0MLE} as the mixed-integer optimization problem
\begin{equation*}
\min_{\mathbf{Z} \in \mathcal{S}_p^k, \mathbf{\Theta} \succ \mathbf{0}} \quad  \langle \overline{\mathbf{\Sigma}}, \mathbf{\Theta } \rangle - \log \det \mathbf{\Theta} \quad \mbox{  s.t.  } {\Theta}_{ij} = 0 \mbox{  if  } {Z}_{ij} = 0 \: \forall (i,j),
\end{equation*} 
which can be considered as a binary-only optimization problem 
\begin{equation}
\label{eq:BinForm} 
\min_{\mathbf{Z} \in \mathcal{S}_p^k } \quad h(\mathbf{Z}),
\end{equation}
with the objective function 
\begin{equation}
\label{eq:h(z)}
h(\mathbf{Z}) := \min_{\mathbf{\Theta} \succ \mathbf{0}} \quad \langle \overline{\mathbf{\Sigma}}, \mathbf{\Theta } \rangle - \log \det \mathbf{\Theta} \quad \mbox{  s.t.  } {\Theta}_{ij} = 0 \mbox{  if  } {Z}_{ij} = 0 \: \forall (i,j).
\end{equation}
The inner-minimization problem defining $h(\mathbf{Z})$ is a so-called covariance selection problem \cite{dempster1972covariance}, which is a well-studied problem in the literature, and can be efficiently solved. In Section \ref{sec:covsel}, we discuss more details of how the problem can be solved using tailored first-order methods \cite{dahl2008covariance} or coordinate descent schemes \cite{scheinberg2009sinco,krishnamurthy201118}. Note that the problem is always feasible since the identity matrix satisfies all the constraints. Fortunately, as a function of $\mathbf{Z}$, $h(\mathbf{Z})$ is convex (see proof in Appendix \ref{sec:A.dual}). However, $h(\mathbf{Z})$ is piece-wise constant and exhibits strong discontinuities. In the following subsection, we explore techniques to reformulate or approximate $h(\mathbf{Z})$ in a smooth convex way, through the unifying lens of regularization. 

\subsection{Smoothing through regularization} \label{sec:IO.regularization}
In this section, we explore a regularized version of \eqref{eq:h(z)}, 
\begin{equation*}
\label{eq:hreg}
\tilde{h}(\mathbf{Z}) := \min_{\mathbf{\Theta} \succ \mathbf{0}} \quad \langle \overline{\mathbf{\Sigma}}, \mathbf{\Theta } \rangle - \log \det \mathbf{\Theta} + \Omega(\mathbf{\Theta}) \quad \mbox{  s.t.  }  {\Theta}_{ij} = 0 \mbox{  if  } {Z}_{ij} = 0 \: \forall (i,j),
\end{equation*}
where $\Omega$ is regularizer, that is, a convex function of $\mathbf{\Theta}$. In particular, we are interested in two special cases:
\paragraph{Big-$M$ regularization:} A traditional way to express the dependency between $\mathbf{Z}$ and $\mathbf{\Theta}$ in \eqref{eq:h(z)} is to use big-$M$ constraints
\begin{align*}
\tilde{h}(\mathbf{Z}) &:= \min_{\mathbf{\Theta} \succ \mathbf{0}} \quad \langle \overline{\mathbf{\Sigma}}, \mathbf{\Theta } \rangle - \log \det \mathbf{\Theta} \quad \mbox{  s.t.  } |{\Theta}_{ij} |\leqslant M_{ij} {Z}_{ij} \: \forall (i,j).
\end{align*}
$M_{ij} \in \mathbb{R}_+$ are constants chosen sufficiently large such that if $\mathbf{\Theta}^*$ is a minimizer for Problem \eqref{eq:L0MLE}, then $| \Theta_{ij}^* | \leqslant M_{ij} z_{ij}$. In this case, $\min_{\mathbf{Z}} \tilde{h}(\mathbf{Z}) = \min_{\mathbf{Z}}  h(\mathbf{Z})$, i.e., $h$ and $\tilde h$ have the same minimum with
\begin{align*}
\Omega(\mathbf{\Theta}) = \begin{cases} 0 & \mbox{ if  } |\Theta_{ij}| \leqslant M_{ij}, \\ +\infty & \mbox{ otherwise}. \end{cases}
\end{align*}
\paragraph{Ridge (or $\ell_2^2$) regularization:} One can choose \begin{align*}
\Omega(\mathbf{\Theta}) = \dfrac{1}{2\gamma} \| \mathbf{\Theta} \|_2^2 =\dfrac{1}{2\gamma} \sum_{i,j} \Theta_{ij}^2,
\end{align*}
for some positive constant $\gamma$. Whatever $\gamma > 0$, $\Omega(\mathbf{\Theta}) > 0$, so $\tilde{h}$ is not a reformulation but an upper-approximation of $h$. Ideally, one would like to minimize $\tilde{h}$ for $1/\gamma \rightarrow 0$. However, as previously seen, regularization induces desirable robustness properties, so having $1/\gamma >0$ may be beneficial from a statistical perspective. 

Under some weak assumptions on $\Omega$, which are satisfied in the special cases of big-$M$ and ridge regularization, one can reformulate $\tilde{h}(\mathbf{Z})$ using strong duality:
\begin{theorem} \label{thm:dualL0regularized}
For any $\mathbf{Z} \in \{0,1\}^{p \times p}$ such that ${Z}_{ii} = 1$ for all $i = 1,\dots,p$, 
\begin{align*}
\tilde{h}(\mathbf{Z}) &:= \min_{\mathbf{\Theta} \succ \mathbf{0}} \quad \langle \overline{\mathbf{\Sigma}}, \mathbf{\Theta } \rangle - \log \det \mathbf{\Theta} + \Omega(\mathbf{\Theta}) \quad \mbox{  s.t.  }  {\Theta}_{ij} = 0 \mbox{  if  } {Z}_{ij} = 0 \: \forall (i,j), \\
 &= \max_{\mathbf{R}: \overline{\mathbf{\Sigma}} + \mathbf{R} \succ \mathbf{0}} \:  p +\log \det (\overline{\mathbf{\Sigma}} + \mathbf{R}) - \langle \mathbf{Z}, \mathbf{\Omega}^\star (\mathbf{R}) \rangle, 
\end{align*}
where $\mathbf{\Omega}^\star$ is some generalization of the Fenchel conjugate for $\Omega$ \cite[see][chap. ~3.3]{boyd2004convex}.
\end{theorem}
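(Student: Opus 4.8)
The plan is to read the claimed identity as a strong-duality statement: I would treat $\tilde h(\mathbf{Z})$ as the minimization of a sum of two closed convex functions and apply Fenchel--Rockafellar duality. Concretely, I would split the objective as $\phi(\mathbf{\Theta}) + \psi(\mathbf{\Theta})$, where $\phi(\mathbf{\Theta}) = \langle \overline{\mathbf{\Sigma}}, \mathbf{\Theta} \rangle - \log \det \mathbf{\Theta}$ carries the log-det term (with implicit domain $\mathbf{\Theta} \succ \mathbf{0}$), and $\psi(\mathbf{\Theta}) = \Omega(\mathbf{\Theta}) + \iota(\mathbf{\Theta})$ absorbs both the regularizer and the support restriction through the indicator $\iota$ of the subspace $\{\mathbf{\Theta} : \Theta_{ij} = 0 \text{ whenever } Z_{ij} = 0\}$. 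The weak assumptions on $\Omega$ that I would make explicit at this point are that it is closed, convex, separable across entries ($\Omega(\mathbf{\Theta}) = \sum_{ij} \omega_{ij}(\Theta_{ij})$), and normalized so that $\omega_{ij}(0) = 0$; both the big-$M$ and the ridge penalty satisfy all of these.

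The two conjugates are the heart of the computation. For $\phi$, I would use the standard log-det calculation $\phi^\star(\mathbf{R}) = \sup_{\mathbf{\Theta} \succ \mathbf{0}} \langle \mathbf{R} - \overline{\mathbf{\Sigma}}, \mathbf{\Theta} \rangle + \log \det \mathbf{\Theta}$, whose stationarity condition $\mathbf{\Theta} = (\overline{\mathbf{\Sigma}} - \mathbf{R})^{-1}$ yields $\phi^\star(\mathbf{R}) = -p - \log \det (\overline{\mathbf{\Sigma}} - \mathbf{R})$ on the domain $\overline{\mathbf{\Sigma}} - \mathbf{R} \succ \mathbf{0}$. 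For $\psi$, separability makes the conjugate decouple entrywise: on the support ($Z_{ij} = 1$) each coordinate contributes $\omega_{ij}^\star(Y_{ij})$, while off the support ($Z_{ij} = 0$) the forced value $\Theta_{ij} = 0$ contributes $-\omega_{ij}(0) = 0$. Hence $\psi^\star(\mathbf{Y}) = \sum_{ij} Z_{ij}\, \omega_{ij}^\star(Y_{ij}) = \langle \mathbf{Z}, \mathbf{\Omega}^\star(\mathbf{Y}) \rangle$, where $\mathbf{\Omega}^\star(\mathbf{Y})$ is precisely the matrix-valued, entrywise Fenchel conjugate alluded to in the statement (equal to $M_{ij}|Y_{ij}|$ for big-$M$ and to $\tfrac{\gamma}{2} Y_{ij}^2$ for ridge). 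This is where both the role of $\mathbf{Z}$ as a selector of the active coordinates and the normalization $\omega_{ij}(0) = 0$ become essential.

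Assembling via Fenchel--Rockafellar, I would write $\tilde h(\mathbf{Z}) = \max_{\mathbf{R}} \big(-\phi^\star(\mathbf{R}) - \psi^\star(-\mathbf{R})\big) = \max_{\overline{\mathbf{\Sigma}} - \mathbf{R} \succ \mathbf{0}} \, p + \log \det(\overline{\mathbf{\Sigma}} - \mathbf{R}) - \langle \mathbf{Z}, \mathbf{\Omega}^\star(-\mathbf{R}) \rangle$, and the harmless substitution $\mathbf{R} \mapsto -\mathbf{R}$ puts the expression in the advertised form with $\overline{\mathbf{\Sigma}} + \mathbf{R} \succ \mathbf{0}$. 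The matrix $\mathbf{\Omega}^\star$ of the theorem is thus identified as the entrywise conjugate in the sense of the generalized conjugate referenced in \cite[chap.~3.3]{boyd2004convex}.

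The main obstacle is not the algebra but justifying that strong duality holds \emph{with the dual maximum attained}, i.e.\ verifying a constraint qualification. I would check a condition of the form $\mathrm{ri}(\dom \phi) \cap \mathrm{ri}(\dom \psi) \neq \emptyset$: the scaled identity $t\mathbf{I}$ is positive definite, lies in the support subspace (its only nonzeros are diagonal, and $Z_{ii} = 1$), and can be placed in the relative interior of $\dom \Omega$ --- trivially for ridge, and for big-$M$ by taking $0 < t < \min_i M_{ii}$, which also exposes the implicit requirement $M_{ii} > 0$. Establishing that this point meets the relative-interior condition, so that the supremum over $\mathbf{R}$ is a genuine maximum attained at the $\mathbf{R}^\star$ recovered from the primal optimizer $\mathbf{\Theta}^\star$ via $\overline{\mathbf{\Sigma}} + \mathbf{R}^\star = (\mathbf{\Theta}^\star)^{-1}$, is the step that requires the most care; everything else reduces to the two conjugate evaluations above.
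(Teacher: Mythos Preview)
Your proposal is correct and arrives at the same identity by essentially the same convex-duality mechanism, but the mechanics differ from the paper's. The paper introduces an auxiliary variable $\mathbf{\Phi}$ with the coupling constraint $\mathbf{\Theta}=\mathbf{Z}\circ\mathbf{\Phi}$, dualizes that constraint via a Lagrange multiplier $\mathbf{R}$, invokes Slater's condition at $\mathbf{\Theta}=\mathbf{\Phi}=\mathbf{I}_p$, and then minimizes separately over $\mathbf{\Theta}$ (yielding the $p+\log\det(\overline{\mathbf{\Sigma}}+\mathbf{R})$ term) and over $\mathbf{\Phi}$ (yielding $-\Omega^\star(\mathbf{Z}\circ\mathbf{R})$); only afterwards does it invoke a separate assumption (A3) to rewrite $\Omega^\star(\mathbf{Z}\circ\mathbf{R})$ as $\langle\mathbf{Z},\mathbf{\Omega}^\star(\mathbf{R})\rangle$. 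You instead apply Fenchel--Rockafellar directly to the splitting $\phi+\psi$ and check the relative-interior qualification at $t\mathbf{I}$. Your route is a bit more streamlined: by computing the conjugate of $\psi=\Omega+\iota$ entrywise, the linearity in $\mathbf{Z}$ falls out automatically from separability together with your normalization $\omega_{ij}(0)=0$, so what the paper isolates as a third structural assumption is for you a consequence of the first two. Conversely, the paper's Lagrangian derivation is more self-contained in that it does not require citing the Fenchel--Rockafellar theorem as a black box, and its use of (A2) in the form $\min_x\Omega_{ij}(x)=\Omega_{ij}(0)$ is marginally weaker than your normalization (the difference is an additive constant that (A3) then absorbs).
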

An explicit statement of the assumptions and proof of the theorem can be found in Appendix \ref{sec:A.dual}. Theorem \ref{thm:dualL0regularized} calls for a few observations:
\begin{enumerate}
\item $\tilde{h}(\mathbf{Z})$ is a point-wise maximum of linear, hence convex, functions of $\mathbf{Z}$. As a result, $\tilde{h}$ is a convex function.
\item With the dual reformulation, it is easy to see that $\tilde{h}(\mathbf{Z})$ remains bounded.
\item For the big-$M$ regularization, Theorem \ref{thm:dualL0regularized} reduces to 
\begin{align*}
\tilde{h}(\mathbf{Z}) &= \min_{\mathbf{\Theta}\succeq \mathbf{0}} \: \langle \overline{\mathbf{\Sigma}}, \mathbf{\Theta} \rangle - \log \det \mathbf{\Theta} \: \mbox{  s.t.  } |\Theta_{ij}| \leqslant M_{ij} Z_{ij}, \\
&= \max_{\mathbf{R}: \overline{\mathbf{\Sigma}} + \mathbf{R} \succ \mathbf{0}} \:  p +\log \det (\overline{\mathbf{\Sigma}} + \mathbf{R}) - \sum_{i,j} M_{ij} {Z}_{ij} | {R}_{ij} |. 
\end{align*}
\item For the $\ell_2^2$-regularization, Theorem \ref{thm:dualL0regularized} reduces to 
\begin{align*}
\tilde{h}(\mathbf{Z}) &= \min_{\mathbf{\Theta} \succ \mathbf{0}} \quad \langle \overline{\mathbf{\Sigma}}, \mathbf{\Theta } \rangle - \log \det \mathbf{\Theta} + \dfrac{1}{2\gamma} \| \mathbf{\Theta} \|_2^2 \quad \mbox{  s.t.  } {\Theta}_{ij} = 0 \mbox{  if  } {Z}_{ij} = 0, \\
&= \max_{\mathbf{R}: \overline{\mathbf{\Sigma}} + \mathbf{R} \succ \mathbf{0}} \:  p +\log \det (\overline{\mathbf{\Sigma}} + \mathbf{R}) - \dfrac{\gamma}{2} \sum_{i,j} {Z}_{ij} {R}^2_{ij} .
\end{align*}
\item Given a feasible support $\mathbf{Z}$, we denote by $\mathbf{R}^\star (\mathbf{Z})$ the associated dual variable, i.e., $\tilde{h}(\mathbf{Z}) = p +\log \det (\overline{\mathbf{\Sigma}}+ \mathbf{R}^\star (\mathbf{Z})) - \langle \mathbf{Z} , \mathbf{\Omega}^\star (\mathbf{R}^\star (\mathbf{Z})) \rangle$. Then for any feasible $\mathbf{Z}'$, we have
\begin{equation}
\label{eq:subgrad}
\tilde{h}(\mathbf{Z}') \geqslant \tilde{h}(\mathbf{Z}) + \langle \mathbf{Z}' - \mathbf{Z} , \mathbf{\Omega}^\star (\mathbf{R}^\star (\mathbf{Z})) \rangle.
\end{equation}
The inequality above provides a linear lower-approximation of $\tilde{h}$ which coincides with $\tilde{h}$ at $\mathbf{Z}$. In particular, it proves that $-\mathbf{\Omega}^\star (\mathbf{R}^\star (\mathbf{Z}))$ is a subgradient of $\tilde{h}$ at $\mathbf{Z}$. This observation plays a central role in devising a numerical strategy to solve \eqref{eq:BinForm}.
\end{enumerate}

\subsection{Cutting-plane algorithm} \label{sec:IO.algo}
Instead of solving the non-smooth integer optimization Problem \eqref{eq:BinForm}, we consider its regularized proxy
\begin{equation}
\label{eq:BinForm2} 
\min_{\mathbf{Z} \in \mathcal{S}_p^k } \quad \tilde{h}(\mathbf{Z}),
\end{equation}
with
\begin{eqnarray}
\tilde{h}(\mathbf{Z})   &=& \min_{\mathbf{\Theta} \succ \mathbf{0}} \quad \langle \overline{\mathbf{\Sigma}}, \mathbf{\Theta } \rangle - \log \det \mathbf{\Theta} + \Omega(\mathbf{\Theta}) \quad \mbox{  s.t.  }  {\Theta}_{ij} = 0 \mbox{  if  } {Z}_{ij} = 0 \: \forall (i,j), \label{eq:separationProblem}  \\
 & = & \max_{\mathbf{R}: \overline{\mathbf{\Sigma}} + \mathbf{R} \succ \mathbf{0}} \:  p +\log \det (\overline{\mathbf{\Sigma}} + \mathbf{R}) - \langle \mathbf{Z}, \mathbf{\Omega}^\star (\mathbf{R}) \rangle, \nonumber
\end{eqnarray}
as studied in the previous section. Our numerical approach substitutes $\tilde{h}$ in \eqref{eq:BinForm2} by a piece-wise linear lower-approximation and iteratively refines this approximation. This process is equivalent to constraint generation: Applying the inequality \eqref{eq:subgrad} at all feasible supports, $\tilde{h}$ can indeed be seen as a piece-wise linear convex function with an exponential number of pieces:
\begin{align*}
\tilde{h}(\mathbf{Z}') = \max \left\lbrace \tilde{h}(\mathbf{Z}) + \langle \mathbf{Z}' - \mathbf{Z} , \mathbf{\Omega}^\star (\mathbf{R}^\star (\mathbf{Z})) \rangle \: : \: \mathbf{Z} \in \mathcal{S}_p^k \right\rbrace, \quad \forall \mathbf{Z}' \in \mathcal{S}_p^k,
\end{align*}
and the algorithm iteratively includes new pieces. The method is referred to in the literature as outer-approximation \cite{duran1986outer} or generalized Benders decomposition (GBD) and described in pseudo-code in Algorithm \ref{CPAlgo}. 

\begin{algorithm*}
\caption{Cutting-plane algorithm}
\label{CPAlgo}
\begin{algorithmic}
\REQUIRE Initial point $\mathbf{Z}^{(1)} \in \mathcal{S}_p^k$, sample covariance matrix $\overline{\mathbf{\Sigma}}$, sparsity parameter $k$, and tolerance $\epsilon$.
\STATE $t \leftarrow 1 $
\REPEAT
\STATE Compute $\mathbf{Z}_{t+1},\eta_{t+1}$ solution of 
\begin{equation}
\label{eq:master}
\min_{\mathbf{Z} \in S_k^p ,\eta} \: \eta \quad \mbox{  s.t.  } \eta \geqslant \tilde{h}(\mathbf{Z}_i) + \langle \mathbf{Z}-\mathbf{Z}_i , \mathbf{\Omega}^\star(\mathbf{R}^\star(\mathbf{Z}_i)) \rangle,~ \forall  i =1,\dots , t.
\end{equation}
\STATE Compute $\mathbf{R}^\star(\mathbf{Z}_{t+1}), \tilde{h}(\mathbf{Z}_{t+1})$ by solving \eqref{eq:separationProblem}.
\STATE $t \leftarrow t+1 $
\UNTIL{$\eta_t < \tilde{h}(\mathbf{Z}_t) - \varepsilon$}
\RETURN $\mathbf{Z}_t$ 
\end{algorithmic}
\end{algorithm*}

We summarize some important observations, properties, and connections to the literature for the above algorithm.
\begin{enumerate} 
\item Generalized Benders decomposition is a method that can be used to solve convex mixed-integer optimization problems. In this context, Problem \eqref{eq:master} is often referred to as the master problem, and Problem \eqref{eq:separationProblem} is referred to as the (separation) subproblem. The GBD algorithm converges in this context in a finite number of steps because subproblems \eqref{eq:separationProblem} are convex and satisfy Slater's condition, and the set $\mathcal{S}_p^k$ is finite (see Theorem 2.4 in \cite{geoffrion1972generalized}). Thus, the above algorithm converges to an optimal solution for the cardinality constrained Problem \eqref{eq:BinForm2} in a finite number of steps.
\item Note that at each iteration the algorithm supplies a feasible solution $\mathbf{Z}_t$, an upper bound $\tilde{h}(\mathbf{Z}_t)$, and a lower bound $\eta_t$ on the optimal solution. Current heuristic approaches do not offer such a certificate of suboptimality. 
\item Algorithm \ref{CPAlgo} requires to solve a large mixed-integer linear optimization problem each time a new constraint is added. Thus, a branch and bound tree is built at each iteration of the algorithm. Lazy constraint callbacks provide an alternative to building a new branch and bound tree at each iteration of the algorithm. When a constraint is added, instead of resolving the problem, the constraint is added to all active nodes in the current branch-and-bound tree. This enables the same tree to be used for all iterations. This saves the rework of building a new tree every time a mixed-integer feasible solution is found. Lazy constraint callbacks are a relatively new type of callback. CPLEX 12.3 introduced lazy constraint callbacks in 2010 and Gurobi 5.0 introduced lazy constraint callbacks in 2012. To date, the only mixed-integer solvers which provide lazy callback functionality are CPLEX \cite{ilog2012cplex}, Gurobi \cite{gurobi2015gurobi}, and GLPK (see \url{http://gnu.org/software/glpk/}).  
{\item The algorithm can greatly benefit from the choice of a good initial solution $\mathbf{Z}^{(1)}$. In practice, we initialize the algorithm with the support returned by Glasso or Meinshausen and B{\"u}hlmann's  \cite{meinshausen2006high}  local neighborhood selection method.} 
\end{enumerate} 

\subsection{Implementation considerations and cross-validation} \label{sec:IO.cv}
In this section, we describe the grid-search procedure to tune the value of the sparsity level, $k$, and the regularization parameter, $M$ or $\gamma$. 

Two alternatives have been considered in the literature for parameter tuning. The first approach is cross-validation: Before any computation, the data is divided into a training and a validation set, typically with a ratio of $2:1$. Inverse covariance matrices are computed using the training data only and evaluated out-of-sample on the validation data. We pick the parameter values that lead to the best out-of-sample performance in terms of negative log-likelihood. Though simple, cross-validation does not generally have consistency properties for model selection \cite{shao1993linear}. Its``leave-one-out'' or ``multi-fold'' variants are computationally more expensive for they repeat this process on multiple training / validation splits. The second approach consists in using an in-sample information criterion, such as the extended information criterion from \cite{foygel2010extended}
\begin{align*}
BIC_{1/2}(\mathbf{\Theta}) = n \left[ \langle \overline{\mathbf{\Sigma}},\mathbf{\Theta}\rangle - \log \det \mathbf{\Theta} \right] + \| \mathbf{\Theta} \|_0 \log n + 2 \| \mathbf{\Theta} \|_0 \log p,
\end{align*}
which balances goodness of fit and complexity of the model. This criterion is satisfying for it can be computed in-sample and is asymptotically consistent. Consistency results, however, only hold asymptotically and under some assumptions on the data. We will compare those two approaches numerically in Section \ref{sec:exp}.
 
We test different values of $k$ in a grid search manner. Let us remark that the sparsity $k$ only impacts the feasible set of Problem \eqref{eq:BinForm2} and that all linear lower approximations of $\tilde h$ generated from solving a particular instance of Problem \eqref{eq:BinForm2} are valid for any value of $k$. Practically speaking, we solve a series of problems \eqref{eq:BinForm2} for decreasing values of $k$, where each new problem is constructed from the previous one by adding a tighter cardinality constraint. In such a way, each new problem benefits from the cuts generated for previous problems.  

Regarding the regularization parameter, we inspect values which are uniformly log-distributed, starting from $M_0 = p / \| \overline{\mathbf{\Sigma}} \|_1$ for the big-$M$ regularization and $\gamma_0 = 4 p /  \| \overline{\mathbf{\Sigma}} \|_2^2$ for the ridge regularization. Those values follow from bounds on the norm of $\mathbf{\Theta}^\star$, the optimal solution of Problem \eqref{eq:BinForm2}, which we prove in Appendix \ref{sec:A.dual.bounds}. For the big-$M$ formulation, we describe an optimization-based approach to find valid $M$ values from any feasible solution in Appendix \ref{sec:A.bigM}. 

\section{Covariance selection problem} \label{sec:covsel}
In this section, we investigate numerical strategies to efficiently solve separation subproblems of the form \eqref{eq:separationProblem}. We provided both primal and dual formulations for the separation Problem \eqref{eq:separationProblem}. In Section \ref{sec:covsel.overview}, we discuss the main advantages of solving the primal vs. the dual formulation. In Section  \ref{sec:covsel.chordal} and \ref{sec:covsel.cd} we describe two families of numerical algorithms. In Section \ref{sec:covsel.exp}, we compare empirically those algorithms. 

\subsection{Comparisons between primal and dual approaches} \label{sec:covsel.overview}
The overall cutting-plane algorithm \ref{CPAlgo} requires at each iteration not only the optimal value $h(\mathbf{Z})$ but also the associated dual variables $\mathbf{R}^\star(\mathbf{Z})$, which are eventually needed to obtain the subgradients $-\mathbf{\Omega}^\star(\mathbf{R}^\star(\mathbf{Z}))$. For that matter, solving the dual formulation in \eqref{eq:separationProblem} appears attractive. 

In the end, the variables of interest are the primal ones, i.e., the sparse precision matrix. Optimal primal and dual variables satify the KKT conditions $\overline{\mathbf{\Sigma}} + \mathbf{R}^\star - (\mathbf{\Theta}^\star)^{-1} = \mathbf{0}$ (see proof of Theorem \ref{thm:dualL0regularized} in Appendix \ref{sec:A.dual.proof}). So, primal variables can be reconstructed from the dual variables at the cost of a $p \times p$ matrix inversion. Due to numerical errors however, inverting $\mathbf{R}^\star(\mathbf{Z})$ might not lead to a sparse matrix. To that extent, it might be favorable to solve the primal formulation in \eqref{eq:separationProblem}, and obtain dual variables by inverting $\mathbf{\Theta}^\star(\mathbf{Z})$. This computation might be  computationally expensive  $(O(p^3))$ , but $\mathbf{\Theta}^\star$ is sparse, it involves at most $p+ 2 k$ nonzero coefficients, a pattern which numerical algorithms could exploit.

All in all, the primal and dual formulations seem equally attractive. Moreover, both objective functions involve the log-determinant. As a result, any gradient-based method will require updating the decision variable, as well as its inverse. Matrix inversion is thus the computational bottleneck for both primal and dual methods. Based on these observations, we identified two streams of relevant numerical strategies:
\begin{enumerate}
\item The first stream of algorithms implements standard first- or second-order methods to solve the primal problem, leveraging the structure of the sparsity pattern defined by $\mathbf{Z}$ to efficiently compute and update the inverse of $\mathbf{\Theta}$ \cite{dahl2008covariance}.
\item The second stream consists in coordinate descent methods for either the primal \cite{scheinberg2009sinco} or the dual formulation \cite{krishnamurthy201118}, where each iteration leads to low-rank update of the matrix and its inverse.
\end{enumerate}

\subsection{Gradient-based methods for the primal formulation}\label{sec:covsel.chordal}
\cite{dahl2008covariance} proposed an efficient gradient-based algorithm for solving the unregularized covariance selection Problem \eqref{eq:h(z)}. The gradient of the objective function is 
\begin{align*}
\overline{\mathbf{\Sigma}} - \mathbf{\Theta}^{-1}.
\end{align*}
However, thanks to the constraints that ${\Theta}_{ij}=0$ if ${Z}_{ij}=0$, only the $p + 2 k$ coordinates ${\Theta}_{ij}$ with $(i,j)$ such that $ Z_{ij} = 1$ are to be updated. In this context, \cite{dahl2008covariance} showed how a particular kind of sparsity patterns - patterns whose clique graph is chordal \cite[see][Section 3 for a definition]{dahl2008covariance} - could enable smart block structure decomposition of both $\mathbf{\Theta}$ and its inverse and fast computations of $\Theta_{ij}$ and $\Theta^{-1}_{ij}$ for the coordinates $(i,j)$ of interest. They also generalize their approach to sparsity patterns which are not chordal, through the use of so-called chordal embeddings. For large and sparse matrices, \cite{dahl2008covariance} report speedups in runtime of two to three orders of magnitude for computing the inverse, and hence the gradient of the objective function. In a similar fashion, their method can accelerate Hessian updates as well. They publicly released \verb|CHOMPACK|, a library which implements sparse matrix computations leveraging chordal sparsity patterns \cite{vandenberghe2015chordal}.

Lastly, \cite{dahl2008covariance} report that a limited-memory Broyden-Fletcher-Goldfarb-Reeves (BFGS) method significantly outperforms other first order methods, such as conjugate gradient, for the covariance selection Problem \eqref{eq:h(z)}. Surprisingly, the authors mention but do not numerically compare with coordinate descent methods, which will be the topic of the next section.

In the case of the regularized covariance selection Problem \eqref{eq:separationProblem}, their approach can easily be adapted:
\begin{itemize}
\item For big-$M$ regularization, one simply needs to project the iterates to ensure the constraints $| \Theta_{ij} | \leqslant M_{ij}$ are satisfied throughout the algorithm. 
\item Ridge regularization adds a $\tfrac{1}{\gamma}\mathbf{\Theta}$ term to the gradient, which raises no additional computational difficulty.
\end{itemize} 

\subsection{Coordinate descent methods}\label{sec:covsel.cd}
Coordinate descent methods are one of the most widely used and highly scalable methods in statistical learning problems. Indeed, as previously mentioned, the most successful methods for $\ell_1$-regularized inverse covariance estimation \eqref{eq:L1MLE} all involve a block coordinate descent strategy for the dual formulation and differ only in the algorithm used to solve the subproblem associated with each block. The caveat in coordinate descent methods often resides in an efficient update step, combined with a good rule for picking the coordinate to update. As noted by many authors in similar contexts \cite{dahl2008covariance,scheinberg2009sinco,krishnamurthy201118}, the update step can be computed in closed-form in our case, which makes coordinate descent methods very attractive.  

For clarity, we illustrate the main ingredients of these methods on the primal formulation with $\ell_2^2$-regularization only, but the same ideas can be applied to the dual formulation and to big-$M$ regularization as well. For a given feasible support $\mathbf{Z}$, we solve
\begin{align*}
\min_{\mathbf{\Theta} \succ \mathbf{0}} \quad \langle \overline{\mathbf{\Sigma}}, \mathbf{\Theta } \rangle - \log \det \mathbf{\Theta} + \dfrac{1}{2\gamma} \| \mathbf{\Theta} \|_2^2 \quad \mbox{  s.t.  } {\Theta}_{ij} = 0 \mbox{  if  } {Z}_{ij} = 0.
\end{align*}

\subsubsection{Coefficient updates}
Given $\mathbf{\Theta} \succ 0$, we first consider the update of the $(i,j)$th coefficient with $i \neq j$, that is, $\Theta_{ij} \leftarrow {\Theta}_{ij} + t$ for some $t \in \mathbb{R}$. In matrix form, this can be written as $\mathbf{\Theta} \leftarrow \mathbf{\Theta} + t (e_i e_j^T + e_j e_i^T)$. Denoting $\mathbf{W}:= \mathbf{\Theta}^{-1}$ the inverse of $\mathbf{\Theta}$, we have 
\begin{align*}
\log \det (\mathbf{\Theta} + t e_i e_j^T + t e_j e_i^T) = \log \det \mathbf{\Theta} + \log \left( 1 + 2 W_{ij} t + ( W_{ij}^2 - W_{ii} W_{jj} ) t^2 \right), 
\end{align*}
so that the best update is obtained by minimizing
\begin{align*}
2\overline \Sigma_{ij} t - \log \left( 1 + 2 W_{ij} t + ( W_{ij}^2 - W_{ii} W_{jj} ) t^2 \right) + \tfrac{1}{\gamma} (\Theta_{ij} + t)^2.
\end{align*}
Setting the derivative to zero, we find the best update $t^\star$ as the unique solution of the equation
\begin{align*}
2\overline \Sigma_{ij} - \dfrac{ 2 W_{ij} + 2 ( W_{ij}^2 - W_{ii} W_{jj} ) t }{1 + 2 W_{ij} t + ( W_{ij}^2 - W_{ii} W_{jj} ) t^2} + \tfrac{2}{\gamma} (\Theta_{ij} + t) = 0,
\end{align*}
which satisfies $1 + 2 W_{ij} t + ( W_{ij}^2 - W_{ii} W_{jj} ) t^2  > 0$. The above equation can be reduced into a cubic equation in $t$.

Regarding diagonal coefficients, the best update for the $(i,i)$th coefficient, $\Theta_{ii} \leftarrow {\Theta}_{ii} + 2 t$, can similarly be found by minimizing 
\begin{align*}
2 \overline \Sigma_{ii} t - \log \left( 1 + 2 W_{ii} t \right) + \tfrac{1}{2 \gamma} (\Theta_{ii} + 2 t)^2,
\end{align*}
over $t$ such that $1 + 2 W_{ii} t > 0$, which boils down to solving a quadratic equation.

In both cases, the value $t^\star$ for the best update $\mathbf{\Theta} \leftarrow \mathbf{\Theta} + t^\star  (e_i e_j^T + e_j e_i^T )$ can fortunately be computed in closed-form, i.e., constant time. After updating $\mathbf{\Theta}$, $\mathbf{W}$ can be update in $O(p^2)$ steps only, using Woodbury-Sherman–Morrison formula. 

{Observe that using these one-coordinate updates, the matrix  $\mathbf{\Theta}$ remains positive definite throughout the algorithm. Indeed, using Shur complements \cite{zhang2006schur}, $\mathbf{\Theta} + t^\star  (e_i e_j^T + e_j e_i^T ) \succ 0$ if $\mathbf{\Theta} \succ 0$ and $1 + 2 W_{ij} t^\star + ( W_{ij}^2 - W_{ii} W_{jj} ) > 0$. If the algorithm is properly initialized by a positive definite matrix, positive definiteness of the subsequent iterates then follows by induction.}

\subsubsection{Update rule and computational complexity:} 
In the case of Glasso, \cite{scheinberg2009sinco} successfully suggested a greedy rule: at each iteration, the algorithm scans through all the coefficients of $\mathbf{\Theta}$ and compute the objective decrease resulting from their update. Then, only the coefficient leading to the largest improvement is updated, as described in Algorithm \ref{CDPrimal}. All together, one iteration of the algorithm updates one coefficient and requires $O(p^2)$ operations, with the update of $\mathbf{W}$ as the computational bottleneck. Note that this strategy is particularly efficient on the primal formulation, since there are only  $p + 2k$ potentially nonzero coefficients, compared with $p \times (p+1) / 2$ in the dual. 
\begin{algorithm*}
\caption{Greedy coordinate descent algorithm}
\label{CDPrimal}
\begin{algorithmic}
\REQUIRE Support $\mathbf{Z} \in \mathcal{S}_p^k$, sample covariance matrix $\overline{\mathbf{\Sigma}}$, regularization parameter $\gamma$.
\REPEAT
\STATE For all $(i,j)$ such that $Z_{ij}=1$, compute the objective decrease resulting from the update of the $(i,j)$th coefficient.
\STATE Update $\mathbf{\Theta} \leftarrow \mathbf{\Theta} + t^\star e_i e_j^T + t^\star e_j e_i^T$  for $(i,j)$ which leads to the biggest improvement.
\STATE Update $\mathbf{W}$ accordingly
\UNTIL Stopping criterion
\RETURN $\mathbf{\Theta}$ 
\end{algorithmic}
\end{algorithm*}

Since updating the inverse of $\mathbf{\Theta}$ remains the challenging part, \cite{krishnamurthy201118} suggested a block coordinate approach for solving the dual formulation of the Lasso estimator \eqref{eq:L1MLE}. We can adapt their approach to our regularized covariance selection problem, both in primal and dual formulation.
From a high level perspective, at each iteration, a whole row is updated instead of a single coefficient. The computational cost remains $O(p^2)$ steps per iteration, but one might expect fewer iterations in total. We refer to \cite{krishnamurthy201118} for a detailed presentation of the updates and the overall algorithm.  

We terminate the algorithm as soon as the duality gap or the objective decrease is sufficiently small.

\subsection{Empirical performance and comparisons}\label{sec:covsel.exp}
In this section, we compare the computational time required to solve the covariance selection problem by each method and {see} how they scale with the problem size $p$ and the sparsity $k$. We also investigated how the conditioning of the problem, through the number of samples $n$ used to compute the empirical covariance matrix $\overline{\mathbf{\Sigma}}$ and the regularization parameter $M$ or $\gamma$, impacted computational time. However, we observed little effect and decided not to report those experiments. 

\subsubsection{Instance generation}
As in \cite{yuan2007model,friedman2008sparse}, we consider a full precision matrix $\mathbf{\Theta}_0$ with $\Theta_{ii}=2$ and $\Theta_{ij}=1$ for $i \neq j$, in short $\mathbf{\Theta}_0 = \textbf{I}_p + e e^T$. We then generate $n$ random samples from the normal distribution $\mathcal{N}(0, \mathbf{\Theta}_0^{-1})$ and compute the empirical covariance matrix $\overline{\mathbf{\Sigma}}$. We randomly sample a feasible support $\mathbf{Z}$ from $\mathcal{S}_p^k$ and solve Problem \eqref{eq:separationProblem}.

The degrees of freedom in our simulations are the dimension $p$ and the sparsity level $t$. Based on those quantities, $k$ and $n$ are fixed to
\begin{align*}
k & = \left \lfloor t \: \dfrac{p (p-1)}{2} \right \rfloor, \\
n &= p.
\end{align*} 

\subsubsection{Methods implementation}
For both the big-$M$ and the $\ell_2^2$ regularization problem, we implement and compare five methods:
\begin{itemize}
\item a BFGS method on the primal formulation (\verb|BFGS_primal|), using the library \verb|CHOMPACK| for sparse matrix computations \cite{vandenberghe2015chordal},
\item four (block) coordinate descent strategies, denoted \verb|CD_primal|, \verb|CD_dual|, \\
\verb|BCD_primal|, and \verb|CD_dual|.
\end{itemize}
All code is written in \verb|Julia 0.6.0| \cite{lubin2015computing}, with the exception of the BFGS algorithm, which is implemented in \verb|Python 3.5.3| and integrated into the main \verb|Julia| script using the \verb|PyCall| package. We terminate the algorithms when the duality gap falls below $10^{-4}$ or the objective improvement after one iteration is less than $10^{-12}$.
\subsubsection{Empirical results}
Figures \ref{fig:covsel.bigm} and  \ref{fig:covsel.ridge} report computational time as a $p$ and $t$ increase for the big-$M$ and ridge regularization respectively. From these experiments, we can make the following observations:
\begin{enumerate}
\item  For (block) coordinate descent methods, solving the primal formulation is more effective than solving the dual problem.  
\item Coordinate descent methods compete with block coordinate descent schemes when the sparsity level $t$ is very low (less than $1\%$) but do not scale as well as $t$ increases. 
\item As a result, \verb|BCD_primal| is often the best method for solving Problem \eqref{eq:separationProblem}.
\item The \verb|BFGS_primal| algorithm generally takes $50-100$ times longer than \verb|BCD_primal|. For $p > 1000$, the algorithm did not terminate after a $12$-hour time limit.
\end{enumerate}

\begin{figure}
\centering
\begin{subfigure}[t]{.45\linewidth}
	\centering
	\includegraphics[width=\linewidth]{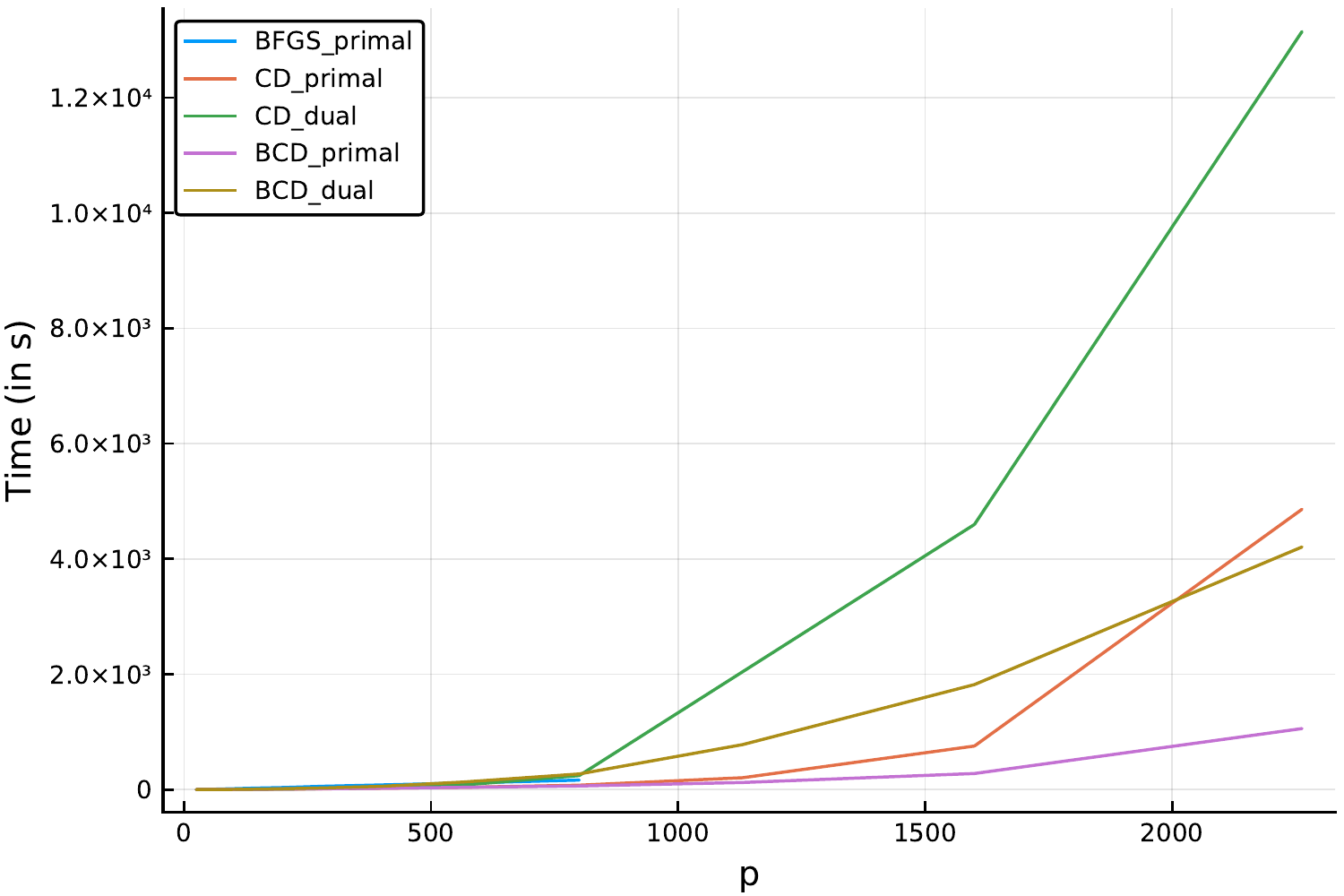}
	\caption{$p$, with $t=1\%$.}
\end{subfigure} %
~
\begin{subfigure}[t]{.45\linewidth}
	\centering
	\includegraphics[width=\linewidth]{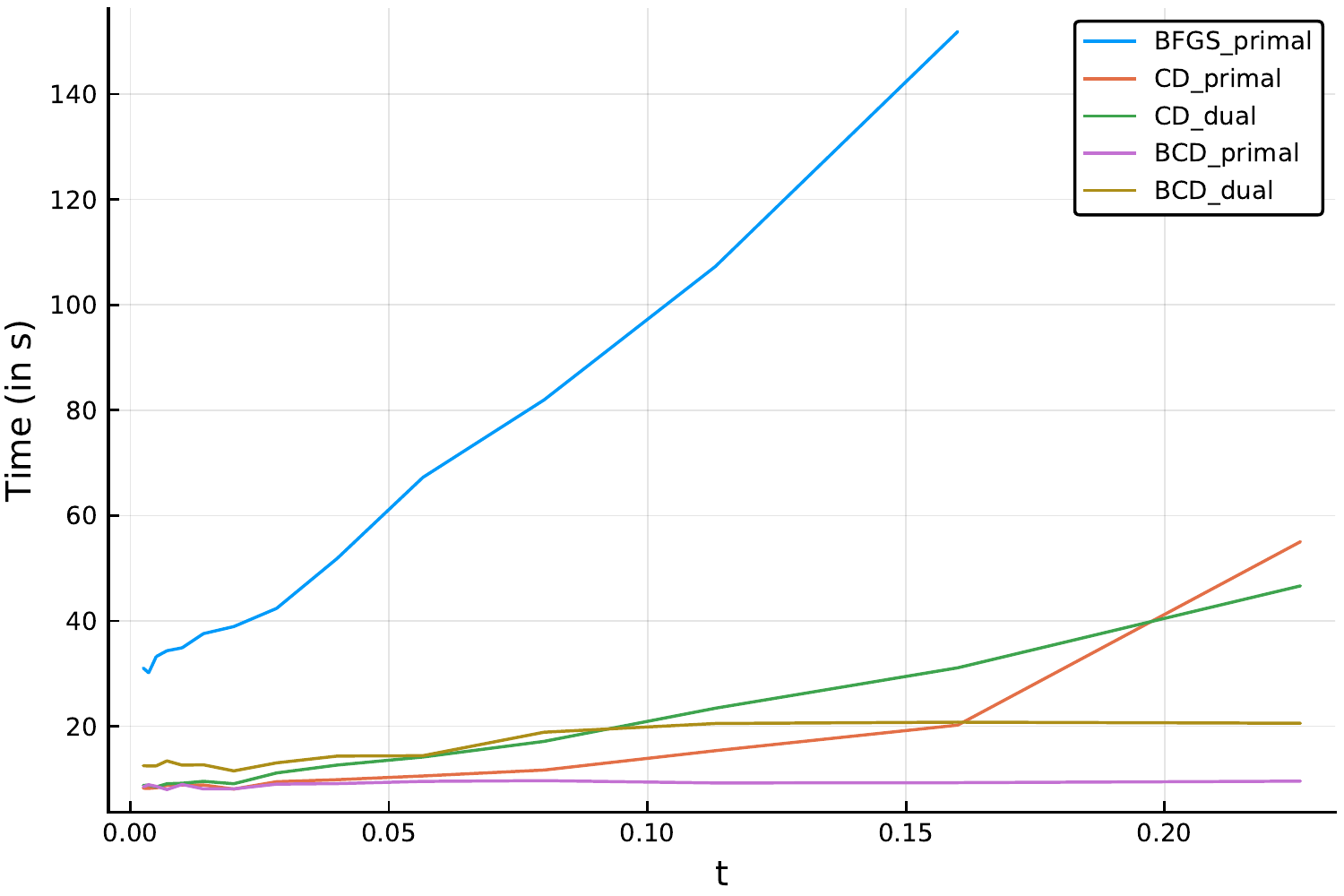}
	\caption{$t$, with $p=200$.}
\end{subfigure}
\caption{Impact of dimension size $p$ and sparsity level $t$ on computational time, for the big-$M$ regularization with $M =M_0 = p / \|\overline{\mathbf{\Sigma}}\|_1$. }
\label{fig:covsel.bigm}
\end{figure}
\begin{figure}
\centering
\begin{subfigure}[t]{.45\linewidth}
	\centering
	\includegraphics[width=\linewidth]{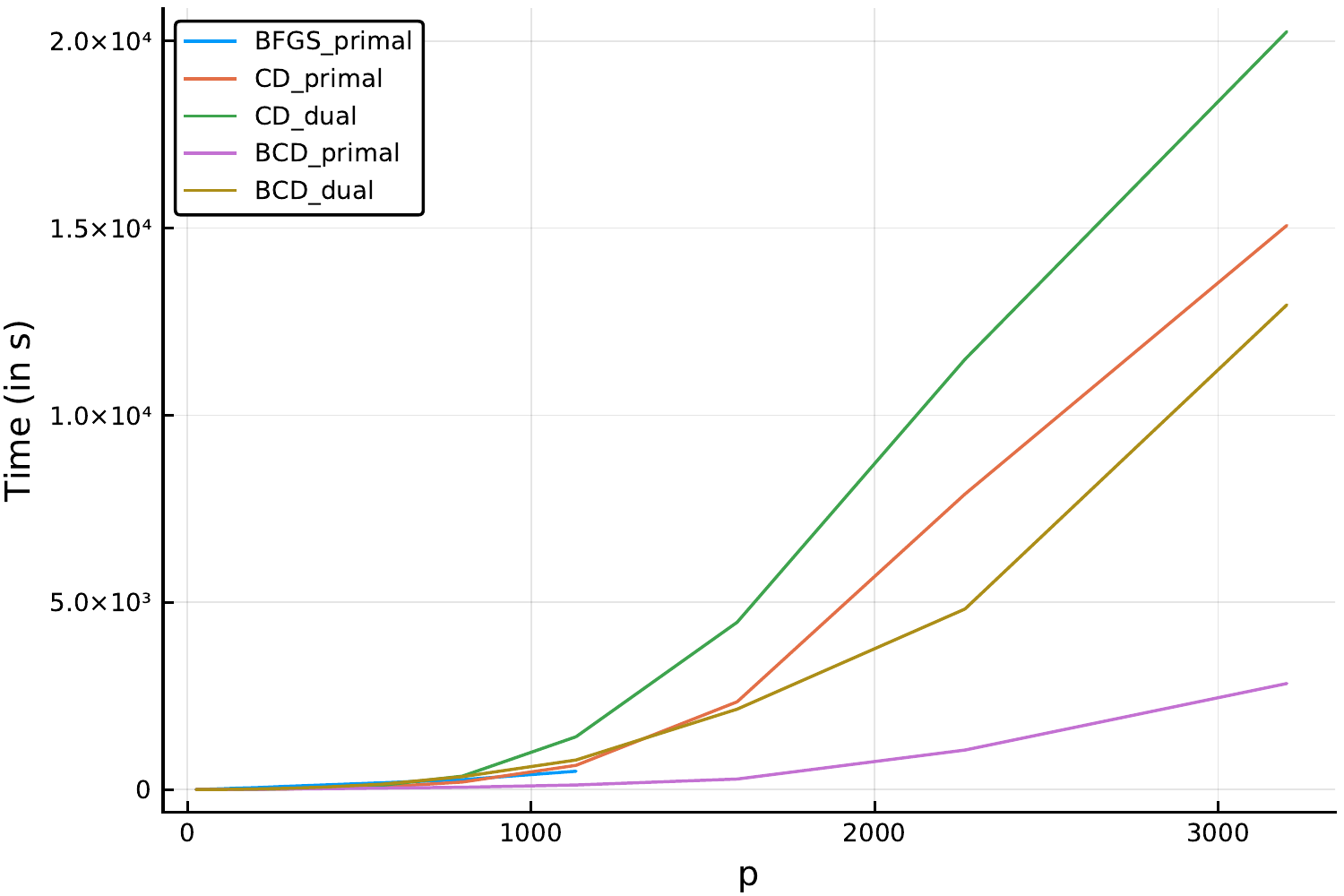}
	\caption{$p$, with $t=1\%$.}
\end{subfigure} %
~
\begin{subfigure}[t]{.45\linewidth}
	\centering
	\includegraphics[width=\linewidth]{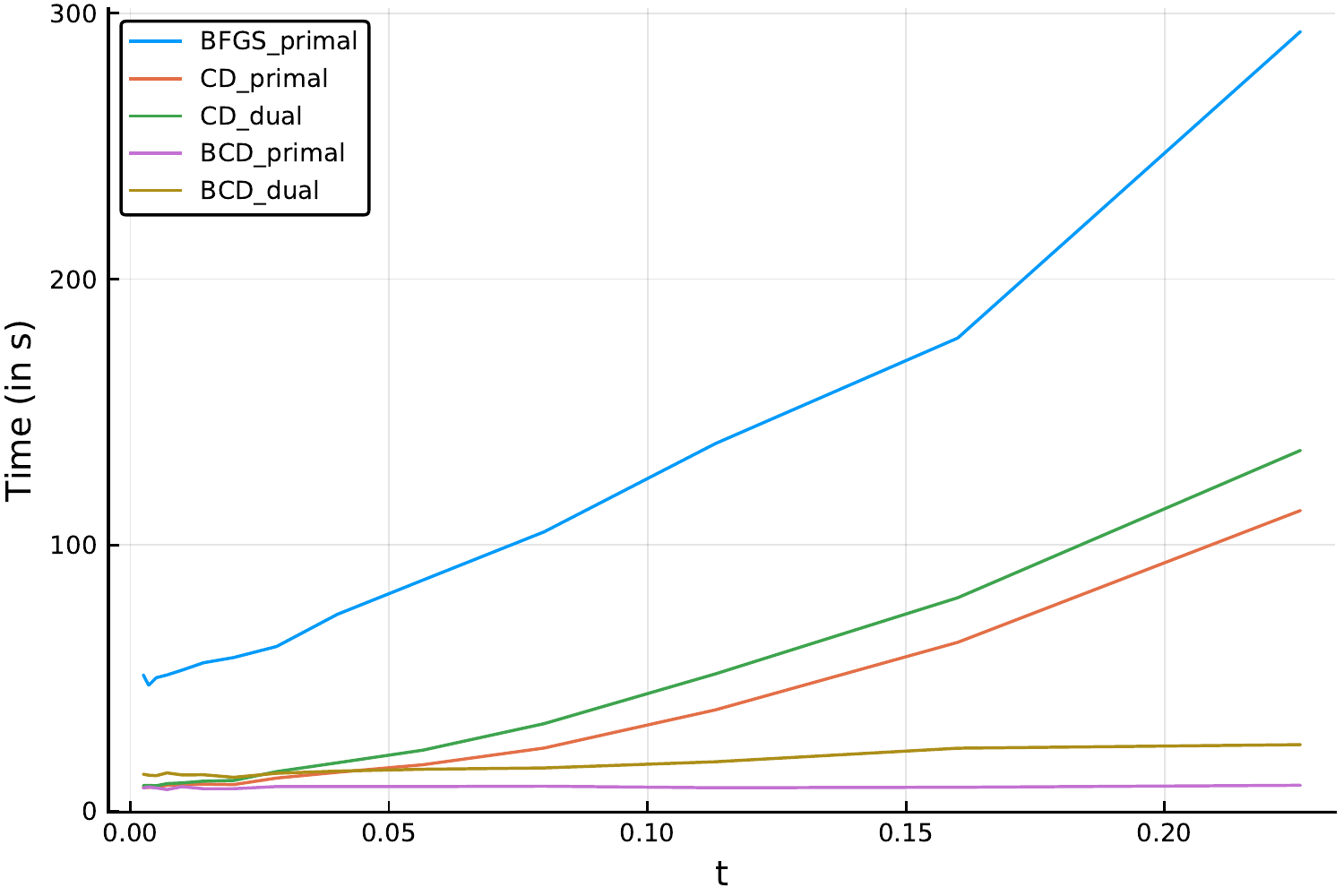}
	\caption{$t$, with $p=200$.}
\end{subfigure}
\caption{Impact of dimension size $p$ and sparsity level $t$ on computational time, for the ridge regularization with $\gamma = \gamma_0 = 4 p / \| \overline{\mathbf{\Sigma}} \|_2^2$. }
\label{fig:covsel.ridge}
\end{figure}

\section{Computational Results} \label{sec:exp}
In this section, we present numerical results on both synthetic (Section \ref{sec:exp.synth}) and real data (Section \ref{sec:exp.cancer}). 

\subsection{Synthetic experiments} \label{sec:exp.synth}
We follow the methodology described in \cite{banerjee2008model}. We sample precision matrices of the form $\mathbf{\Theta}_0 = \delta \textbf{I}_p + 0.5 \mathbf{Z}_0$, where $\mathbf{Z}_0 \in \mathcal{S}^p_{k_{true}}$ and $\delta$ is chosen so that the condition number is equal to $p$. We then randomly sample $n$ vectors from a multivariate normal distribution $\mathcal{N}(0, \mathbf{\Theta}_0^{-1})$, compute the empirical covariance matrix $\overline{\mathbf{\Sigma}}$ and standardize it. To evaluate the output of the algorithms out-of-sample, we generate similarly $n /2$ (resp. $5 n$) data points for the validation (resp. test) set. 

In this setting, we can assess the feature selection ability of a method in terms of accuracy $A$, i.e., the fraction of the $k_{true}$ nonzero upper-diagonal coefficients of $\mathbf{\Theta}_0$ correctly recovered, and false detection rate $FDR$, defined as the proportion of coefficients in the support of the solution which are not in the support of $\mathbf{\Theta}_0$. We also compute the negative log-likelihood ($-LL$) of the returned precision matrix on the test set.

All discrete optimization problems are terminated once the tolerance gap falls below $10^{-4}$, where the tolerance gap is the percentage difference between the final lower and upper bounds, or after a $5$-minute time limit.

\subsubsection{Impact of regularization and sparsity $k$} \label{sec:exp.synth.regk}
First, we consider one problem instance with $p=200$, $n / p =1$, and sparsity level $t_{true}=1\%$.  The discrete formulation \eqref{eq:BinForm2} involves two hyper-parameters, the sparsity $k$ and the regularization parameter $M$ or $\gamma$, which needs to be tuned using grid-search as described in Section \ref{sec:IO.cv}. 

The value of the regularization parameter has a crucial impact on the overall computational time of the cutting-plane algorithm. {Figure \ref{fig:regularization.bigm} shows a steep increase in computational time (top) and in the number of cuts (middle) as the regularization parameter, for both big-$M$ and ridge regularization, increases.}  Unfortunately, for applications of interest in our experiments, we needed to use high values of $M$ and $\gamma$ and had to stop the algorithm after a $5$-minute time limit. Yet, this early stopping strategy did not harm the overall performance of our approach. Indeed, the algorithm is able to find optimal or near-optimal solutions in a short amount of time but spends most of the time proving optimality. {For moderate values of $M/\gamma$, the optimality gap (Figure \ref{fig:regularization.bigm}(c)) after five minute is indeed relatively small, and the algorithm spents a lot of time closing that gap. For large regularization parameter value, on the other hand, the gap increases significantly (over $100\%$) and becomes uninformative. This corresponds to the regime of most of our subsequent experiments for which we will not report optimality gaps.} We provide extensive computational time experiments on smaller-size problems {as $n$, $p$ and $k$ vary} in Appendix \ref{sec:A.comptime}. 

\begin{figure}
\centering
\begin{subfigure}[t]{\linewidth}
	\centering
	\includegraphics[width=.45\linewidth]{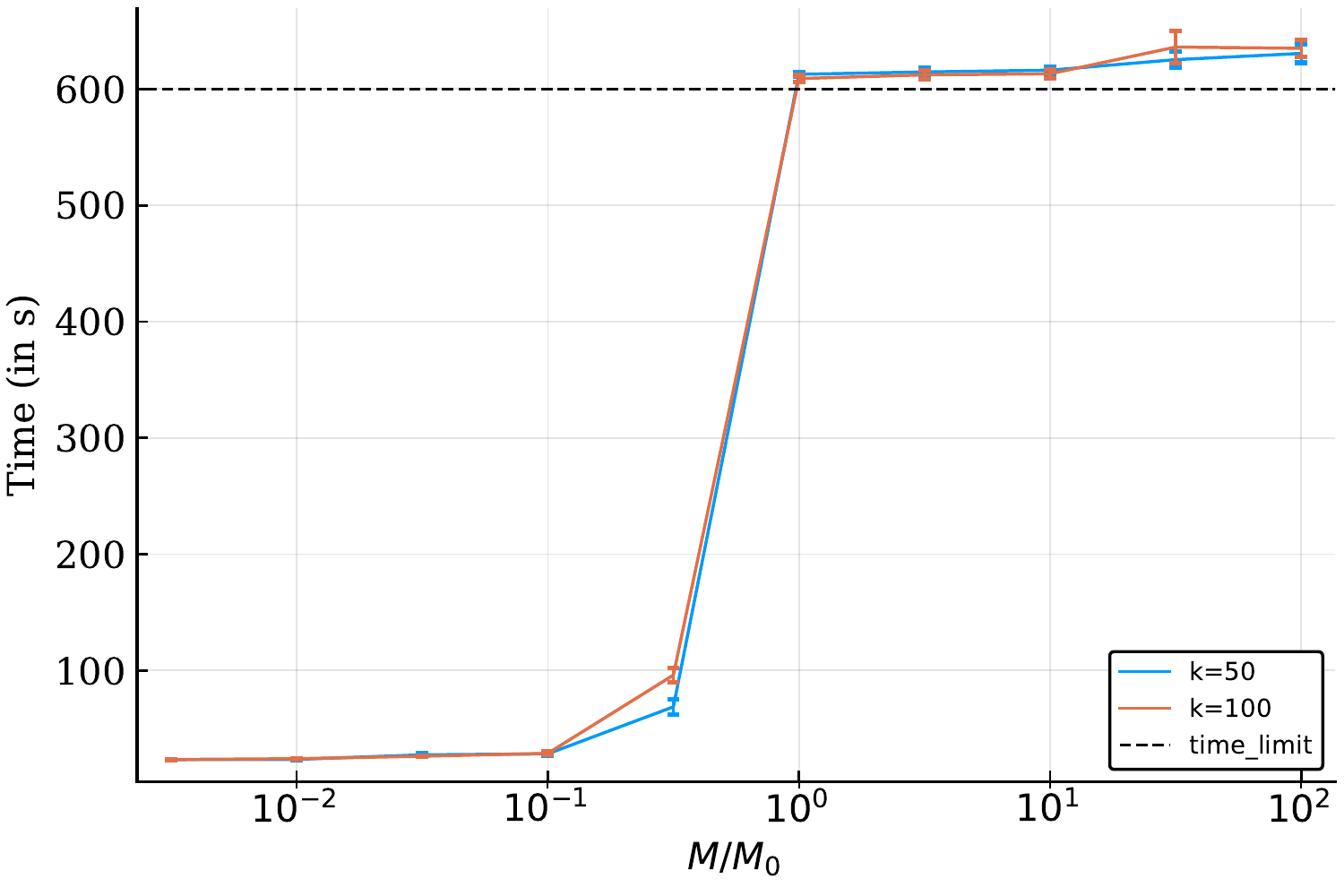}
	\includegraphics[width=.45\linewidth]{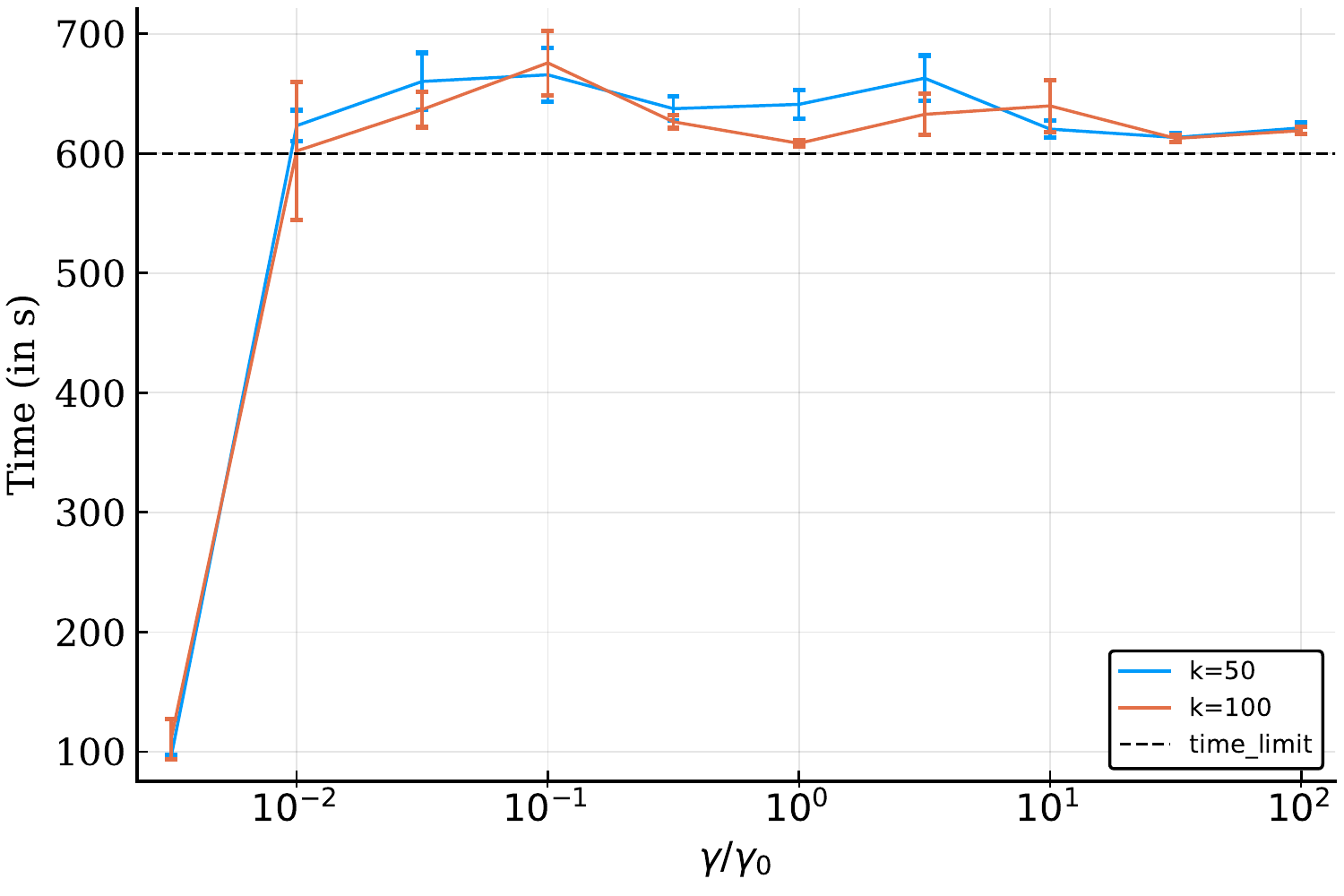}
	\caption{Computational time (in seconds).}
\end{subfigure} %
~
\begin{subfigure}[t]{\linewidth}
	\centering
	\includegraphics[width=.45\linewidth]{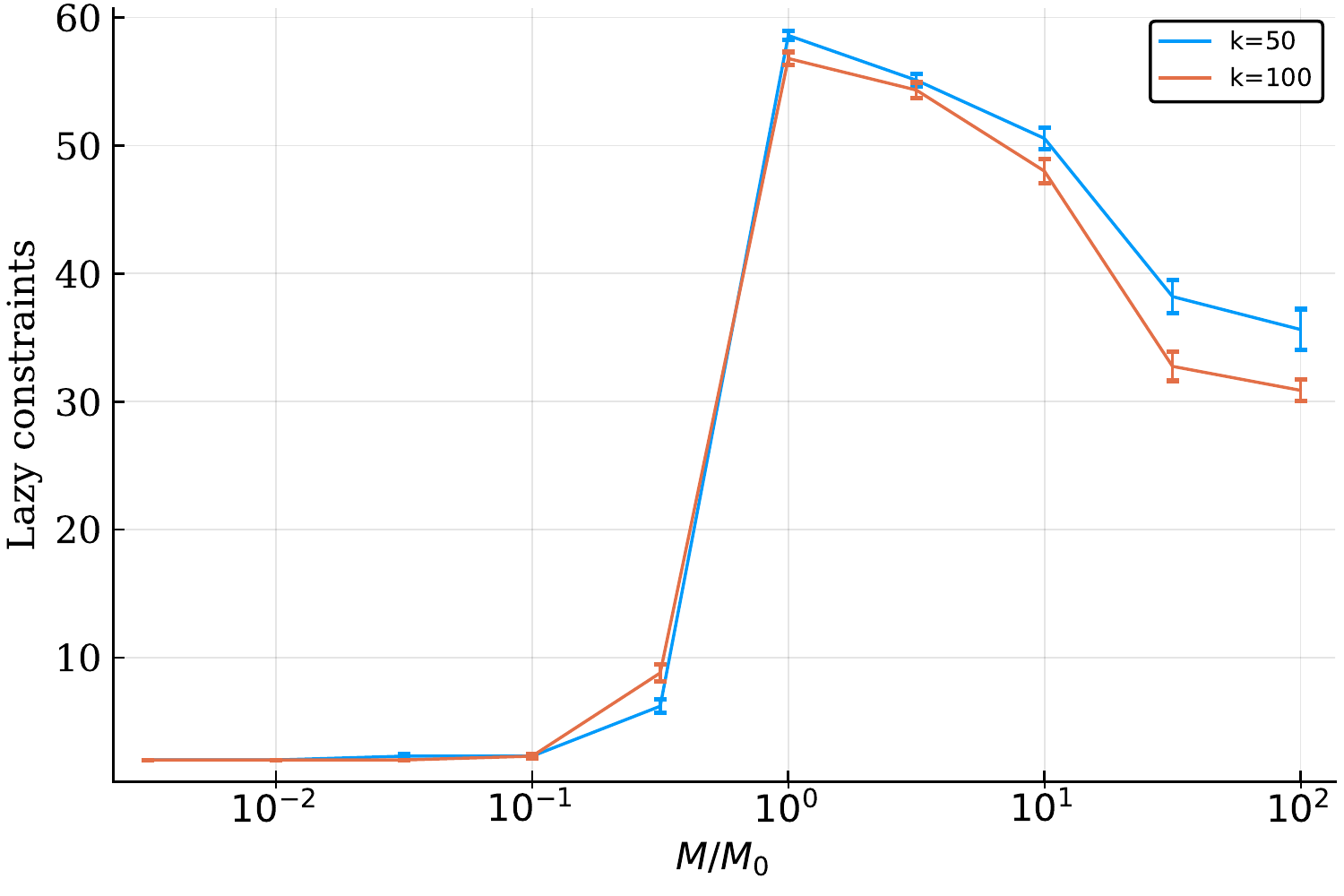}
	\includegraphics[width=.45\linewidth]{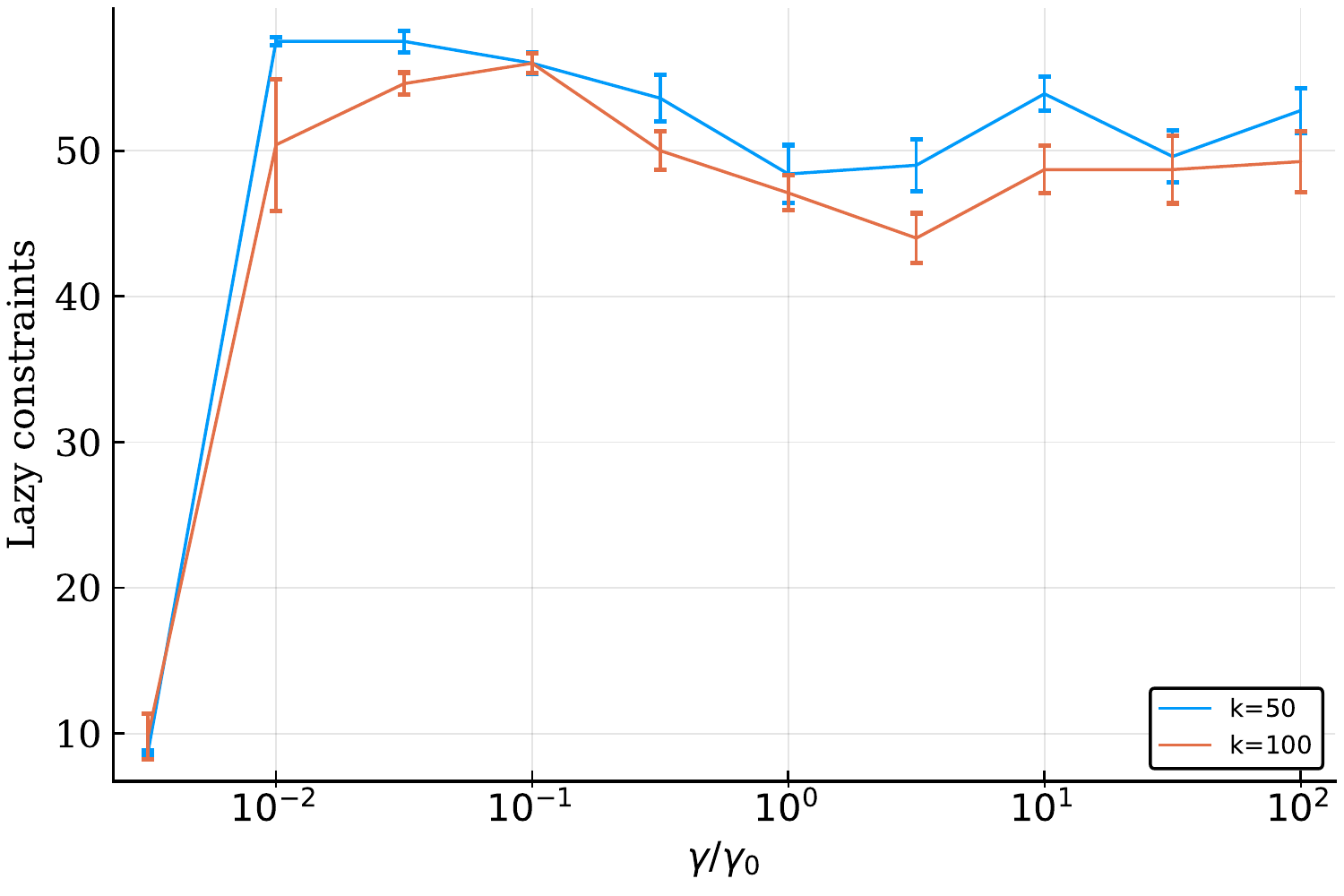}
	\caption{Number of cuts.}
\end{subfigure} %
~
\begin{subfigure}[t]{\linewidth}
	\centering
	\includegraphics[width=.45\linewidth]{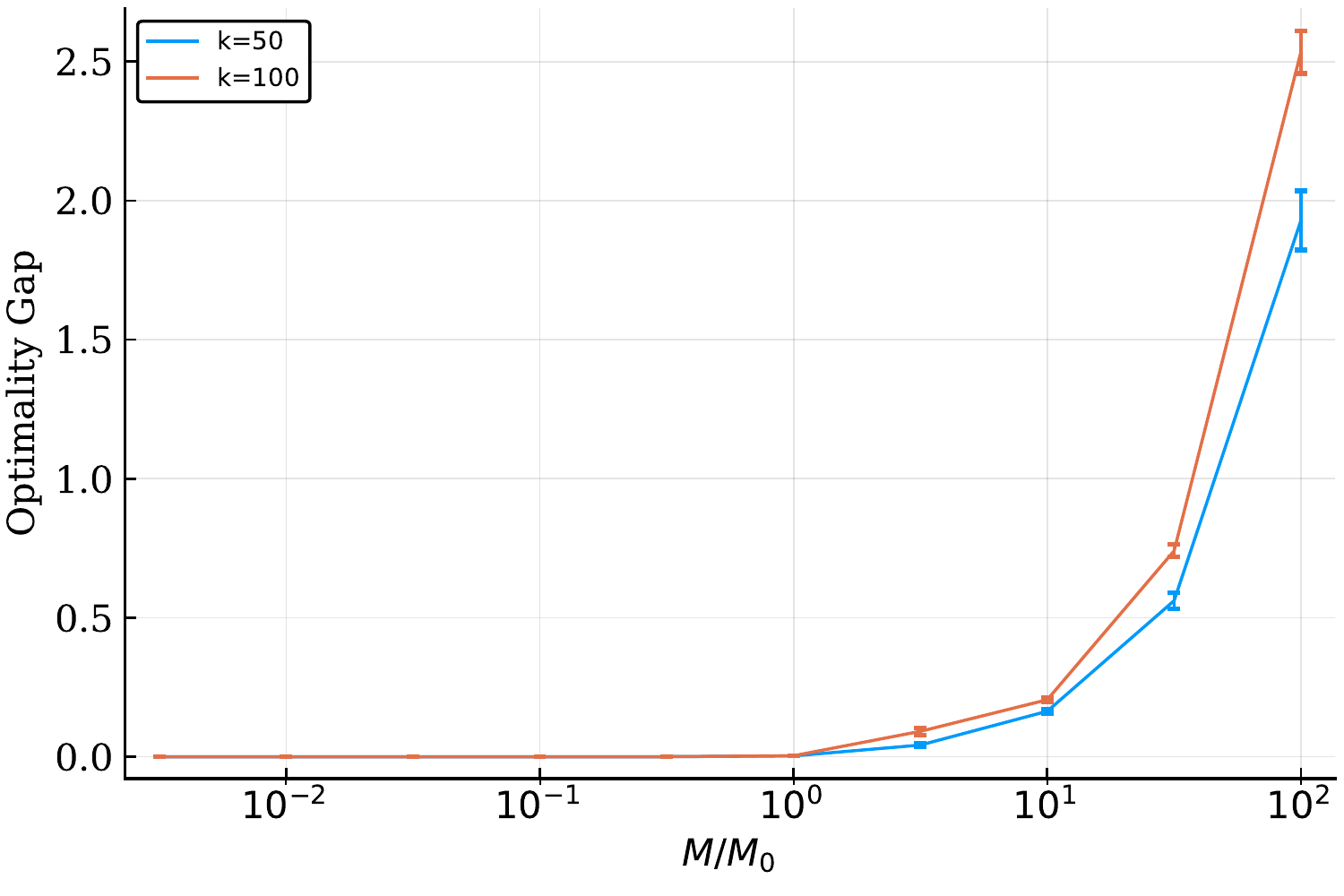}
	\includegraphics[width=.45\linewidth]{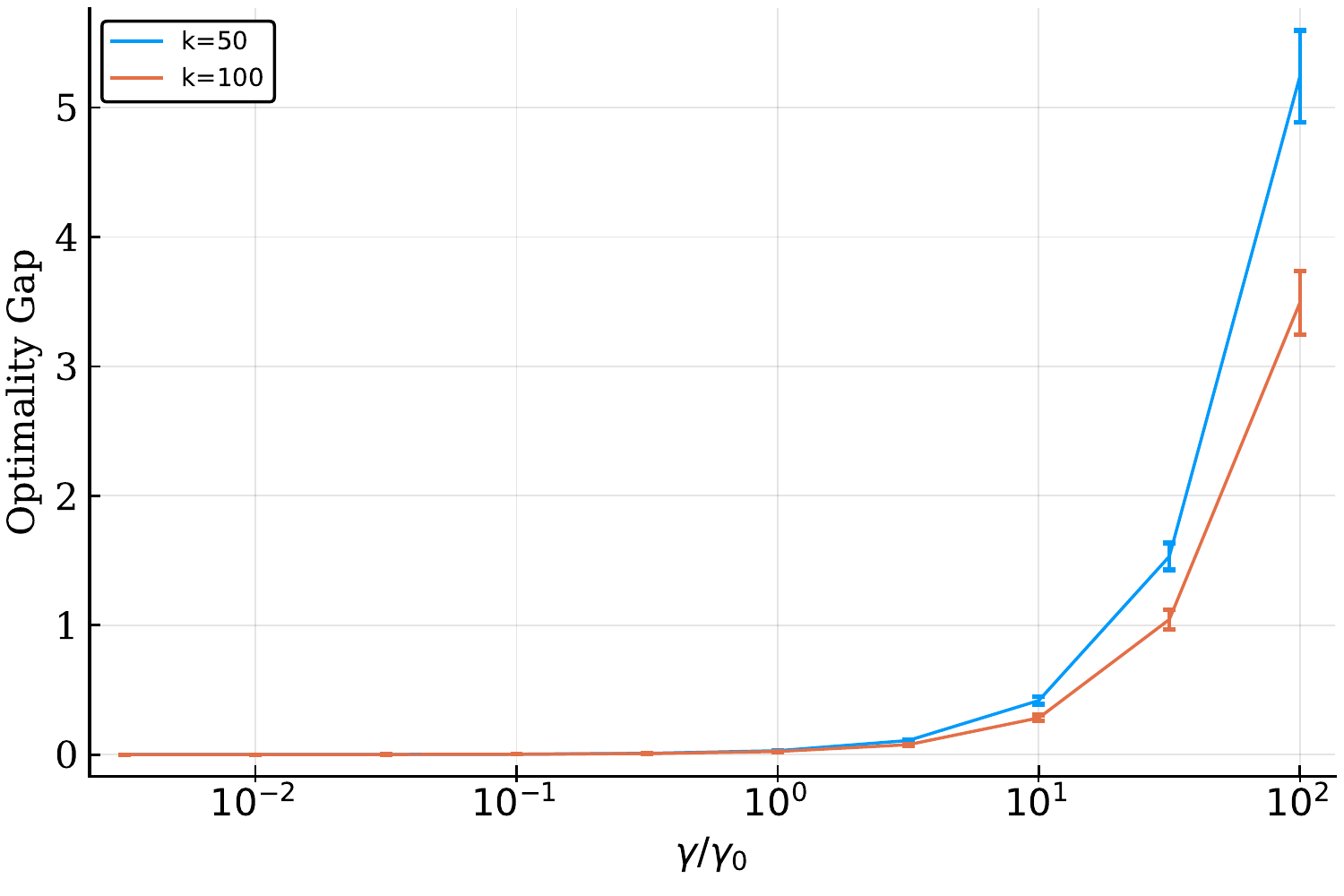}
	\caption{Relative optimality gap.}
\end{subfigure}
\caption{Impact of the regularization parameter $M / M_0$ for big-M (left), $\gamma / \gamma_0$ for ridge (right) on computational time (top), number of cuts (middle) and relative optimality gap (bottom). For the big-$M$ regularization, $M_0 = p / \|\overline{\mathbf{\Sigma}}\|_1$. For ridge regularization, $\gamma_0 = 4p / \| \overline{\mathbf{\Sigma}} \|_2^2$.}
\label{fig:regularization.bigm}
\end{figure}

At the end of the grid search, we select the best pair of parameters and compare the quality of the solution in terms of sparsity, accuracy, false detection and out-of-sample log-likelihood with solutions returned by Glasso \cite{friedman2008sparse} and Meinshausen and B{\"u}hlmann's approximation scheme \cite{meinshausen2006high}, implemented in the \verb|R| package \verb|glasso|\footnote{available at \url{https://cran.r-project.org/web/packages/glasso/}}. We tuned the hyper-parameter $\rho$ in those formulations through a grid search, testing values which led to similar sparsity level $k$ as the discrete formulations. Table \ref{tab:E1.cv} (resp. Table \ref{tab:E1.bic}) reports the results when the hyper-parameters are tuned using the negative log-likelihood on a test set (resp. the information criterion from \cite{foygel2010extended}). 

In both cases, we observe that discrete formulations outperform the other two methods in terms of resulting sparsity (by at least $40\%$), false detection rate (by a factor $4$-$12$) and out-of-sample likelihood (by $11$-$18\%$). On the other hand, Meinshausen and B{\"u}hlmann's approximation (MB in short) is always the fastest and most accurate method. Actually, we use its solution as a warm-start to our discrete optimization method. Let us remark that the big-$M$ and the ridge formulation perform almost identically and that their performance is barely not impacted by the choice of the criterion. On the contrary, the model selected with Glasso and MB highly depends on the cross-validation criterion: with negative log-likelihood, both methods tend to select the less sparse model, whereas much sparser models are selected with $BIC_{1/2}$.

\begin{table}[h!] 
\centering
\begin{tabular}{l|c|c|c|c}
\toprule
Method & big-$M$ & Ridge & MB & Glasso \\
\midrule
$k^\star$ & $199$ ($0$) & $199$ ($0$) & $796$ ($0$) & $796$ ($0$) \\
$A$ & $0.9508$ ($0.0080$) & $0.9508$ ($0.0080$) & $0.9960$ ($0.0020$) & $0.9945$ ($0.0023$) \\
$FDR$ & $0.0492$ ($0.0080$) & $0.0492$ ($0.0080$) & $0.6791$ ($0.0030$) & $0.7514$ ($0.0006$) \\
$-LL$ & $141.39$ ($3.05$) & $141.37$ ($3.05$) & $157.11$ ($2.47$) & $162.05$ ($1.89$) \\
Time (in s) & $352.87$ ($11.12$) & $203.36$ ($39.00$)  & $1.10$ ($0.04$) & $3.97$ ($0.31$) \\
\bottomrule
\end{tabular}
\caption{Average performance on synthetic data with $p=200$, $n/p = 1$, $t=1\%$ (leading to $k_{true}=199$), where the hyper-parameters of each formulation is chosen using the best negative log-likelihood over a validation set. We report the average performance over $10$ instances {(and their standard deviation)}.} 
\label{tab:E1.cv}
\end{table}

\begin{table}[h!] 
\centering
\begin{tabular}{l|c|c|c|c}
\toprule
Method & big-$M$ & Ridge & MB & Glasso \\
\midrule
$k^\star$ & $194$ ($5$) & $194$ ($5$) & $276$ ($8$) & $542$ ($26$) \\
$A$ & $0.9317$ ($0.0081$) & $0.9317$ ($0.0081$) & $0.9890$ ($0.0037$) & $0.9814$ ($0.0047$) \\
$FDR$ & $0.0444$ ($0.0062$) & $0.0444$ ($0.0062$) & $0.2634$ ($0.0213$) & $0.6329$ ($0.0167$) \\
$-LL_{test}$ & $141.78$ ($3.24$) & $141.78$ ($3.24$) & $167.16$ ($2.48$) & $170.22$ ($2.42$) \\
Time (in s) & $349.5$ ($14.5$) & $225.2$ ($43.00$)  & $0.90$ ($0.05$) & $2.77$ ($0.19$) \\
\bottomrule
\end{tabular}
\caption{Average performance on synthetic data with $p=200$, $n/p = 1$, $t=1\%$ (leading to $k_{true}=199$), where the hyper-parameters of each formulation are chosen using the best in-sample extended Bayesian information criterion $BIC_{1/2}$. We report the average performance over $10$ instances {(and their standard deviation)}.}
\label{tab:E1.bic}
\end{table}

\subsubsection{Impact of problem size}
We now pursue the same comparison for problems with varying characteristics $n/p$, $t$ and $p$. 
\paragraph{Number of samples $n$}
Information-theoretic intuition suggests that the problem becomes easier as $n$ increases. For $n<p$, the empirical covariance matrix is always singular so its inverse cannot be properly defined without sparsity assumptions. On the other side of the spectrum, theoretical guarantees exists for many algorithms \cite{meinshausen2006high,santhanam2012information} in the limit $n\rightarrow\infty$. As shown on Figure \ref{fig:scale.n.ll}, this intuition is confirmed experimentally with accuracy (resp. false detection rate) increasing (resp. decreasing) as $n/p$ increases. In addition, we observe that the conclusions drawn from the previous section hold consistently for various values of $n$: the discrete optimization formulations lead to reduced false detection rate, while being of comparable accuracy with the most accurate benchmark. They also demonstrate better out-of-sample negative log-likelihood (Figure \ref{fig:scale.n.loglik} in Appendix \ref{sec:A.statperf}) and their performance is robust to the cross-validation criterion used (Figure \ref{fig:scale.n.ebic} in Appendix \ref{sec:A.statperf}). Note that the other two methods, MB and Glasso, do not exhibit a decreasing false detection rate when cross-validated using the $BIC_{1/2}$ criterion. 

 \begin{figure}
\centering
\begin{subfigure}[t]{.45\linewidth}
	\centering
	\includegraphics[width=\linewidth]{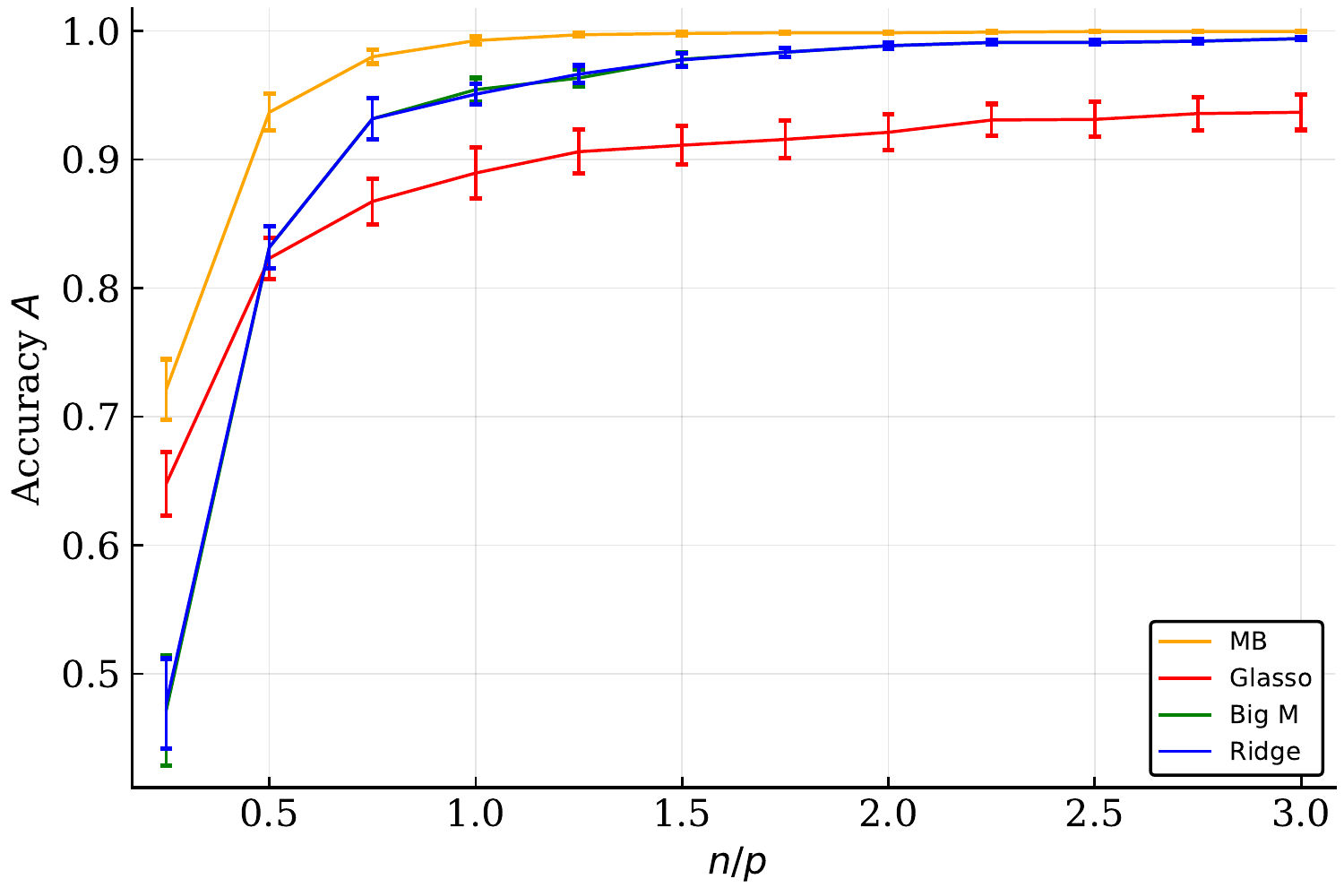}
	\caption{Accuracy $A$ vs. $n/p$.}
\end{subfigure} %
~
\begin{subfigure}[t]{.45\linewidth}
	\centering
	\includegraphics[width=\linewidth]{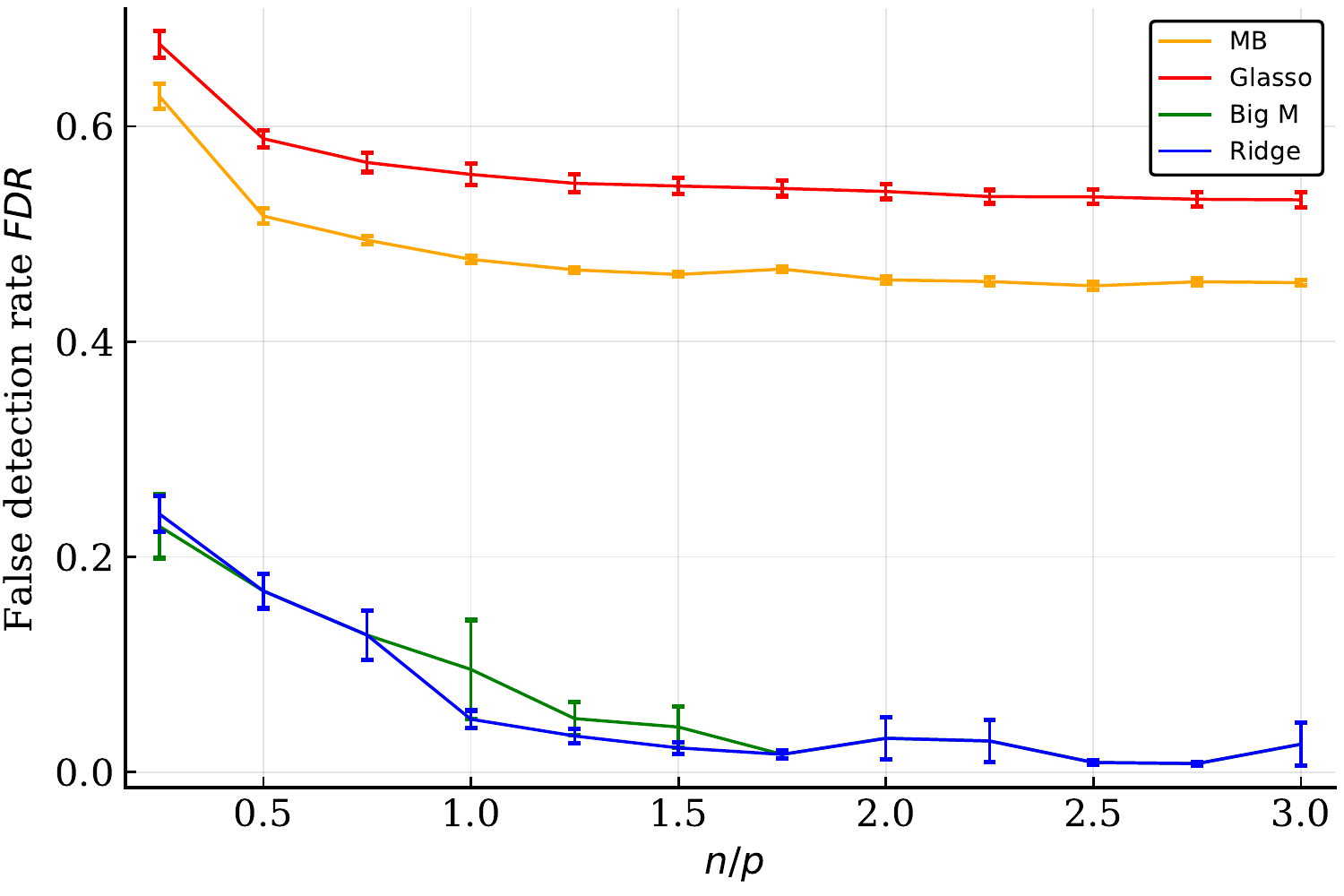}
	\caption{False detection rate $FDR$ vs. $n/p$.}
\end{subfigure}
\caption{Impact of the number of samples $n/p$ on support recovery. Results are averaged over $10$ instances with $p=200$, $t=1\%$. Hyper-parameters are tuned using out-of-sample negative log-likelihood.}
\label{fig:scale.n.ll}
\end{figure}
 
\paragraph{Sparsity level $t$} Recall that the sparsity level $t$ relates to the number of nonzero upper-diagonal coefficients of $\mathbf{\Theta}_0$ through the relationship 
$$k_{true} = \left \lfloor t \: \dfrac{p (p-1)}{2} \right \rfloor.$$
From Section \ref{sec:covsel.exp}, we observed that the separation Problem \eqref{eq:separationProblem} is increasingly harder to solve as $t$ increases. In addition, the combinatorics of the master Problem \eqref{eq:BinForm2} also increases with $t$, since the size of the feasible set $\mathcal{S}^{k_{true}}_p$ grows exponentially with $k_{true}$ as long as $k_{true} \leqslant \tfrac{p(p-1)}{4}$ (i.e., $t\leqslant 0.5$). Figure \ref{fig:scale.t.ll} represents accuracy and false detection rate as $t$ increases, for all methods, using negative log-likelihood as a cross-validation criterion. We report negative log-likelihood and results with $BIC_{1/2}$ as the cross-validation criterion in Appendix \ref{sec:A.statperf} (Figures \ref{fig:scale.t.loglik} and \ref{fig:scale.t.ebic} respectively).

\begin{figure}[H]
\centering
\begin{subfigure}[t]{.45\linewidth}
	\centering
	\includegraphics[width=\linewidth]{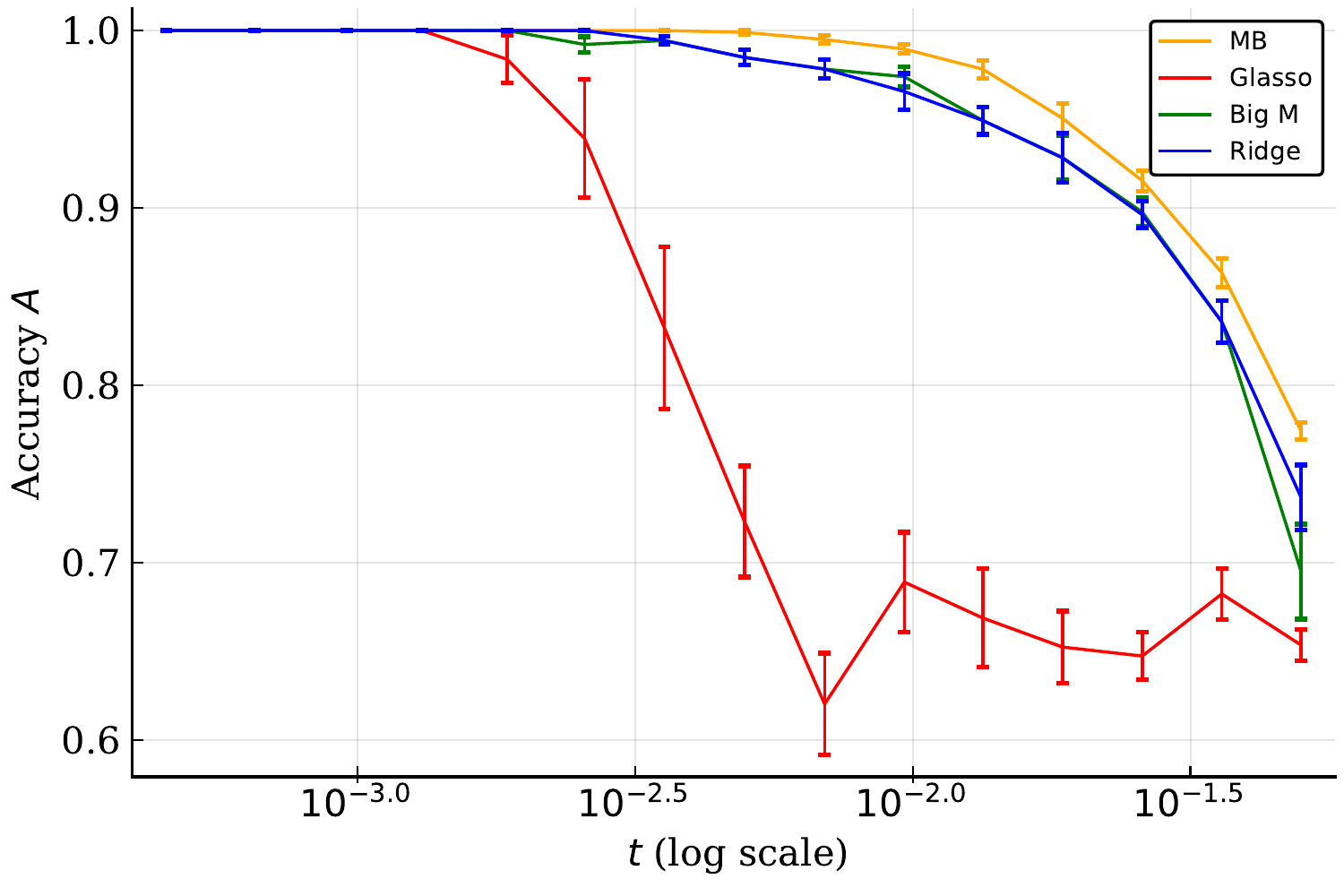}
	\caption{Accuracy $A$ vs. $t$.}
\end{subfigure} %
~
\begin{subfigure}[t]{.45\linewidth}
	\centering
	\includegraphics[width=\linewidth]{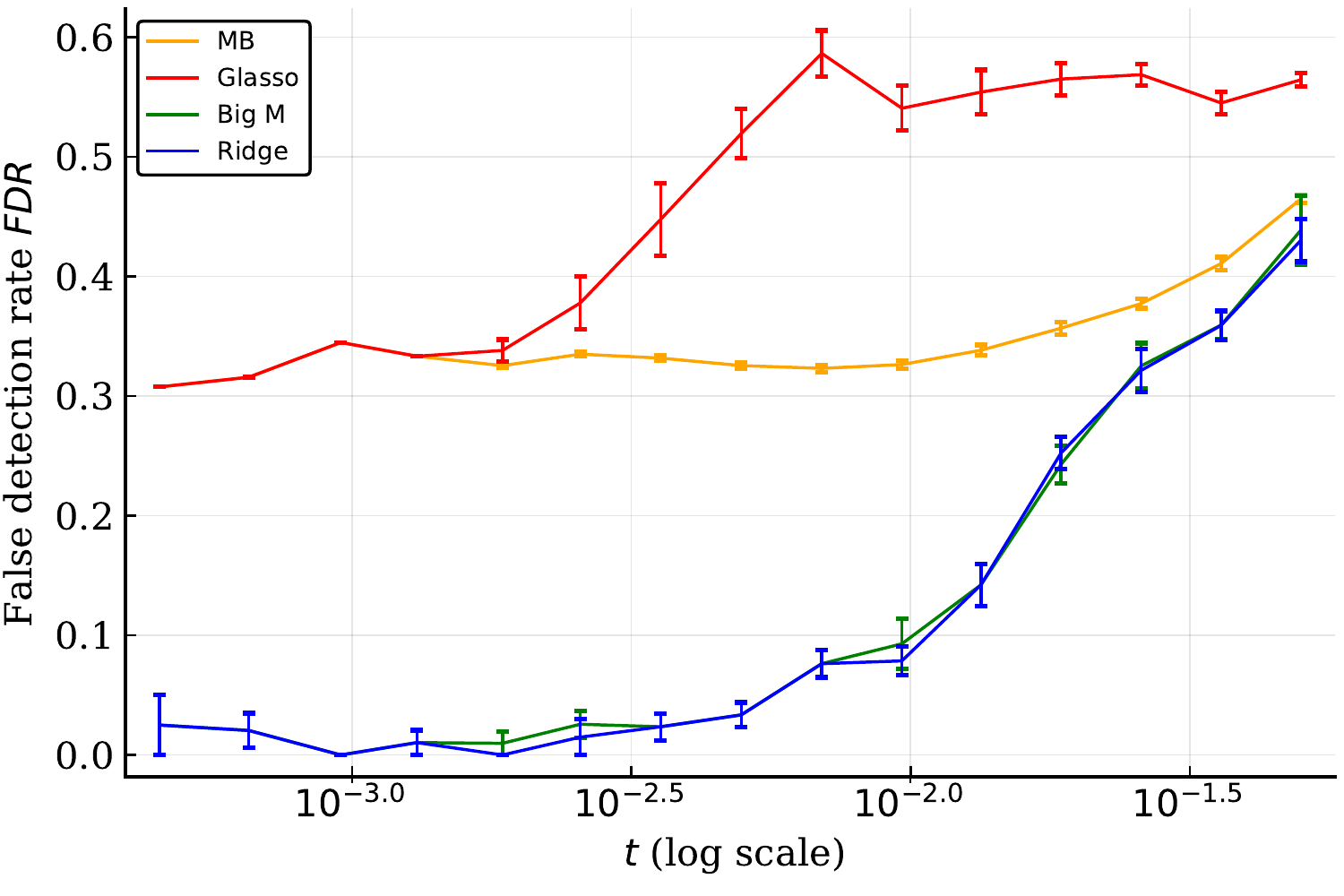}
	\caption{False detection rate $FDR$ vs. $t$.}
\end{subfigure}
\caption{Impact of the  sparsity level $t$ on support recovery. Results are averaged over $10$ instances with $p=200$, $n=p$. Hyper-parameters are tuned using the out-of-sample negative log-likelihood.}
\label{fig:scale.t.ll}
\end{figure}

\paragraph{Dimension $p$} For $n/p$ and $t$ fixed, the sparse precision matrix estimation problem should not be statistically more difficult as $p$ increases, but computationally more expensive. We report results in Appendix \ref{sec:A.statperf}. Figures \ref{fig:scale.p.ll} and \ref{fig:scale.p.ebic} report resulting accuracy and false detection rate as $p$ increases, using negative log-likelihood and $BIC_{1/2}$ respectively as a cross-validation criterion. Figure \ref{fig:scale.p.loglik} reports the impact of $p$ on out-of-sample negative log-likelihood, Figure \ref{fig:scale.p.time} the impact on time. Interestingly, the big-$M$ formulation is harder to scale than the ridge regularization, due to the additional constraints. As a result, fewer cuts were generated within the 5-minute time limit and the resulting precision matrix shows a different accuracy/false detection trade-off with relatively poorer out-of-sample log-likelihood as $p$ increases.


\subsection{Analysis of a Breast Cancer Dataset} \label{sec:exp.cancer}
We apply our method on a real breast cancer dataset analyzed in \cite{hess2006pharmacogenomic}. The dataset can be found at \url{http:// bioinformatics.mdanderson.org/}. The dataset consists of 22,283 gene expression levels for 133 patients, including 34 with pathological complete response (pCR) and 99 with residual disease (RD). The pCR subjects are considered to have a high chance of cancer-free survival in the long term, and thus it is of interest to study the response states of the patients (pCR or RD) to preoperative chemotherapy. The main objective of this analysis is to estimate the inverse covariance matrix of the gene expression levels and then apply linear discriminant analysis (LDA) to predict whether or not a subject can achieve the pCR state. 

The dataset has been studied in \cite{fan2009network} using Glasso, revised Glasso, and SCAD. Later the same analysis was performed with the CLIME estimator \cite{cai2011constrained}. For the sake of consistency, we perform the same analysis, but use our method to estimate inverse covariance matrices when needed. We first briefly describe how the data is prepared and analyzed. We then present our results and compare with known results in \cite{fan2009network,cai2011constrained}. 

The data is first randomly divided into testing and training sets using stratified sampling. 5 pCR subjects and 16 RD subjects are randomly chosen to constitute the testing data. The remaining 112 subjects are chosen to constitute the training data. This process is repeated 100 times and the following data preparation techniques are used on each of the 100 instances of the training and testing data. A two-sample t-test is performed between the two groups in the training dataset to determine the most significant genes; we retain the $113$ genes with the smallest $p$-values as the variables for prediction and the rest are discarded. The data for each variable (gene) is then standardized by dividing the data with the corresponding standard deviation, estimated from the training dataset. 

We next perform the linear discriminant analysis. We assume the normalized gene expression data are normally distributed as $\mathcal{N}(\mathbf{\mu}_k,\mathbf{\Sigma})$, where the two groups have the same covariance $\mathbf{\Sigma}$, but different means, $\mathbf{\mu}_k$ ($k=1$ for pCR and $k=2$ for RD). The linear discriminant scores are as follows:
$$\delta_k(\mathbf{x}) = \mathbf{x}^{\top} \hat{\mathbf{\Sigma}}^{-1} \hat{\mathbf{\mu}}_k - \frac{1}{2} \hat{\mathbf{\mu}}_k^{\top} \hat{\mathbf{\Sigma}}^{-1} \hat{\mathbf{\mu}}_k + \log \pi_k,$$
where $\pi_k = n_k/n$ is the proportion of the number of observations in the training data belonging to class $k$, and the classification rule is given by $\argmax_k \delta_k(\mathbf{x})$. Based on each training dataset, we estimate the mean $\hat{\mathbf{\mu}}_k$ as,
$$\hat{\mathbf{\mu}}_k = \frac{1}{n_k} \sum_{i \in class-k} \mathbf{x}_i \quad \text{for } k = 1,2,$$ 
and the precision matrix $\hat{\mathbf{\Sigma}}^{-1}$ using the cardinality constrained problem. Since the sample size is less than the dimension of the matrix, the empirical covariance is not invertible and can not  be used in LDA. 

\begin{table}[h!] 
\centering 
\begin{tabular}{| c | c |}
\toprule 
Comparison Metrics & Description \\
\midrule 
\multirow{2}{4.5em}{\centering Specificity} & \multirow{2}{10em}{\centering $\frac{TN}{TN + FP}$} \\
& \\
\multirow{2}{4.5em}{Sensitivity} & \multirow{2}{10em}{\centering $\frac{TP}{TP + FN}$} \\ 
& \\
\multirow{2}{2.5em}{MCC} & \multirow{2}{20em}{ \centering $\frac{TP \times TN - FP \times FN}{\sqrt{(TP + FP)(TP + FN)(TN + FP)(TN + FN)}}$} \\ 
& \\
\bottomrule
\end{tabular}
\caption{Metrics used for prediction performance comparison for the breast cancer dataset. TP, TN, FP, and FN are the number of true positives, true negatives, false positives and false negatives, respectively. Positives correspond to pCR subjects and negatives correspond to RD subjects.}
\label{tab: metrics} 
\end{table}

The classification performance of $\delta_k$ is clearly associated with the estimation performance of $\hat{\mathbf{\Sigma}}^{-1}$. Let true positive (TP) be the number of pCR subjects $\delta_k$ identifies as pCR subjects and let true negative (TN) be the number of RD subjects $\delta_k$ identifies as RD Subjects. To compare prediction performance, we use comparison metrics: specificity, sensitivity, and also Matthews Correlation Coefficient (MCC). They are each defined in Table \ref{tab: metrics}. MCC is widely used in machine learning for assessing the quality of a binary classifier; it takes true and false, positives and negatives, into account and is generally regarded as a balanced measure. A larger MCC value indicates a better classifier \cite{fan2009network}.

\begin{table}[h!] 
\centering 
\begin{tabular}{| c | c c c c |}
\toprule
Method & Specificity & Sensitivity & MCC & NNZ \\
\midrule
Glasso & $0.768 \text{ } (0.009)$ & $0.630  \text{ } (0.021)$ & $0.366  \text{ } (0.018)$ & $3923  \text{ } (2)$ \\ 
Adaptive Lasso & $ 0.787  \text{ } (0.009)$ & $0.622  \text{ } (0.022)$ & $0.381  \text{ } (0.018)$ & $1233  \text{ } (1)$ \\
SCAD & $0.794  \text{ } (0.009)$ & $0.634  \text{ } (0.022)$ & $0.402  \text{ } (0.020)$ & $674  \text{ } (1)$ \\ 
CLIME & $0.749  \text{ } (0.009)$ & $0.806  \text{ } (0.017)$ & $0.506  \text{ } (0.020)$ & $492  \text{ } (7)$ \\ 
big-$M$ & $0.779  \text{ } (0.011)$ & $0.717  \text{ } (0.019)$ & $0.460  \text{ } (0.019)$ & $436  \text{ } (3)$ \\
Ridge & $ 0.775 \text{ } (0.011)$ & $0.716 \text{ } (0.020)$ & $ 0.453 \text{ } (0.021)$ & $427 \text{ } (3)$ \\
\bottomrule
\end{tabular}
\caption{Comparison of estimators on the breast cancer dataset. Data for Glasso, revised Glasso and SCAD is from \cite{fan2009network} and data for CLIME is from \cite{cai2011constrained}. Average performance is reported on 100 instances of training and testing data; standard deviations are included in parentheses. NNZ refers to the number of nonzero entries in the estimate.}
\label{tab:bcancercomp} 
\end{table}

We perform the LDA for each of the 100 instances and report a summary of average performance in Table \ref{tab:bcancercomp}. {For each experiment, we calibrate the parameters $k$ and $M$ / $\gamma$ using the extended Bayesian information criterion on the training data. We observe that our proposed methods outperform Lasso-based methods on all aspects. Our discrete optimization formulations are comparable to SCAD and Clime, yet not dominated nor dominating by either of the two. Big-$M$ and ridge formulations improve over SCAD in terms of sensitivity and MCC, and over Clime in terms of specificity. On the contrary, SCAD ranks first on specificity and Clime on sensitivity and MCC. However, the biggest advantage of discrete formulations over the others is that they produce sparser estimates. } This is especially desirable in the context of graphical models, when it is desirable to induce sparsity for explanatory and predictive power. 


\section{Extension to graphical model estimation with structural information} \label{sec:structural} 
In this section, we illustrate the modeling power of our mixed-integer formulation. In graphical models estimation,
it is not unusual to have some information or intuition about the correlation structure between variables \cite{drton2017structure}, information which can easily be encoded in our framework by additional constraints on the binary variables $\mathbf{Z}$.  

\paragraph{Sparsity} In this paper, we focused on imposing sparsity on the precision matrix $\mathbf{\Theta}$. This requirement translates into the linear constraint
\begin{align*}
\sum_{i>j} Z_{ij} \leqslant k.
\end{align*} 
\paragraph{Partial knowledge of the support} In some settings, the modeler has some partial knowledge of the correlation structure and can inform the optimization problem through the additional constraints
\begin{align*}
Z_{ij} = 0, &\mbox{ if } (i,j) \in \mathcal{S}_0, \\
Z_{ij} = 1, &\mbox{ if } (i,j) \in \mathcal{S}_1, 
\end{align*} 
where $\mathcal{S}_0$ (resp. $\mathcal{S}_1$) is a set of indices for which $\Theta_{ij}$s are known to be $0$ (resp. $\neq 0$). 
\paragraph{Degree} Information about the degree of each variable in the underlying structure (or graph) might also be relevant \cite{ma2015structure}. In a protein contact graph for example, the degree of each node is upper bounded by some constant. With our framework, the degree of any variable $i$ is given by $d_i := \sum_{j>i} Z_{ij}$, so that adding the linear constraints 
\begin{align*}
\ell_i \leqslant d_i \leqslant u_i,\ \forall i
\end{align*}
would enforce lower ($\ell_i$) and upper ($u_i$) bounds on the node degrees. In a more flexible fashion,
\begin{align*}
\left| \, \dfrac{1}{p} \sum_i d_i - \overline{d} \, \right| \leqslant \epsilon,
\end{align*}
requires the average node degree to be within $\epsilon$ from a given target $\overline{d}$. Similarly, quadratic constraints could be added in order to match second moments. Finally, many real-world networks, including the network of webpages or some gene regulatory networks, involve nodes which have a lot more edges than the others \cite{tan2014learning}. Our framework can account for such hubs by introducing additional binary variables $y_i,\, i=1,\dots, p$ and adding the following constraints
\begin{align*}
d_i \leqslant d_{low} + (d_{high}-d_{low}) y_i, &\ \forall i, \\
\sum_i y_i \leqslant m,
\end{align*}   
where $d_{high}$ (resp. $d_{low}$) is the maximum degree of a hub (resp. non-hub) node and $m$ is an upper-bound on the total number of hubs in the network. 

\paragraph{Tree structure} Finally, tree-structured graphical models have been extensively studied in the literature \cite{chow1968approximating} for they are sparse and allow efficient inference. Introducing additional binary variables $y_{i,j}^k$ for all ordered triples $(i,j,k)$ of pairwise different nodes, \cite{martin1991using} provided an extended formulation for a spanning tree:
\begin{align*}
\| Z \|_0 &= p-1, \\ 
y_{ij}^k + y_{ji}^k &= Z_{ij}, &\; \forall i,j = 1,\dots, p, \, i<p, \;\forall k = 1,\dots, p, \\
\sum_{j: {j \notin \{i,k\}}} y_{ij}^k  &= 1 - Z_{ik} &\; \forall i,k =1,\dots,p, \, i < k,
\end{align*}
where $y_{ij}^k=1$ if and only if the edge $(i,j)$ is contained in the tree and $k$ is in the component of $j$ when removing $(i,j)$ from the tree.


\section{Summary} \label{sec:conclusion}
In this work, we use a variety of modern optimization methods to provide the first provably exact algorithm for solving the cardinality-constrained negative log-likelihood Problem \eqref{eq:L0MLE}. {Through the unifying lens of regularization, we show that the well known big-$M$ constraints are not only a formulation technique but more importantly a smoothing procedure. On that matter, ridge regularization can be considered as a fruitful alternative. Our cutting-plane approach has the additional benefit of treating separately the combinatorial aspect of the problem from the SDP component of it.} The method provides provably optimal solutions, and delivers near optimal solutions in minutes for {$p$ in the $1, 000$s and sparsity level of the order of $1\%$}. Computational experiments on both synthetic and real data show that such discrete formulations {deliver solutions with increased out-of-sample predictive power and lower false detection rate than existing methods, while being as accurate}. 

\appendix

\section{Proofs of Theorem \ref{thm:dualL0regularized} and corollaries} \label{sec:A.dual} 
In this section, we detail the proof of Theorem \ref{thm:dualL0regularized}. We first specify the assumptions required on the regularizer $\Omega$, prove Theorem \ref{thm:dualL0regularized} and finally investigate some special cases of interest.
\subsection{Assumptions} \label{sec:A.dual.assumptions} 
We first assume that the function $\Omega$ is decomposable, i.e., there exist scalar functions $\Omega_{ij}$ such that 
\begin{equation}
\label{eq:A1} \tag{A1}
\forall \: \mathbf{\Phi}, \quad \Omega(\mathbf{\Phi}) = \sum_{i,j} \Omega_{ij}(\Phi_{ij}).
\end{equation}
In addition, we assume that for all $(i,j)$, $\Omega_{ij}$ is convex and tends to regularize towards zero. Formally, 
\begin{equation}
\label{eq:A2} \tag{A2}
\forall \: (i,j), \quad \min_x \: \Omega_{ij}(x) = \Omega_{ij}(0).
\end{equation}
Those first two assumptions are not highly restrictive and are satisfied by $\ell_\infty$-norm constraint (big-$M$), $\ell_1$-norm regularization (LASSO) or $\| \cdot \|_2^2$-regularization, among others.

For any function $f$, we denote with a superscript $\star$ its Fenchel conjugate \cite[see][chap. ~3.3]{boyd2004convex} defined as
\begin{align*}
 f^\star(y) := \sup_x \langle x, y \rangle - f(x).
\end{align*}
In particular, the Fenchel conjugate of any function $f$ is convex. Given Assumption \eqref{eq:A1}, 
\begin{align*}
\Omega^\star (\mathbf{R}) &= \sup_{\mathbf{\Phi}} \langle \mathbf{\Phi}, \mathbf{R} \rangle - \Omega(\mathbf{\Phi}), \\
&= \sum_{i,j} \sup_{\Phi_{ij}} \Phi_{ij} R_{ij} - \Omega_{ij}(\Phi_{ij}), \\
&= \sum_{i,j} \Omega_{ij}^\star (R_{ij}).
\end{align*}
As a result, it is easy to see that if $\Omega$ satisfies \eqref{eq:A1} and \eqref{eq:A2}, so does its Fenchel conjugate. 

Let us denote $\mathbf{A} \circ \mathbf{B}$ the Hadamard or component-wise product between matrices $\mathbf{A} $ and $\mathbf{B}$. Consider a matrix $\mathbf{R}$ and a support matrix $\mathbf{Z} \in \{0,1\}^{p \times p}$. The function $\mathbf{Z} \mapsto \Omega^\star( \mathbf{Z} \circ \mathbf{R} )$ is convex in $\mathbf{Z}$, by convexity of $\Omega^\star$. We now assume that it is linear in $\mathbf{Z}$, that is, there exists a function $\mathbf{\Omega}^\star: \: \mathbb{R}^{p \times p} \rightarrow \mathbb{R}^{p \times p}$ satisfying:
\begin{equation}
\label{eq:A3} \tag{A3}
\forall \: \mathbf{Z} \in \{0,1\}^{p \times p}, \forall \: \mathbf{R} \in \mathbb{R}^{p \times p}, \: \Omega^\star(  \mathbf{Z} \circ \mathbf{R} ) = \langle \mathbf{Z}, \mathbf{\Omega}^\star(\mathbf{R}) \rangle.
\end{equation}
\subsection{Proof of Theorem \ref{thm:dualL0regularized}} \label{sec:A.dual.proof} 
Given $\mathbf{Z} \in \{0,1\}^{p \times p}$ such that ${Z}_{ii} = 1$ for all $i = 1,\dots,p$, we first prove that under assumptions \eqref{eq:A1} and \eqref{eq:A2}:
\begin{align*}
\tilde{h}(\mathbf{Z}) &:= \quad \min_{\mathbf{\Theta} \succ \mathbf{0}} \quad \langle \overline{\mathbf{\Sigma}}, \mathbf{\Theta } \rangle - \log \det \mathbf{\Theta} + \Omega(\mathbf{\Theta}) \quad \mbox{  s.t.  } {\Theta}_{ij} = 0 \mbox{  if  } {Z}_{ij} = 0 \: \forall (i,j), \\
&= \quad \max_{\mathbf{R}:\overline{\mathbf{\Sigma}} + \mathbf{R} \succ \mathbf{0}} \:  p +\log \det (\overline{\mathbf{\Sigma}} + \mathbf{R}) - \Omega^\star ( \mathbf{Z} \circ \mathbf{R}). 
\end{align*}
Then, Assumption \eqref{eq:A3} will conclude the proof. 

\begin{proof}
We decompose the minimization problem \textit{\`a la Fenchel}.
\begin{align*}
\tilde{h}(\mathbf{Z}) &= \min_{\mathbf{\Theta} \succ \mathbf{0}} \: \langle \overline{\mathbf{\Sigma}}, \mathbf{\Theta} \rangle - \log \det \mathbf{\Theta}  + \Omega(\mathbf{\Theta}) \: \mbox{  s.t.  } {\Theta}_{ij} = 0 \mbox{  if  } {Z}_{ij} = 0, \\
&= \min_{\mathbf{\Theta}\succ \mathbf{0}, \mathbf{\Phi}} \: \langle \overline{\mathbf{\Sigma}}, \mathbf{\Theta} \rangle - \log \det \mathbf{\Theta} + \Omega(\mathbf{Z} \circ \mathbf{\Phi}) \: \mbox{  s.t.  } {\Theta}_{ij} = {Z}_{ij} {\Phi}_{ij}, \\
&= \min_{\mathbf{\Theta} \succeq \mathbf{0}, \mathbf{\Phi}} \: \langle \overline{\mathbf{\Sigma}}, \mathbf{\Theta} \rangle - \log \det \mathbf{\Theta} + \Omega(\mathbf{Z} \circ \mathbf{\Phi}) \: \mbox{  s.t.  } \mathbf{\Theta} = \mathbf{Z} \circ \mathbf{\Phi}.
\end{align*}
In the last equality, we omitted the constraint $\mathbf{\Theta} \succ \mathbf{0}$, which is implied by the domain of $\log \det$. Assuming \eqref{eq:A1} and \eqref{eq:A2} hold, the regularization term $\Omega(\mathbf{Z} \circ \mathbf{\Phi})$ can be replaced by $\Omega(\mathbf{\Phi})$ and 
\begin{align*}
\tilde{h}(\mathbf{Z}) &= \min_{\mathbf{\Theta} \succeq \mathbf{0}, \mathbf{\Phi}} \: \langle \overline{\mathbf{\Sigma}}, \mathbf{\Theta} \rangle - \log \det \mathbf{\Theta} + \Omega(\mathbf{\Phi}) \: \mbox{  s.t.  } \mathbf{\Theta} = \mathbf{Z} \circ \mathbf{\Phi}.
\end{align*}
The above objective function is convex in $(\mathbf{\Theta}, \mathbf{\Phi})$, the feasible set is a non-empty - $\mathbf{\Theta} = \mathbf{\Phi} = \mathbf{I}_p$ is feasible - convex set, and Slater's conditions are satisfied. Hence, strong duality holds.  
\begin{align*}
\tilde{h}(\mathbf{Z}) &= \min_{\mathbf{\Theta} \succeq \mathbf{0}, \mathbf{\Phi}} \: \langle \overline{\mathbf{\Sigma}}, \mathbf{\Theta} \rangle - \log \det \mathbf{\Theta} + \Omega(\mathbf{\Phi}) \: \mbox{  s.t.  } \mathbf{\Theta} = \mathbf{Z} \circ \mathbf{\Phi}, \\
&= \min_{\mathbf{\Theta} \succeq \mathbf{0}, \mathbf{\Phi}} \: \max_{\mathbf{R}} \: \langle \overline{\mathbf{\Sigma}}, \mathbf{\Theta} \rangle - \log \det \mathbf{\Theta}  + \Omega(\mathbf{\Phi})  \: + \langle \mathbf{\Theta} - \mathbf{Z} \circ \mathbf{\Phi}, \mathbf{R} \rangle, \\
&= \max_{\mathbf{R}} \: \min_{\mathbf{\Theta} \succeq \mathbf{0}} \: \left[ \langle \overline{\mathbf{\Sigma}} + \mathbf{R}, \mathbf{\Theta} \rangle - \log \det \mathbf{\Theta} \right] + \min_{\mathbf{\Phi}} \left[ \Omega( \mathbf{\Phi}) - \langle \mathbf{Z} \circ \mathbf{\Phi}, \mathbf{R} \rangle \right].
\end{align*}
For the first inner-minimization problem, first-order conditions $\overline{\mathbf{\Sigma}} + \mathbf{R} - \mathbf{\Theta}^{-1} = \mathbf{0}$ lead to the constraint $\overline{\mathbf{\Sigma}}+ \mathbf{R} \succ 0$ and the objective value is $p + \log \det (\overline{\mathbf{\Sigma}} + \mathbf{R})$. The second inner-minimization problem is almost the definition of the Fenchel conjugate:
\begin{align*}
\min_{\mathbf{\Phi}} \Omega( \mathbf{\Phi}) - \langle \mathbf{Z} \circ \mathbf{\Phi}, \mathbf{R} \rangle  &= - \max_{\mathbf{\Phi}} \langle \mathbf{\Phi}, \mathbf{Z} \circ \mathbf{R} \rangle - \Omega( \mathbf{\Phi}), \\
&= - \Omega^\star( \mathbf{Z} \circ \mathbf{R} )
\end{align*}
Hence, 
\begin{align*}
h(\mathbf{Z}) &= \max_{\mathbf{R}: \overline{\mathbf{\Sigma}}+ \mathbf{R} \succ \mathbf{0}} \: p + \log \det (\overline{\mathbf{\Sigma}} + \mathbf{R}) - \Omega^\star( \mathbf{Z} \circ \mathbf{R} ).
\end{align*}
\end{proof}
\paragraph{Remark:} Notice that we proved that {$\Tilde h(\mathbf{Z})$} could be written as point-wise maximum of \emph{concave} functions of $\mathbf{Z}$. Assumption \eqref{eq:A3} is needed to ensure that the function in the maximization is convex in $\mathbf{Z}$ at the same time. 
\subsection{Special Cases and Corollaries} \label{sec:A.dual.bounds} 
\subsubsection{No regularization} We first consider the unregularized case of \eqref{eq:h(z)} where $\forall \: \mathbf{\Phi}, \: \Omega(\mathbf{\Phi}) = 0$. Assumptions \eqref{eq:A1} and \eqref{eq:A2} are obviously satisfied. Moreover, for any $\mathbf{R}$,
\begin{align*}
\Omega^\star(\mathbf{R}) &= \sup_\mathbf{\Phi} \langle \mathbf{\Phi}, \mathbf{R} \rangle
= \begin{cases} 0 & \mbox{if } \mathbf{R} = \mathbf{0}, \\ +\infty &\mbox{otherwise.}
\end{cases}
\end{align*}
With the convention that $0 \times \infty = 0$, Assumption \eqref{eq:A3} is satisfied and Theorem \ref{thm:dualL0regularized} holds: 
\begin{align*}
{h}(\mathbf{Z})
&= \quad \max_{\mathbf{R}: \overline{\mathbf{\Sigma}}+ \mathbf{R} \succ \mathbf{0}} \:  p +\log \det (\overline{\mathbf{\Sigma}} + \mathbf{R}) - \langle \mathbf{Z} , \mathbf{\Omega}^\star ( \mathbf{R}) \rangle, \\
&=  \max_{\mathbf{R}: \overline{\mathbf{\Sigma}} + \mathbf{R} \succ \mathbf{0}} \:  p +\log \det (\overline{\mathbf{\Sigma}}+ \mathbf{R}) \quad \mbox{  s.t.  }  {Z}_{ij} {R}_{ij} = 0, \, \forall (i,j).
\end{align*}
In particular, this reformulation proves that ${h}(\mathbf{Z})$ is convex\footnote{Convexity of ${h}(\mathbf{Z})$ can also be proved from the primal formulation \eqref{eq:h(z)} directly. Take two matrices  $\mathbf{Z}_1$ and $ \mathbf{Z}_2$, $\lambda \in (0,1)$, $\mathbf{Z} := \lambda \mathbf{Z}_1 + (1-\lambda) \mathbf{Z}_2$, then it follows from the definition  \eqref{eq:h(z)} that $ h(\mathbf{Z}) \leqslant \lambda h(\mathbf{Z}_1) + (1-\lambda) h(\mathbf{Z}_2)$.}, but that the coordinates of its sub-gradient $-\mathbf{\Omega}^\star(\mathbf{R}^\star(\mathbf{Z}))$ are either $0$ or $-\infty$, hence uninformative. Note that the same conclusion is true for $\ell_1$-regularization.

From the proof of Theorem \ref{thm:dualL0regularized}, one can derive a lower bound on $ \| \mathbf{\Theta}^\star \|_\infty$ which will be useful for big-$M$ regularization. 
\begin{theorem}
The solution of \eqref{eq:BinForm2} satisfies $\| \mathbf{\Theta}^\star \|_\infty \geqslant \frac{p}{\| \overline{\mathbf{\Sigma}} \|_1}$
\end{theorem}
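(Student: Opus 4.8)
The plan is to exploit the stationarity (KKT) condition already derived in the proof of Theorem~\ref{thm:dualL0regularized}, namely that at the optimizer of the inner covariance selection problem one has $\overline{\mathbf{\Sigma}} + \mathbf{R}^\star - (\mathbf{\Theta}^\star)^{-1} = \mathbf{0}$, together with the elementary identity $\langle \mathbf{\Theta}^\star, (\mathbf{\Theta}^\star)^{-1}\rangle = \tr(\mathbf{I}_p) = p$. The entire argument reduces to rewriting this trace identity as an inner product of $\mathbf{\Theta}^\star$ against $\overline{\mathbf{\Sigma}}$ and then applying the $\ell_1/\ell_\infty$ duality (H\"older's inequality) on the matrix entries.

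Concretely, I would first record that, since $Z_{ii}=1$ for all $i$ and $\Theta^\star_{ij}=0$ whenever $Z_{ij}=0$, the precision matrix $\mathbf{\Theta}^\star$ and the residual $\mathbf{R}^\star$ have complementary supports: the stationarity condition forces $R^\star_{ij}=0$ on the support of $\mathbf{\Theta}^\star$ (in the unregularized, equivalently big-$M$, regime where the bound constraints are slack at the optimizer of \eqref{eq:L0MLE}), while $\Theta^\star_{ij}=0$ off that support. Consequently the cross term vanishes, $\langle \mathbf{\Theta}^\star, \mathbf{R}^\star\rangle = \sum_{i,j}\Theta^\star_{ij}R^\star_{ij}=0$.

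Next I would substitute $(\mathbf{\Theta}^\star)^{-1}=\overline{\mathbf{\Sigma}}+\mathbf{R}^\star$ into the trace identity to obtain
\[
p \;=\; \langle \mathbf{\Theta}^\star, (\mathbf{\Theta}^\star)^{-1}\rangle \;=\; \langle \mathbf{\Theta}^\star, \overline{\mathbf{\Sigma}}\rangle + \langle \mathbf{\Theta}^\star, \mathbf{R}^\star\rangle \;=\; \langle \mathbf{\Theta}^\star, \overline{\mathbf{\Sigma}}\rangle .
\]
Finally, H\"older's inequality applied entrywise gives $\langle \mathbf{\Theta}^\star, \overline{\mathbf{\Sigma}}\rangle \leq \|\mathbf{\Theta}^\star\|_\infty\,\|\overline{\mathbf{\Sigma}}\|_1$, where $\|\cdot\|_\infty$ and $\|\cdot\|_1$ are the entrywise sup- and $\ell_1$-norms. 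Combining the two relations yields $p \leq \|\mathbf{\Theta}^\star\|_\infty\,\|\overline{\mathbf{\Sigma}}\|_1$, which is the claimed bound after dividing by $\|\overline{\mathbf{\Sigma}}\|_1$.

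The main obstacle is the bookkeeping around the cross term $\langle \mathbf{\Theta}^\star, \mathbf{R}^\star\rangle$: one must be precise about which regularization regime is intended, since in the ridge case the analogous term is nonnegative rather than zero and the argument would break down. For the unregularized problem the term is exactly zero by the complementary-supports observation above; for big-$M$ it is zero because the box constraints are inactive at the optimizer of \eqref{eq:L0MLE}, so the active stationarity condition is again $\overline{\mathbf{\Sigma}}=(\mathbf{\Theta}^\star)^{-1}$ on the support. A fully self-contained alternative that sidesteps $\mathbf{R}^\star$ entirely is to invoke the covariance selection stationarity directly: $(\mathbf{\Theta}^\star)^{-1}_{ij}=\overline{\Sigma}_{ij}$ for every $(i,j)$ in the support, so $p=\tr\!\big(\mathbf{\Theta}^\star(\mathbf{\Theta}^\star)^{-1}\big)=\sum_{(i,j):\,Z_{ij}=1}\Theta^\star_{ij}\,\overline{\Sigma}_{ij}=\langle \mathbf{\Theta}^\star, \overline{\mathbf{\Sigma}}\rangle$, after which the same H\"older step closes the proof.
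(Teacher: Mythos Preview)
Your proposal is correct and follows essentially the same route as the paper: both use the KKT identity $(\mathbf{\Theta}^\star)^{-1}=\overline{\mathbf{\Sigma}}+\mathbf{R}^\star$ to obtain $\langle \overline{\mathbf{\Sigma}},\mathbf{\Theta}^\star\rangle=p$ and then conclude via H\"older. The paper states this identity without justification, whereas you make explicit the complementary-supports argument showing $\langle \mathbf{\Theta}^\star,\mathbf{R}^\star\rangle=0$; this is appropriate since the theorem sits in the unregularized subsection, where the dual constraint $Z_{ij}R_{ij}=0$ guarantees exactly that.
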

\begin{proof}
For a feasible support $\mathbf{Z}$, denote the optimal primal and dual variables $\mathbf{\Theta}^\star(\mathbf{Z})$ and $\mathbf{R}^\star(\mathbf{Z})$ respectively. There is no duality gap and KKT condition $\mathbf{\Theta}^\star(\mathbf{Z})^{-1}= \overline{\mathbf{\Sigma}} + \mathbf{R}^\star(\mathbf{Z})$ holds, so that $ \langle \overline{\mathbf{\Sigma}}, \mathbf{\Theta}^\star(\mathbf{Z}) \rangle = p$. From H\"older's inequality, we obtain the desired lower bound.
\end{proof}

\subsubsection{Big-$M$ regularization} For the big-$M$ regularization, 
\begin{align*}
\Omega(\mathbf{\Theta}) = \begin{cases} 0 & \mbox{ if  } |\Theta_{ij}| \leqslant M_{ij}, \\ +\infty & \mbox{ otherwise} \end{cases},
\end{align*}
is decomposable with $\Omega_{i,j}(\Theta_{ij}) = 0$ if $| \Theta_{ij} | \leqslant M_{ij}$, $+\infty$ otherwise. Assumptions \eqref{eq:A1} and \eqref{eq:A2} are satisfied. Moreover, for any $\mathbf{R}$,
\begin{align*}
\Omega^\star(\mathbf{R}) &= \sup_\mathbf{\Phi \, : \, \| \mathbf{\Phi}\|_\infty \leqslant \mathbf{M}} \langle \mathbf{\Phi}, \mathbf{R} \rangle
= \| \mathbf{M} \circ \mathbf{R} \|_1.
\end{align*}
In particular, for any binary matrix $\mathbf{Z}$, 
\begin{align*}
\Omega^\star(\mathbf{Z} \circ \mathbf{R}) = \sum_{i,j} | M_{ij} Z_{ij} R_{ij} | = \sum_{i,j} M_{ij} Z_{ij} | R_{ij} |, 
\end{align*}
so that Assumption \eqref{eq:A3} is satisfied with $\mathbf{\Omega}^\star(\mathbf{R}) = \left( M_{ij} |R_{ij}| \right)_{ij}$.

\subsubsection{Ridge regularization} For the $\ell_2^2$-regularization, 
\begin{align*}
\Omega(\mathbf{\Theta}) = \dfrac{1}{2\gamma} \| \mathbf{\Theta} \|_2^2,
\end{align*}
is decomposable with $\Omega_{i,j}(\Theta_{ij}) = \tfrac{1}{2\gamma} \Theta_{ij}^2$. Assumptions \eqref{eq:A1} and \eqref{eq:A2} are satisfied. Moreover, for any $\mathbf{R}$,
\begin{align*}
\Omega^\star(\mathbf{R}) &= \sup_\mathbf{\Phi} \langle \mathbf{\Phi}, \mathbf{R} \rangle - \dfrac{1}{2\gamma} \| \mathbf{\Phi} \|_2^2
= \dfrac{\gamma}{2} \| \mathbf{R} \|_2^2
\end{align*}
In particular, for any binary matrix $\mathbf{Z}$, 
\begin{align*}
\Omega^\star(\mathbf{Z} \circ \mathbf{R}) = \dfrac{\gamma}{2} \sum_{i,j}  ( Z_{ij} R_{ij} )^2 = \dfrac{\gamma}{2}\sum_{i,j} Z_{ij} R_{ij}^2, 
\end{align*}
since $Z_{ij}^2 = Z_{ij}$, so that Assumption \eqref{eq:A3} is satisfied with $\mathbf{\Omega}^\star(\mathbf{R}) = \left( \tfrac{\gamma}{2} R_{ij}^2 \right)_{ij}.$

Moreover, from the proof of Theorem \ref{thm:dualL0regularized}, one can connect the norm of $\mathbf{\Theta}^\star(\mathbf{Z})$ and $\gamma$.
\begin{theorem}
\label{thm:ridge.bound}
For any support $\mathbf{Z}$, the norm of the optimal precision matrix $\mathbf{\Theta}^\star(\mathbf{Z})$ is bounded by 
\begin{align*}
\dfrac{\gamma}{2} \| \overline{\mathbf{\Sigma}} \|_2 \left( \sqrt{1 + \dfrac{4 p}{\gamma \| \overline{\mathbf{\Sigma}}\|_2^2}} -1\right)\leqslant \| \mathbf{\Theta}^\star(\mathbf{Z}) \|_2  \leqslant \sqrt{p \gamma}. 
\end{align*}
\end{theorem}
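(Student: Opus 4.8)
The plan is to reduce both bounds to a single scalar identity linking $\|\mathbf{\Theta}^\star(\mathbf{Z})\|_2^2$, the inner product $\langle \overline{\mathbf{\Sigma}}, \mathbf{\Theta}^\star(\mathbf{Z})\rangle$, and the dimension $p$. First I would write down the first-order optimality condition for the ridge-regularized covariance selection problem defining $\tilde h(\mathbf{Z})$. The feasible set $\{\mathbf{\Theta}: \Theta_{ij}=0 \text{ whenever } Z_{ij}=0\}$ is a linear subspace, and since the objective $f(\mathbf{\Theta}) = \langle \overline{\mathbf{\Sigma}},\mathbf{\Theta}\rangle - \log\det\mathbf{\Theta} + \tfrac{1}{2\gamma}\|\mathbf{\Theta}\|_2^2$ is smooth and strictly convex on the positive definite cone and blows up at its boundary, the minimizer $\mathbf{\Theta}^\star$ lies in the interior and its gradient $\nabla f(\mathbf{\Theta}^\star) = \overline{\mathbf{\Sigma}} - (\mathbf{\Theta}^\star)^{-1} + \tfrac{1}{\gamma}\mathbf{\Theta}^\star$ must be orthogonal to that subspace, i.e.\ vanish on every coordinate $(i,j)$ with $Z_{ij}=1$.

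The key step is to take the trace inner product of this stationarity relation with $\mathbf{\Theta}^\star$ itself. Because $\Theta^\star_{ij}=0$ off the support while $(\nabla f)_{ij}=0$ on the support, the product $\langle\nabla f(\mathbf{\Theta}^\star),\mathbf{\Theta}^\star\rangle$ vanishes term by term. Using $\langle(\mathbf{\Theta}^\star)^{-1},\mathbf{\Theta}^\star\rangle = \tr(\mathbf{I}_p)=p$, this collapses to the identity $\tfrac{1}{\gamma}\|\mathbf{\Theta}^\star\|_2^2 = p - \langle\overline{\mathbf{\Sigma}},\mathbf{\Theta}^\star\rangle$. Equivalently, one can reach the same identity from the KKT relation $\overline{\mathbf{\Sigma}}+\mathbf{R}^\star = (\mathbf{\Theta}^\star)^{-1}$ established in the proof of Theorem \ref{thm:dualL0regularized} together with the correspondence $\Theta^\star_{ij}=\gamma Z_{ij}R^\star_{ij}$ read off from the Fenchel step there.

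The upper bound is then immediate: since $\overline{\mathbf{\Sigma}}\succeq\mathbf{0}$ and $\mathbf{\Theta}^\star\succ\mathbf{0}$ we have $\langle\overline{\mathbf{\Sigma}},\mathbf{\Theta}^\star\rangle\geq 0$, so $\tfrac{1}{\gamma}\|\mathbf{\Theta}^\star\|_2^2\leq p$, giving $\|\mathbf{\Theta}^\star\|_2\leq\sqrt{p\gamma}$. For the lower bound I would instead control the inner product from above by Cauchy--Schwarz for the trace inner product, $\langle\overline{\mathbf{\Sigma}},\mathbf{\Theta}^\star\rangle\leq\|\overline{\mathbf{\Sigma}}\|_2\,\|\mathbf{\Theta}^\star\|_2$. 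Substituting into the identity and writing $x=\|\mathbf{\Theta}^\star\|_2$ and $s=\|\overline{\mathbf{\Sigma}}\|_2$ yields the quadratic inequality $x^2 + \gamma s\, x - \gamma p\geq 0$; since $x\geq 0$ exceeds the (unique) negative root, it must be at least the positive root, which is exactly $\tfrac{\gamma}{2}s\big(\sqrt{1+4p/(\gamma s^2)}-1\big)$, the claimed lower bound.

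I do not anticipate a genuine obstacle. The only points requiring care are justifying that stationarity holds solely on the support and that this already forces $\langle\nabla f(\mathbf{\Theta}^\star),\mathbf{\Theta}^\star\rangle=0$ (which holds because $\mathbf{\Theta}^\star$ itself lies in the support subspace), and tracking signs in the final quadratic so that the \emph{positive} root is selected.
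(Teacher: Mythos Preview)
Your proposal is correct and follows essentially the same route as the paper: both arguments hinge on the scalar identity $\langle \overline{\mathbf{\Sigma}}, \mathbf{\Theta}^\star\rangle + \tfrac{1}{\gamma}\|\mathbf{\Theta}^\star\|_2^2 = p$, then use nonnegativity of $\langle \overline{\mathbf{\Sigma}}, \mathbf{\Theta}^\star\rangle$ for the upper bound and Cauchy--Schwarz plus the resulting quadratic for the lower bound. The only cosmetic difference is that the paper derives the identity from the zero duality gap and the KKT relations $\overline{\mathbf{\Sigma}}+\mathbf{R}^\star=(\mathbf{\Theta}^\star)^{-1}$, $\mathbf{\Phi}^\star=\gamma\,\mathbf{Z}\circ\mathbf{R}^\star$, whereas you obtain it (equivalently, and a bit more directly) by pairing the primal stationarity condition with $\mathbf{\Theta}^\star$ itself.
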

\begin{proof}
There is no duality gap:
\begin{align*}
\langle \overline{\mathbf{\Sigma}}, \mathbf{\Theta}^\star(\mathbf{Z}) \rangle - \log \det \mathbf{\Theta}^\star(\mathbf{Z})  + \dfrac{1}{2\gamma} \| \mathbf{\Phi}^\star(\mathbf{Z})  \|_2^2 = p + \log \det ( \overline{\mathbf{\Sigma}} + \mathbf{R}^\star(\mathbf{Z}) ) +\dfrac{\gamma}{2} \| \mathbf{Z} \circ \mathbf{R}^\star(\mathbf{Z}) \|_2^2.
\end{align*}
In addition, the following KKT conditions hold
\begin{align*}
\mathbf{\Theta}^\star ( \mathbf{Z})^{-1} &= \overline{\mathbf{\Sigma}} + \mathbf{R}^\star(\mathbf{Z}), \\
\mathbf{\Phi}^\star ( \mathbf{Z}) &= \gamma \mathbf{Z} \circ \mathbf{R}^\star ( \mathbf{Z}),
\end{align*}
where the second condition follows from the inner minimization problem defining $\Omega^\star$. All in all, we have
\begin{align*}
\langle \overline{\mathbf{\Sigma}}, \mathbf{\Theta}^\star(\mathbf{Z}) \rangle + \dfrac{1}{\gamma} \| \mathbf{\Phi}^\star(\mathbf{Z})  \|_2^2 = p.
\end{align*}
Since $\mathbf{\Sigma}$ and $\mathbf{\Theta}^\star(\mathbf{Z})$ are semi-definite positive matrices, $\langle \overline{\mathbf{\Sigma}}, \mathbf{\Theta}^\star(\mathbf{Z}) \rangle \geqslant 0$. Hence, $$ \| \mathbf{\Phi}^\star(\mathbf{Z})  \|_2 \leqslant \sqrt{p \gamma}.$$
To obtain the lower bound, we apply Cauchy-Schwartz inequality $\langle \overline{\mathbf{\Sigma}}, \mathbf{\Theta}^\star(\mathbf{Z}) \rangle \leqslant \| \overline{\mathbf{\Sigma}} \|_2 \| \mathbf{\Theta}^\star(\mathbf{Z}) \|_2$ and solve the quadratic equation 
$$ \dfrac{1}{\gamma} \| \mathbf{\Phi}^\star(\mathbf{Z})  \|_2^2 + \| \overline{\mathbf{\Sigma}} \|_2 \| \mathbf{\Theta}^\star(\mathbf{Z}) \|_2 - p \geqslant 0.$$
\end{proof}
In particular, the lower bound in Theorem \ref{thm:ridge.bound} is controlled by the factor $\tfrac{4 p}{\gamma \| \overline{\mathbf{\Sigma}} \|_2^2}$, suggesting an appropriate scaling of $\gamma$ to start a grid search with. 

\section{An optimization approach for finding big-$M$ values} \label{sec:A.bigM}
In this section, we present a method for obtaining suitable constants $\mathbf{M}$. The approach involves solving two optimization problems for each off-diagonal entry of the matrix being estimated. The problems provide lower and upper bounds for each entry of the optimal solution. First we present the problems, then we discuss how they are solved.

\subsection{Bound Optimization Problems} 
Let $\hat{\mathbf{\Theta}}$ be a feasible solution for \eqref{eq:L0MLE} and define, 
$$u := \langle \hat{\mathbf{\Theta}}, \overline{\mathbf{\Sigma}} \rangle - \log \det \hat{\mathbf{\Theta}}.$$
A simple way to obtain lower bounds for the $ij$th entry of the optimal solution is to solve
\begin{alignat}{2} 
\begin{aligned} 
& \min_{\mathbf{\Theta} \succ \mathbf{0}} \quad && \Theta_{ij}  \label{eq:bigm.lb}  \\
&\text{ s.t. } && \langle \overline{\mathbf{\Sigma}}, \mathbf{ \Theta } \rangle - \log \det \mathbf{\Theta} \leqslant u. 
\end{aligned} 
\end{alignat}
Likewise, to obtain upper bounds we solve
\begin{alignat}{2} 
\begin{aligned} 
& \max_{\mathbf{\Theta} \succ \mathbf{0}} \quad && \Theta_{ij}  \label{eq:bigm.ub}  \\
&\text{ s.t. } && \langle \overline{\mathbf{\Sigma}}, \mathbf{ \Theta } \rangle - \log \det \mathbf{\Theta} \leqslant u. 
\end{aligned} 
\end{alignat}

Note that it is sufficient to find a feasible solution $\hat{\mathbf{\Theta}}$ to formulate \eqref{eq:bigm.lb} and \eqref{eq:bigm.ub}, and a feasible solution with a smaller value leads to better bounds. \\

\subsection{Solution Approach} 
We describe the approach for the lower bound Problem \eqref{eq:bigm.lb} only, the upper bound Problem \eqref{eq:bigm.ub} being similar. 

{First, we make the additional assumption that $\overline{\mathbf{\Sigma}}$ is invertible. We know this assumption cannot hold in the high dimensional setting where $p > n$. Numerically, one can always argue that the lowest eigenvalues of $\overline{\mathbf{\Sigma}}$ {are} never exactly equal to zero but should be strictly positive. In this case however, these eigenvalues should be small and close to machine precision, making matrix inversion very unstable. Note that this extra assumption is required for problems \eqref{eq:bigm.lb} and \eqref{eq:bigm.ub} to be bounded.}

Problem \eqref{eq:bigm.lb} is a semidefinite optimization problem and there are $\sfrac{p(p+1)}{2}$ entries to bound so it is necessary to efficiently solve \eqref{eq:bigm.lb} and avoid solving so many SDPs. Instead, one can solve the dual of \eqref{eq:bigm.lb} very efficiently. Note an advantage for considering the dual is we do not need to solve the problem to optimality to obtain a valid bound. Using basic arguments from convex duality theory similar to the ones invoked in Section \ref{sec:A.dual.proof}, the dual problem for \eqref{eq:bigm.lb} writes 
\begin{equation} \label{eq:bigm.lb.dual} 
\max_{\lambda > 0} \left \{ \lambda \left( p - u + \log \det \left(\frac{1}{2 \lambda} (e_i e_j^T + e_j e_i^T) + \overline{\mathbf{\Sigma}} \right) \right) \right \}
\end{equation}
Computationally, problem (\ref{eq:bigm.lb.dual}) is easier to solve because it is a convex optimization problem with a scalar decision variable $\lambda$.

{Denote $g(\lambda)$} the objective function in the dual Problem \eqref{eq:bigm.lb.dual}. Algebraic manipulations yield
\begin{align*}
g(\lambda) &:= \lambda \left[ p - u + \log \det \left(\frac{1}{2 \lambda} (e_i e_j^T + e_j e_i^T) + \overline{\mathbf{\Sigma}} \right) \right], \\
&= \lambda \left[ p - u + \log \det (\overline{\mathbf{\Sigma}}) + \log  \left(1 + \dfrac{\Theta_{ij} }{\lambda} + \dfrac{\Theta_{ij}^2 - \Theta_{ii} \Theta_{jj}  }{4 \lambda^2} \right) \right],
\end{align*}
where $\mathbf{\Theta} = \overline{\mathbf{\Sigma}}^{-1}$. We can then easily derive the first and second derivatives of $g$ and apply Newton's method to solve Problem \eqref{eq:bigm.lb.dual}.

\section{Additional material on computational performance of the cutting-plane algorithm} \label{sec:A.comptime}
In this section, we consider the runtime of the cutting-plane algorithm on synthetic problems as in Section \ref{sec:exp.synth}.{ In Section \ref{sec:exp.synth.regk}, we illustrated how the regularization parameter $M$ or $\gamma$ can impact the convergence of the cutting-plane algorithm, so we focus in this section on the impact of the problem sizes $n$, $p$ and $k$.}

In particular, we study the time needed by the algorithm to find the optimal solution (opt-time) and to verify the solution's optimality (ver-time), as well as the number of cuts required (laz-cons). We carry out all experiments by generating 10 instances of synthetic data\footnote{For each instance, we generate a sparse precision matrix $\mathbf{\Theta}_0$ as in Section \ref{sec:exp.synth} and $n$ samples from the corresponding multivariate normal distribution} for $(p,k_{true}) \in \{30,50,80,120,200\} \times \{5,10\}$ and different values of $n$. We solve each instance of \eqref{eq:BinForm2} with big-$M$ regularization for $k = k_{true}$, $M=0.5$ and report average performance in Table \ref{tab:A.runtimes}. These computations are performed on 4 Intel E5-2690 v4 2.6 GHz CPUs (14 cores per CPU, no hyper threading) with 16GB of RAM in total. {We chose to fix the value of $M=0.5$ in order to isolate the impact of $p$, $k$ and $n$ on computational time, the specific value $0.5$ being informed by the knowledge of the ground truth}.

In general the algorithm provides an optimal solution in a matter of seconds, and a certificate of optimality in seconds or minutes even for $p$ in the $100$s. Optimal verification occurs significantly quicker when the sample size $n$ is larger because the sparsity pattern of the underlying matrix is easier to recover. However, we note that finding the optimal solution is not as {affected} by the sample size $n$. As $p$ or $k$ increase, optimal detection also does not significantly change, but optimal verification generally becomes significantly harder. Similar observations have been made for mixed-integer formulations of the best subset selection problem in linear and logistic regression \cite{bertsimas2016best}. We also observe that changes in $k$ have a more substantial impact on the runtime than changes in $n$ or $p$, especially when $p$ is large. {Finally, Meinshausen and B{\"u}hlmann's approximation is used as a warm-start and we observe that is often optimal, especially when $n/p$ is large.}

Thus, the cutting-plane algorithm in general provides an optimal or near-optimal solution fast, but optimal verification strongly depends on $p$, $k$, and $n$. Nonetheless, we observe that optimality of solutions can be verified for $p$ in the $100$s and $k$ in the $10$s in a matter of minutes. 

\begin{table}[h] 
\centering
\begin{tabular}{| c | c | c | c c c c |}
\toprule
$p$ & $k_{true}$ & $n$ & ver-time & opt-time & cut-time & laz-cons   \\
\midrule
\multirow{3}{2em}{\centering 30} & \multirow{3}{2em}{\centering 5} & $200$ & $2.37$ ($2.13$) & $0.0$ ($0.0$) & $1.95$ ($1.74$) & $28$ ($17.9$) \\
& & $150$ & $6.33$ ($7.34$) & $0.0$ ($0.0$) & $2.71$ ($3.14$) & $55$ ($55.8$) \\
& & $100$ & $30.7$ ($47.96$) & $0.0$ ($0.0$) & $14.46$ ($28.55$) & $258$ ($472.6$) \\
\hline 
\multirow{3}{2em}{\centering 30} & \multirow{3}{2em}{\centering 10} & $300$ & $31.11$ ($23.31$) & $5.05$ ($10.69$) & $14.32$ ($9.91$) & $265$ ($176.6$) \\
& & $250$ & $35.13$ ($28.89$) & $11.2$ ($13.13$) & $19.93$ ($14.91$) & $296$ ($204.8$) \\
& & $200$ & $33.7$ ($24.23$) & $7.75$ ($12.34$) & $15.35$ ($11.15$) & $290$ ($196.5$) \\
\midrule 
\multirow{3}{2em}{\centering 50} & \multirow{3}{2em}{\centering 5} & $200$ & $9.59$ ($9.06$) & $0.0$ ($0.0$) & $5.23$ ($3.66$) & $42$ ($25.2$) \\
& & $150$ & $29.43$ ($20.28$) & $0.0$ ($0.0$) & $18.49$ ($12.98$) & $153$ ($107.0$) \\
& & $100$ & $183.7$ ($243.73$) & $0.0$ ($0.0$) & $99.36$ ($118.0$) & $788$ ($937.8$)\\
\hline 
\multirow{3}{2em}{\centering 50} & \multirow{3}{2em}{\centering 10} & $300$ & $24.19$ ($20.29$) & $0.0$ ($0.0$) & $12.57$ ($10.37$) & $98$ ($80.8$) \\
& & $250$ & $31.37$ ($18.48$) & $0.0$ ($0.0$) & $15.2$ ($9.46$) & $122$ ($77.8$) \\
& & $200$ & $40.38$ ($29.27$) & $0.55$ ($1.73$) & $26.14$ ($19.14$) & $210$ ($149.1$) \\
\midrule 
\multirow{3}{2em}{\centering 80} & \multirow{3}{2em}{\centering 5} & $200$ & $70.12$ ($106.16$) & $0.0$ ($0.0$) & $51.56$ ($80.18$) & $154$ ($212.2$) \\
& & $150$ & $179.76$ ($175.22$) & $0.0$ ($0.0$) & $127.19$ ($110.85$) & $404$ ($348.3$) \\
& & $100$ & $988.9$ ($763.05$) & $0.0$ ($0.0$) & $482.83$ ($277.33$) & $1581$ ($990.9$) \\
\hline 
\multirow{3}{2em}{\centering 80} & \multirow{3}{2em}{\centering 10} & $300$ & $37.83$ ($9.17$) & $0.0$ ($0.0$) & $30.33$ ($10.11$) & $85$ ($25.2$) \\
& & $250$ & $71.4$ ($24.51$) & $0.0$ ($0.0$) & $47.06$ ($13.24$) & $139$ ($36.3$) \\
& & $200$ & $161.8$ ($74.35$) & $9.87$ ($31.2$) & $105.48$ ($41.14$) & $309$ ($121.6$)\\
\midrule 
\multirow{3}{2em}{\centering 120} & \multirow{3}{2em}{\centering 5} & $200$ & $152.54$ ($113.42$) & $34.89$ ($110.34$) & $119.24$ ($99.43$) & $170$ ($108.9$)  \\
& & $150$ & $713.45$ ($712.74$) & $251.25$ ($543.17$) & $480.18$ ($407.96$) & $740$ ($648.4$) \\
& & $100$ & $1793.67$ ($445.58$) & $646.84$ ($827.53$) & $1135.33$ ($320.83$) & $1671$ ($412.7$) \\
\hline 
\multirow{3}{2em}{\centering 120} & \multirow{3}{2em}{\centering 10} & $300$ & $238.7$ ($150.61$) & $0.0$ ($0.0$) & $172.75$ ($99.92$) & $224$ ($116.4$) \\
& & $250$ & $704.43$ ($568.93$) & $0.0$ ($0.0$) & $396.44$ ($238.16$) & $560$ ($348.5$) \\
& & $200$ & $1379.58$ ($666.52$) & $0.0$ ($0.0$) & $675.81$ ($248.96$) & $909$ ($393.1$)\\
\midrule
\multirow{3}{2em}{\centering 200} & \multirow{3}{2em}{\centering 5} & $200$ & $858.4$ ($770.03$) & $418.1$ ($496.15$) & $662.22$ ($567.77$) & $398$ ($335.0$) \\
& & $150$ & $1453.51$ ($614.68$) & $515.58$ ($548.82$) & $1023.24$ ($380.82$) & $723$ ($271.4$) \\
& & $100$ & $2000.28$ ($0.42$) & $917.42$ ($596.49$) & $1427.69$ ($139.69$) & $1024$ ($90.6$) \\
\hline 
\multirow{3}{2em}{\centering 200} & \multirow{3}{2em}{\centering 10} & $300$ & $934.55$ ($428.66$) & $337.16$ ($442.36$) & $646.12$ ($255.69$) & $368$ ($141.1$) \\
& & $250$ & $1792.1$ ($353.35$) & $354.84$ ($362.0$) & $1062.81$ ($205.64$) & $657$ ($167.6$) \\
& & $200$ & $2000.47$ ($0.9$) & $571.71$ ($571.04$) & $1198.26$ ($109.66$) & $763$ ($104.5$)
 \\
\bottomrule 
\end{tabular}
\caption{Average performance on instances of synthetic data with $k = k_{true}$. All problems are solved to a tolerance gap of $10^{-4}$, where the tolerance gap is the percentage difference between the final lower and upper bounds. Title ver-time and opt-time refer to the time (in seconds) it takes to verify optimality and to find the optimal solution respectively, cut-time refers to the amount of time spent solving the separation problems, and laz-cons refers to the number of lazy constraints generated. We report average time over $10$ random instances (and standard deviation).}
\label{tab:A.runtimes}
\end{table}

\section{Additional comparisons on statistical performance} \label{sec:A.statperf}
We report here additional results from the experiments conducted in Section \ref{sec:exp.synth}. 
\subsection{Comparisons for varying sample sizes $n/p$}

 \begin{figure}[H]
\centering
\begin{subfigure}[t]{.4\linewidth}
	\centering
	\includegraphics[width=\linewidth]{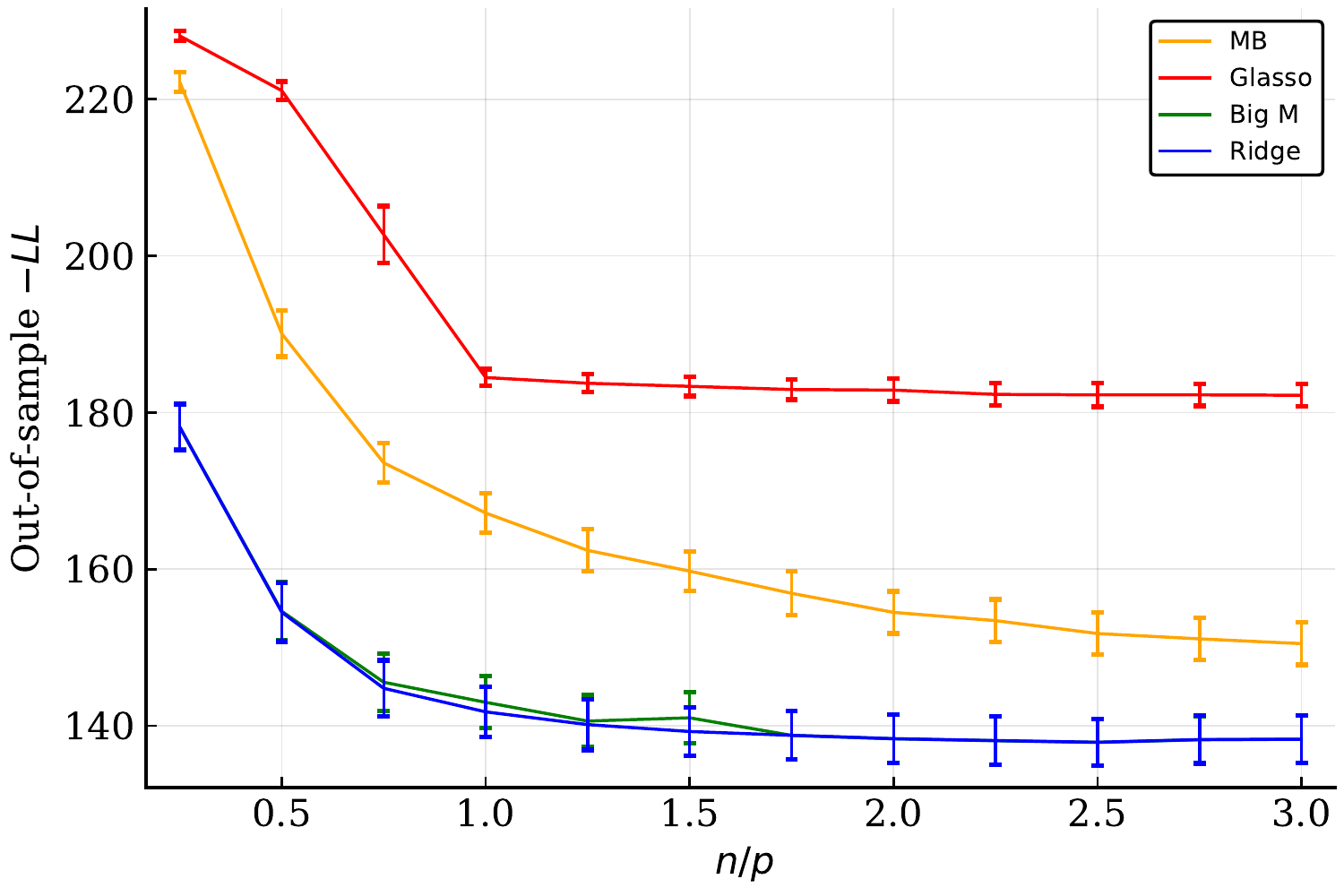}
	\caption{$BIC_{1/2}$ as a CV criterion.}
\end{subfigure} %
~
\begin{subfigure}[t]{.4\linewidth}
	\centering
	\includegraphics[width=\linewidth]{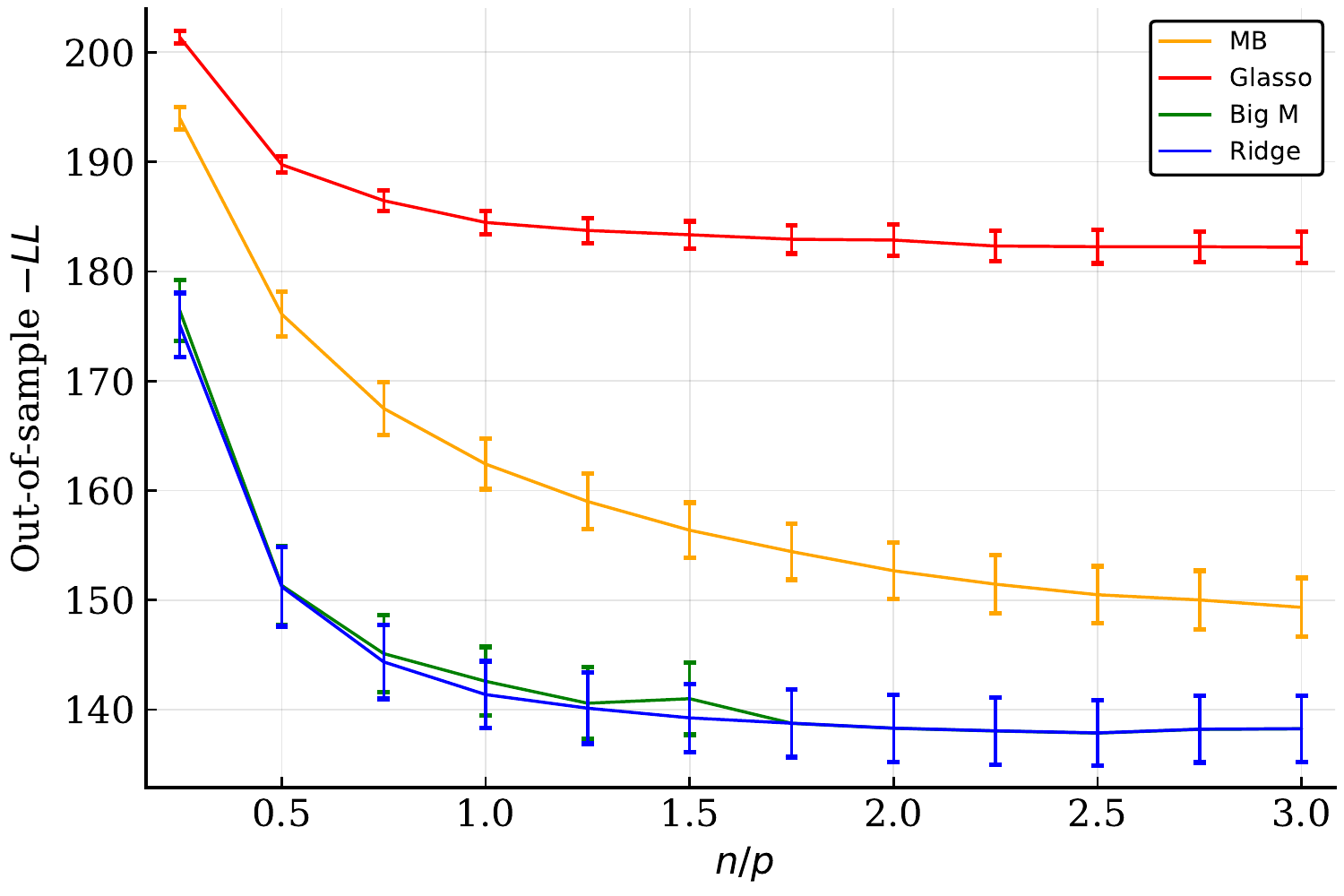}
	\caption{$-LL$ as a CV criterion.}
\end{subfigure}
\caption{Impact of the number of samples $n/p$ on out-of-sample negative log-likelihood. Results are averaged over $10$ instances with $p=200$, $t=1\%$.}
\label{fig:scale.n.loglik}
\end{figure}
\begin{figure}[H]
\centering
\begin{subfigure}[t]{.4\linewidth}
	\centering
	\includegraphics[width=\linewidth]{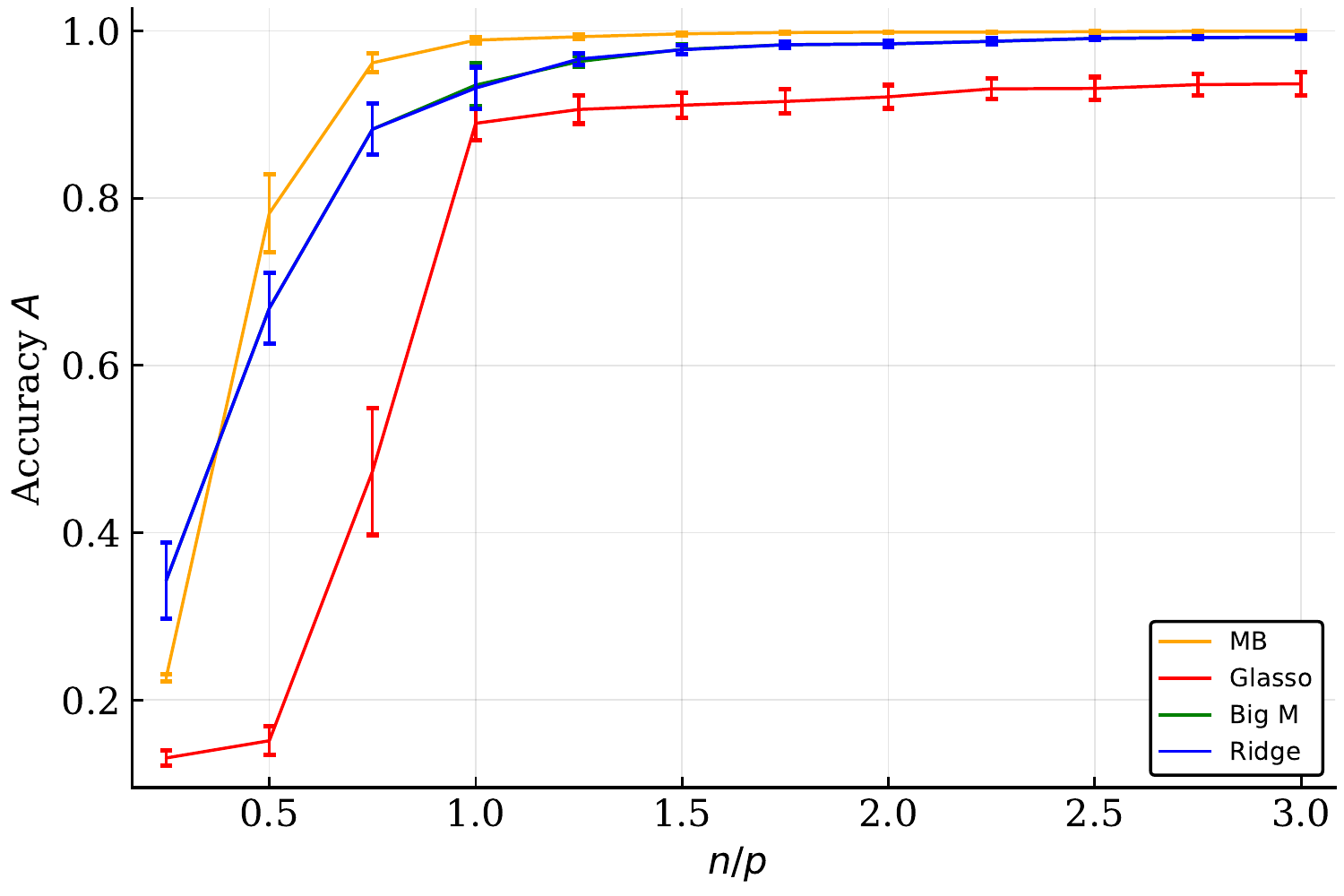}
	\caption{Accuracy $A$ vs. $n/p$.}
\end{subfigure} %
~
\begin{subfigure}[t]{.4\linewidth}
	\centering
	\includegraphics[width=\linewidth]{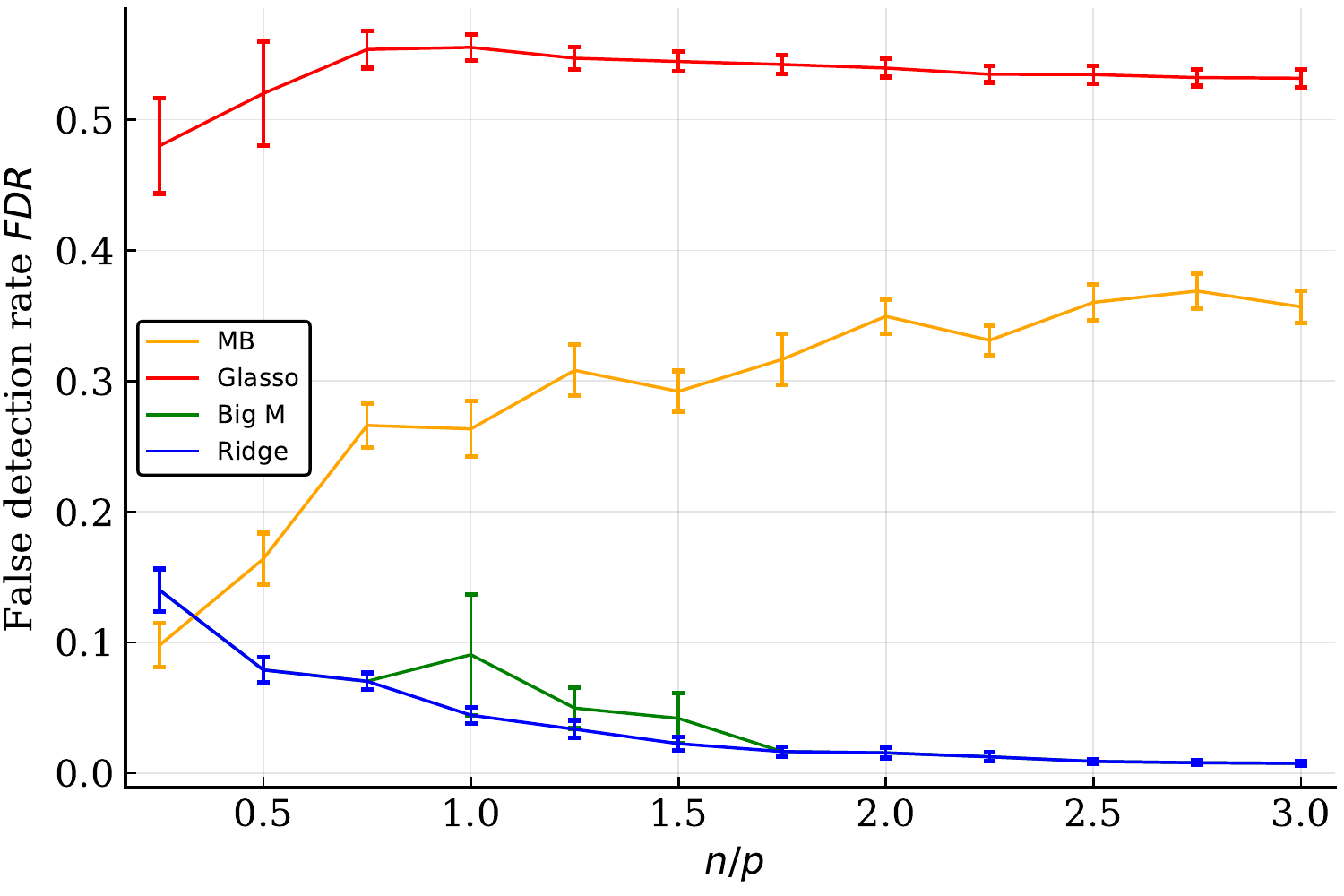}
	\caption{False detection rate $FDR$ vs. $n/p$.}
\end{subfigure}
\caption{Impact of the number of samples $n/p$ on support recovery. Results are averaged over $10$ instances with $p=200$, $t=1\%$. Hyper-parameters are tuned using the $BIC_{1/2}$ criterion.}
\label{fig:scale.n.ebic}
\end{figure}

\subsection{Comparisons for varying sparsity levels $t$}

 \begin{figure}[H]
\centering
\begin{subfigure}[t]{.4\linewidth}
	\centering
	\includegraphics[width=\linewidth]{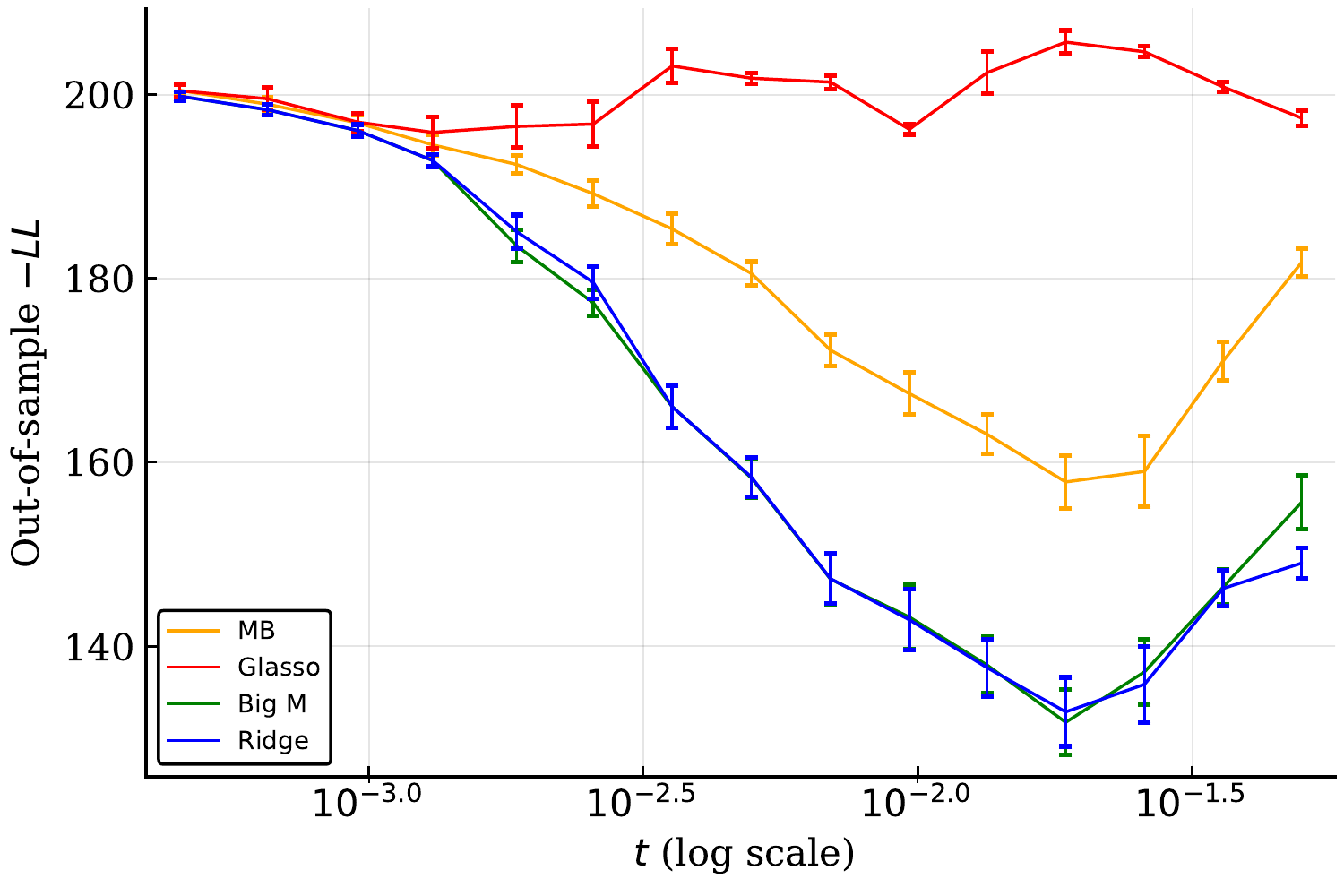}
	\caption{$BIC_{1/2}$ as a CV criterion.}
\end{subfigure} %
~
\begin{subfigure}[t]{.4\linewidth}
	\centering
	\includegraphics[width=\linewidth]{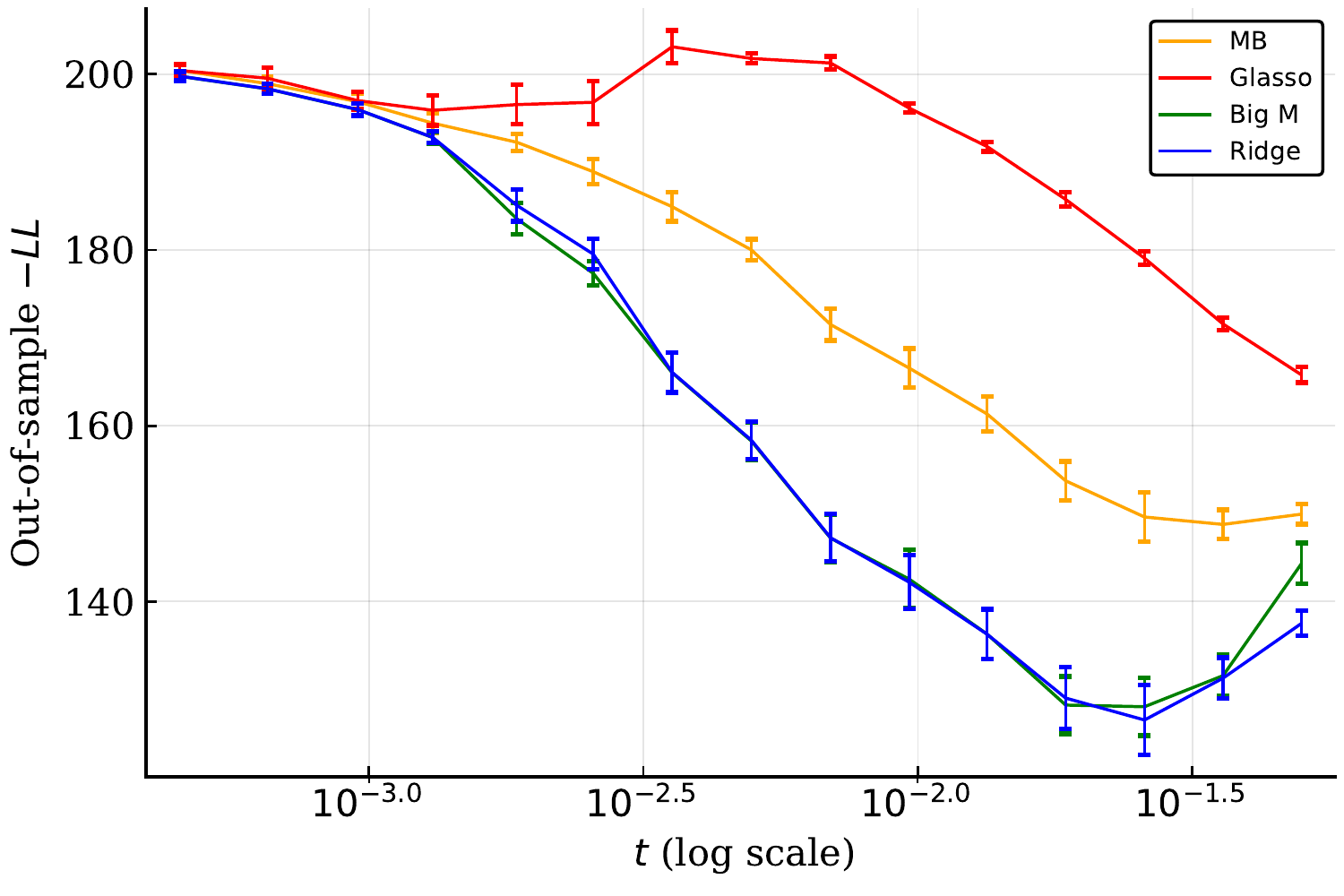}
	\caption{$-LL$ as a CV criterion.}
\end{subfigure}
\caption{Impact of the  sparsity level $t$  on out-of-sample negative log-likelihood. Results are averaged over $10$ instances with $p=200$, $n=p$.}
\label{fig:scale.t.loglik}
\end{figure}
\begin{figure}[H]
\centering
\begin{subfigure}[t]{.4\linewidth}
	\centering
	\includegraphics[width=\linewidth]{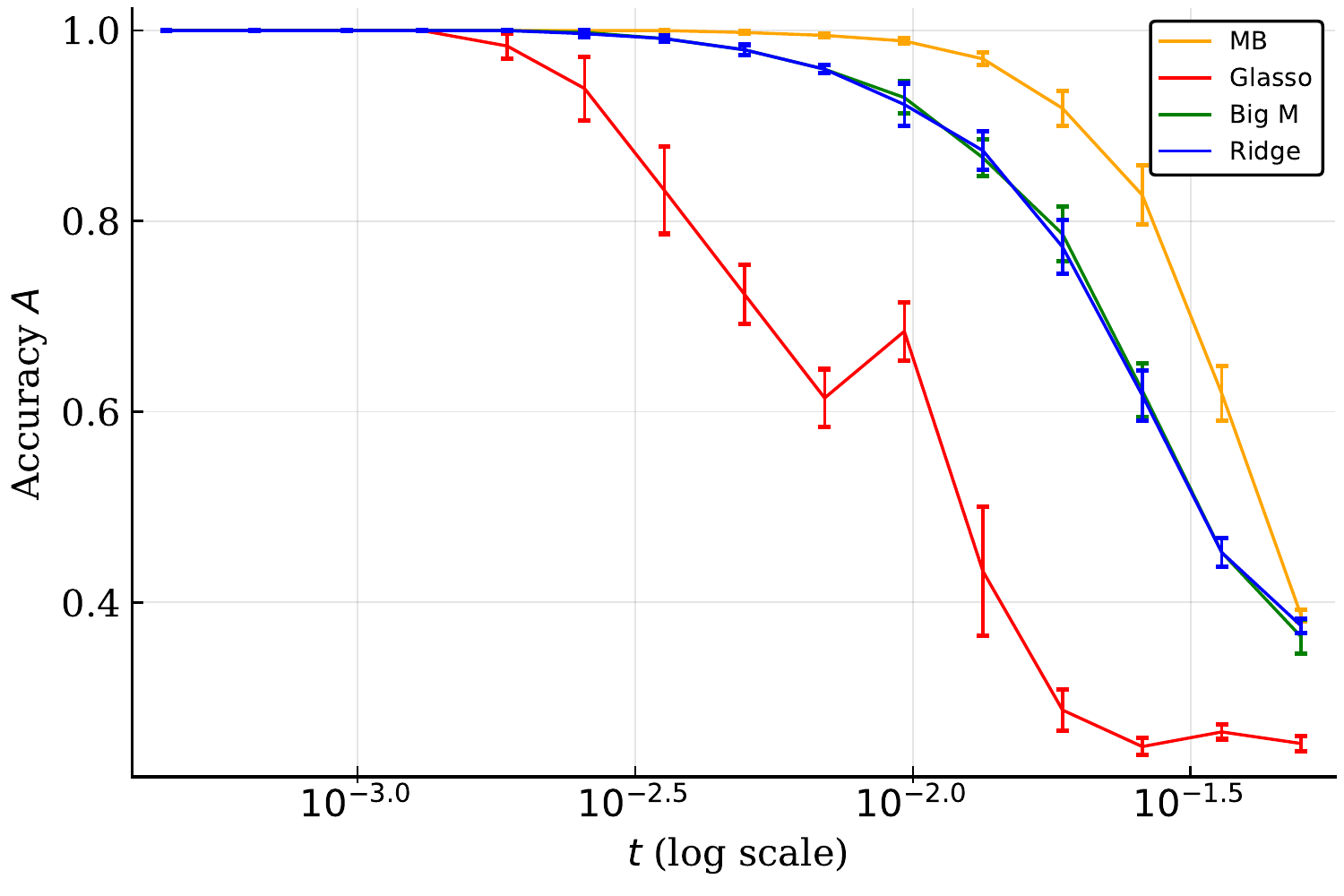}
	\caption{Accuracy $A$ vs. $t$.}
\end{subfigure} %
~
\begin{subfigure}[t]{.4\linewidth}
	\centering
	\includegraphics[width=\linewidth]{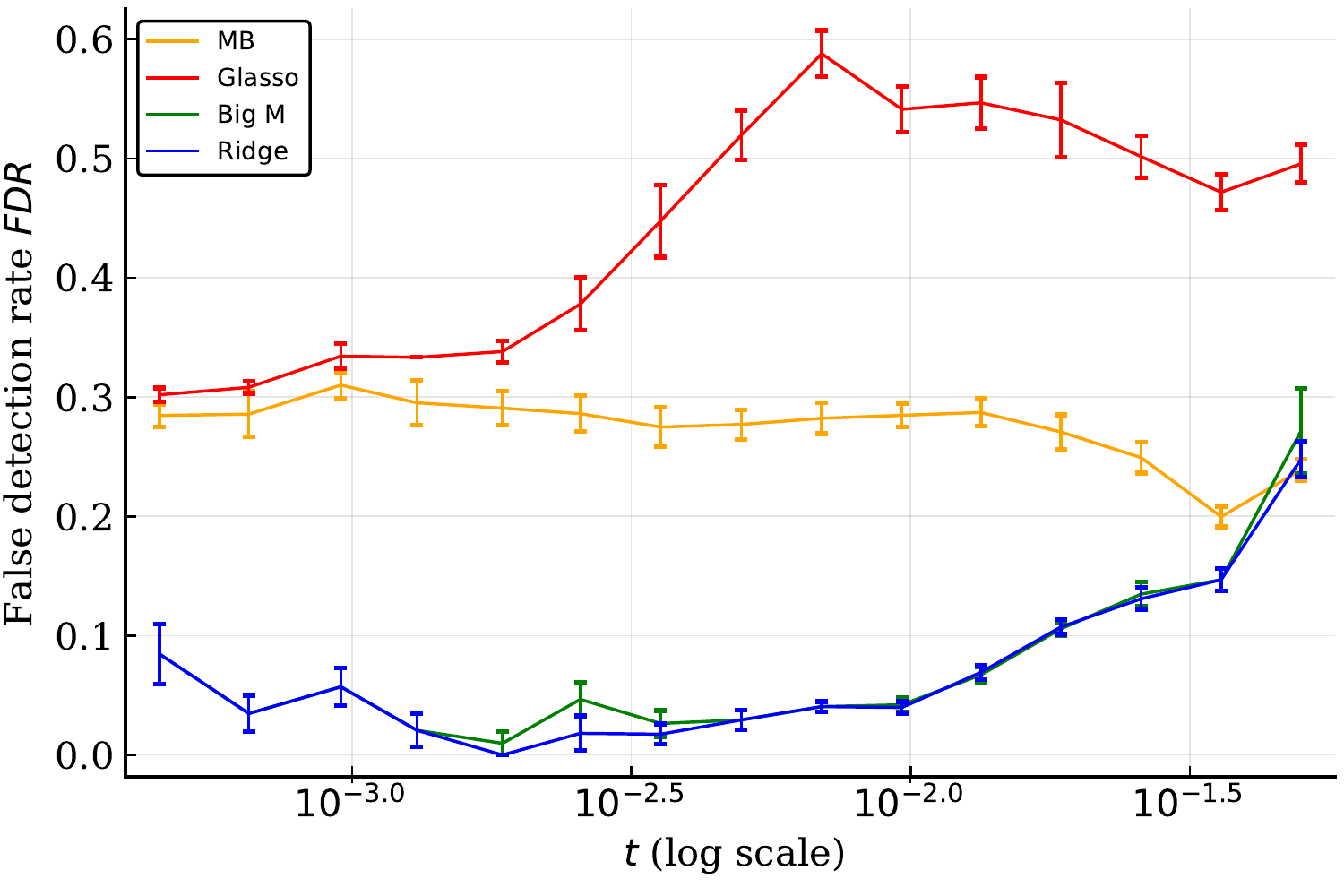}
	\caption{False detection rate $FDR$ vs. $t$.}
\end{subfigure}
\caption{Impact of the sparsity level $t$ on support recovery. Results are averaged over $10$ instances with $p=200$, $n=p$. Hyper-parameters are tuned using the $BIC_{1/2}$ criterion.}
\label{fig:scale.t.ebic}
\end{figure}

\subsection{Comparisons for varying dimensions $p$}

\begin{figure}[H]
\centering
\begin{subfigure}[t]{.4\linewidth}
	\centering
	\includegraphics[width=\linewidth]{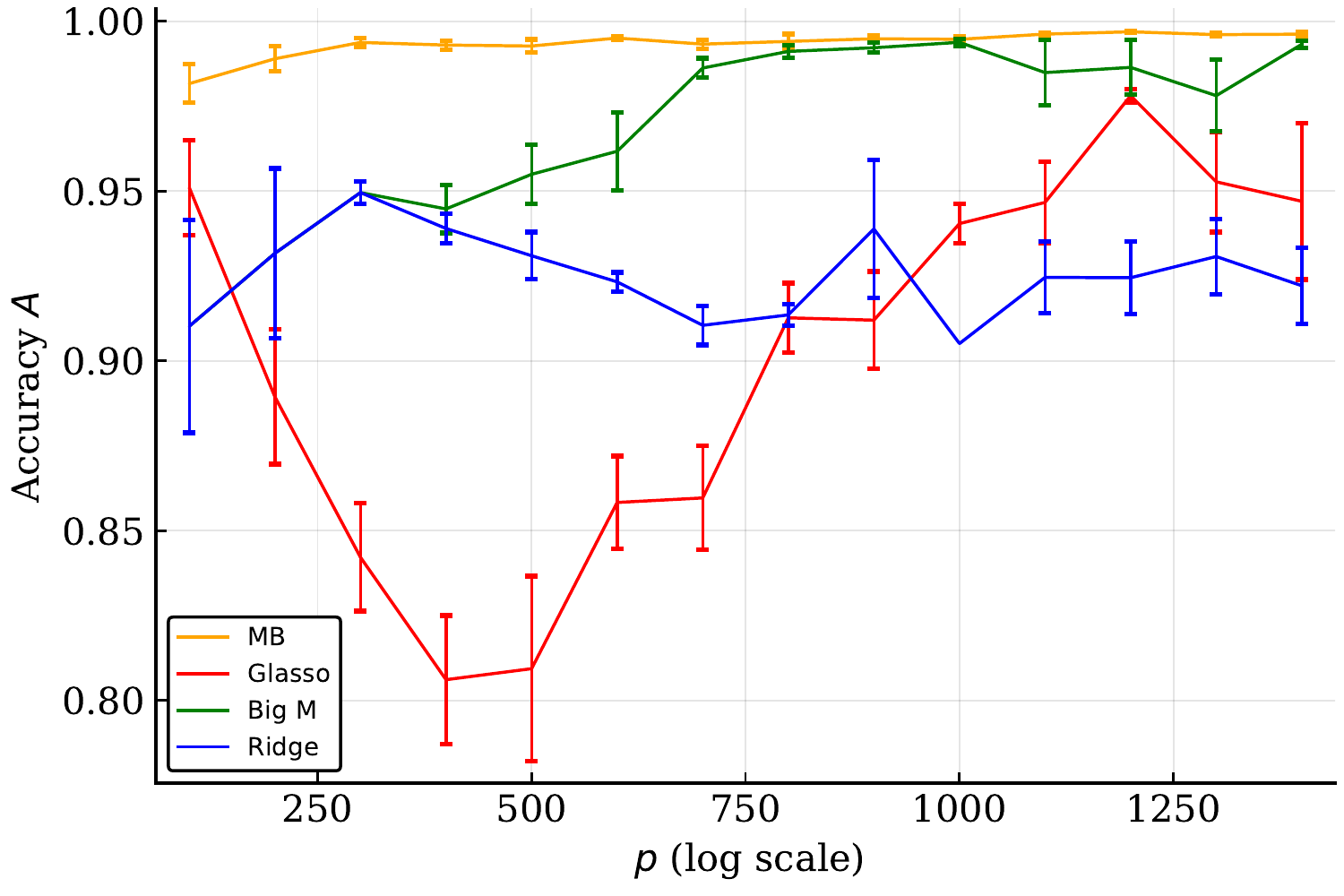}
	\caption{Accuracy $A$ vs. $p$.}
\end{subfigure} %
~
\begin{subfigure}[t]{.4\linewidth}
	\centering
	\includegraphics[width=\linewidth]{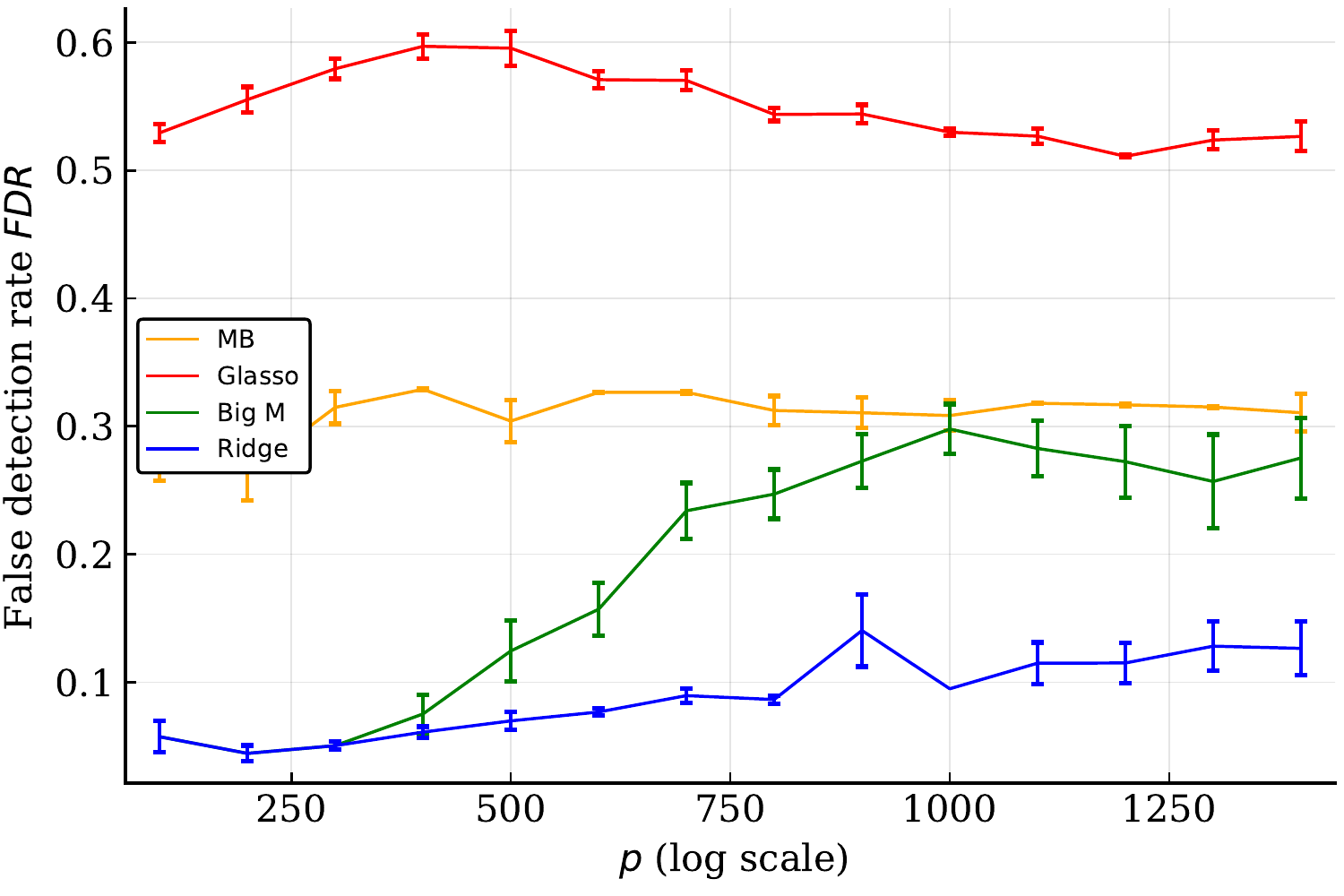}
	\caption{False detection rate $FDR$ vs. $p$.}
\end{subfigure}
\caption{Impact of the dimension $p$ on support recovery. Results are averaged over $10$ instances with $n=p$, $t=1\%$. Hyper-parameters are tuned using $-LL$.}
\label{fig:scale.p.ll}
\end{figure}
\begin{figure}[H]
\centering
\begin{subfigure}[t]{.4\linewidth}
	\centering
	\includegraphics[width=\linewidth]{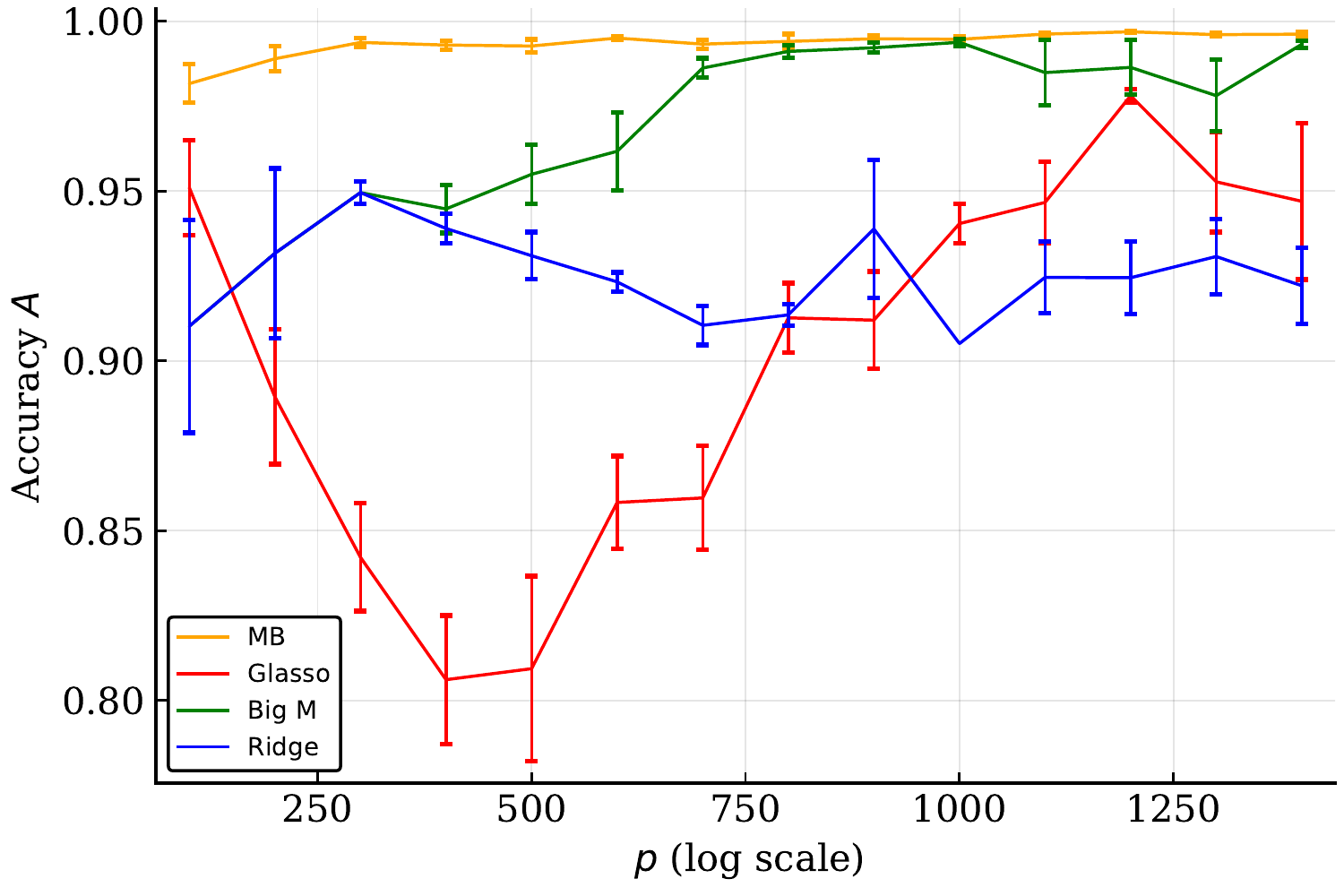}
	\caption{Accuracy $A$ vs. $p$.}
\end{subfigure} %
~
\begin{subfigure}[t]{.4\linewidth}
	\centering
	\includegraphics[width=\linewidth]{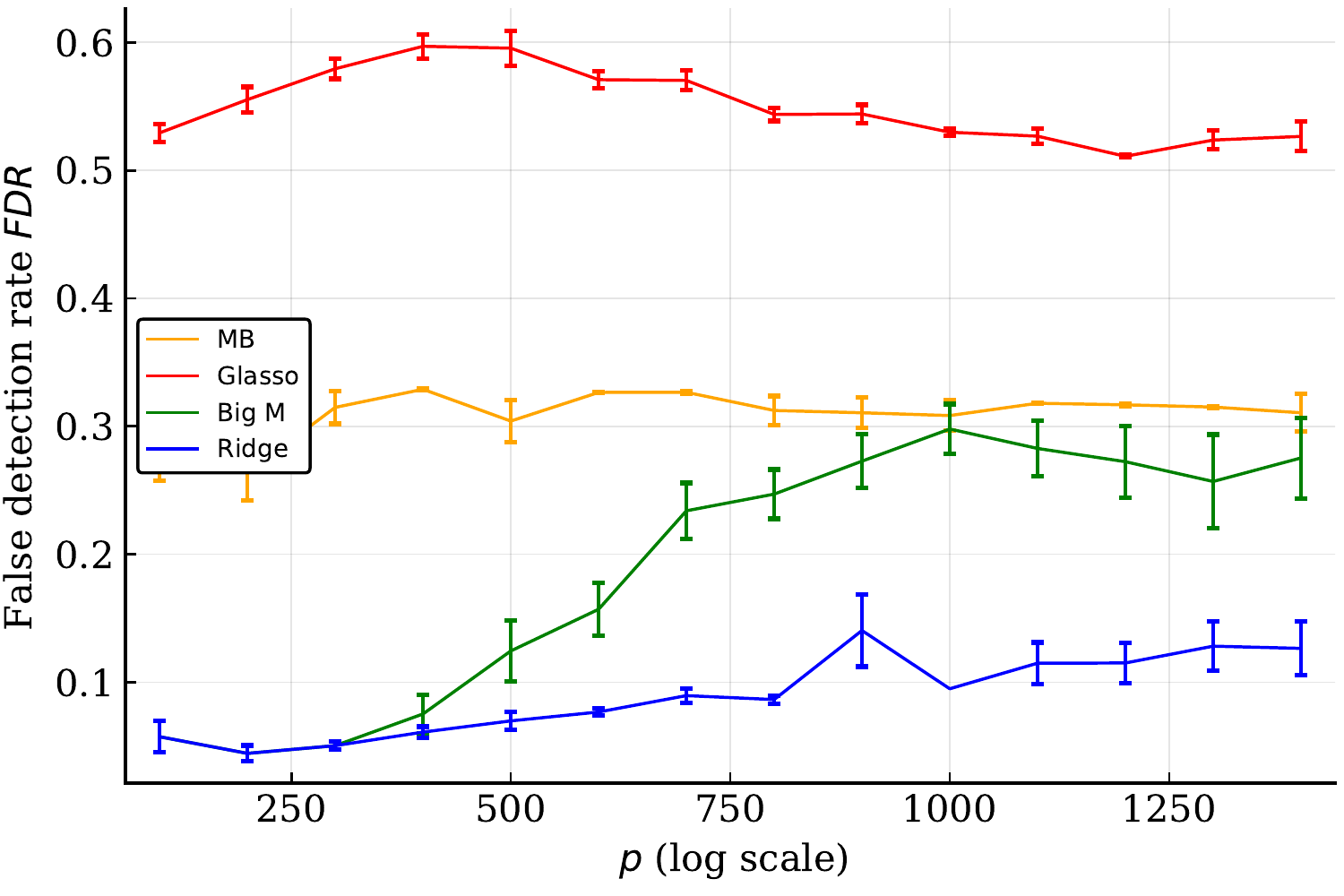}
	\caption{False detection rate $FDR$ vs. $p$.}
\end{subfigure}
\caption{Impact of the dimension $p$ on support recovery. Results are averaged over $10$ instances with $n=p$, $t=1\%$. Hyper-parameters are tuned using the $BIC_{1/2}$ criterion.}
\label{fig:scale.p.ebic}
\end{figure}
 \begin{figure}[H]
\centering
\begin{subfigure}[t]{.4\linewidth}
	\centering
	\includegraphics[width=\linewidth]{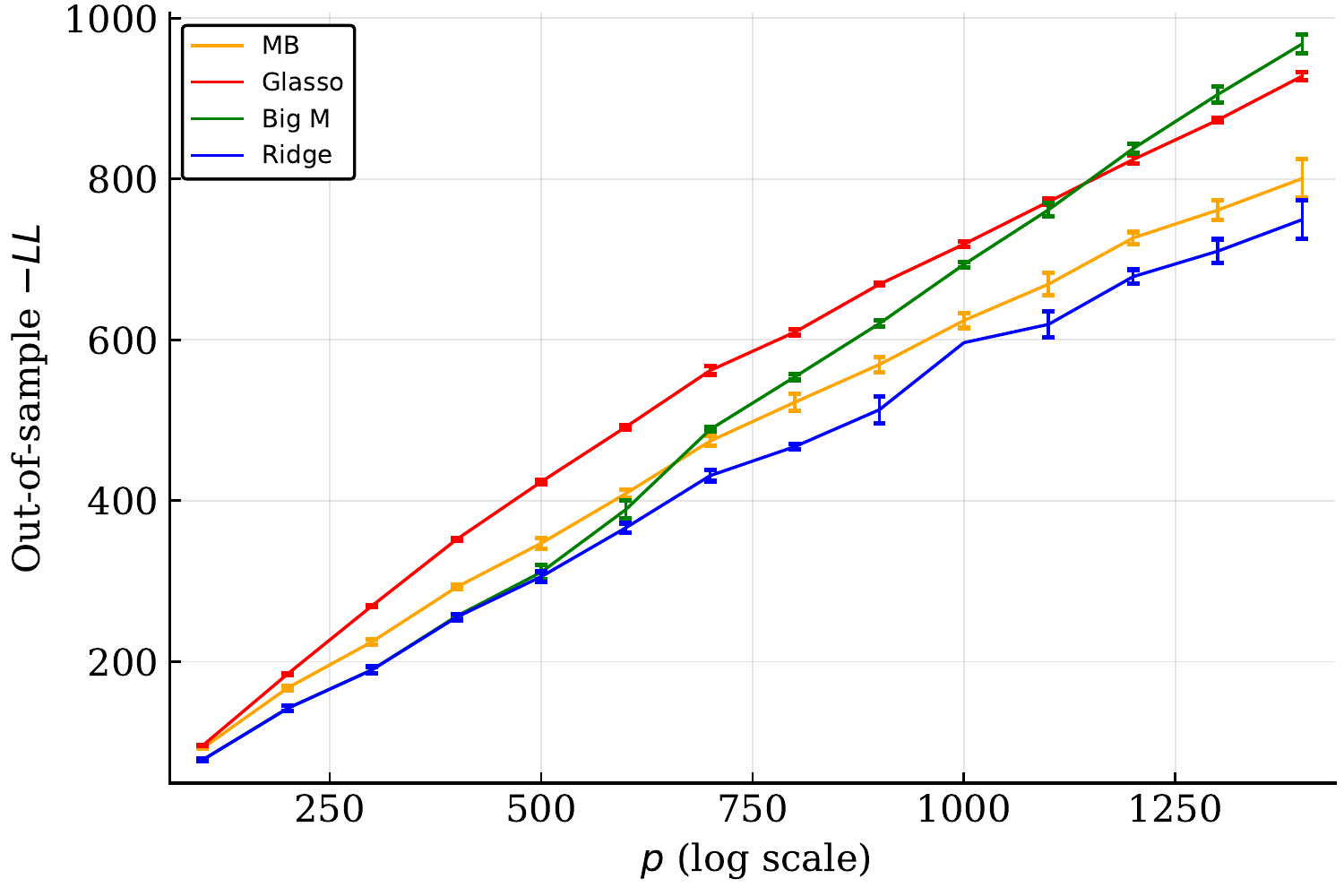}
	\caption{$BIC_{1/2}$ as a CV criterion.}
\end{subfigure} %
~
\begin{subfigure}[t]{.4\linewidth}
	\centering
	\includegraphics[width=\linewidth]{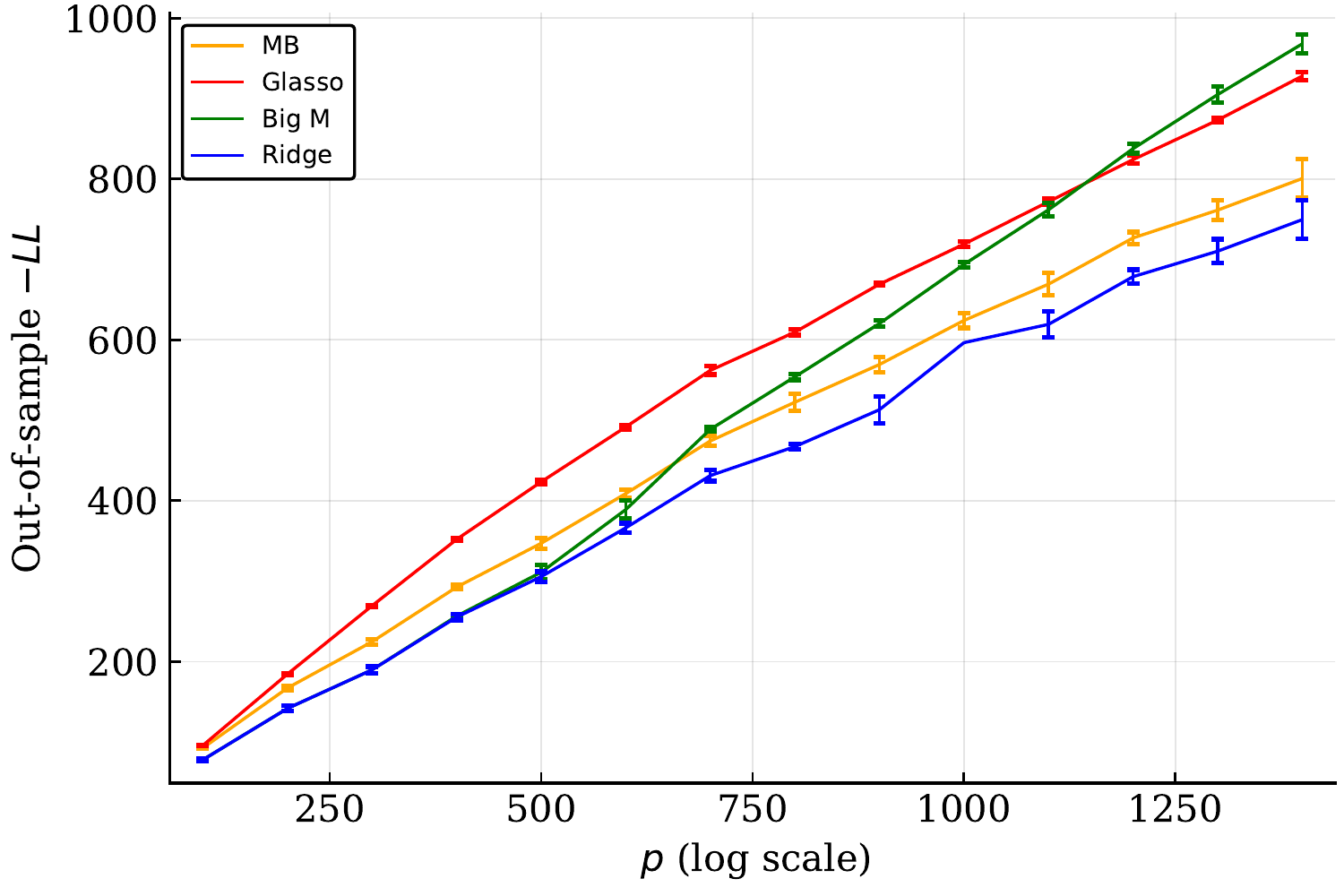}
	\caption{$-LL$ as a CV criterion.}
\end{subfigure}
\caption{Impact of the dimension $p$  on out-of-sample negative log-likelihood. Results are averaged over $10$ instances with $n=p$, $t=1\%$.}
\label{fig:scale.p.loglik}
\end{figure}
 \begin{figure}[H]
\centering
\begin{subfigure}[t]{.4\linewidth}
	\centering
	\includegraphics[width=\linewidth]{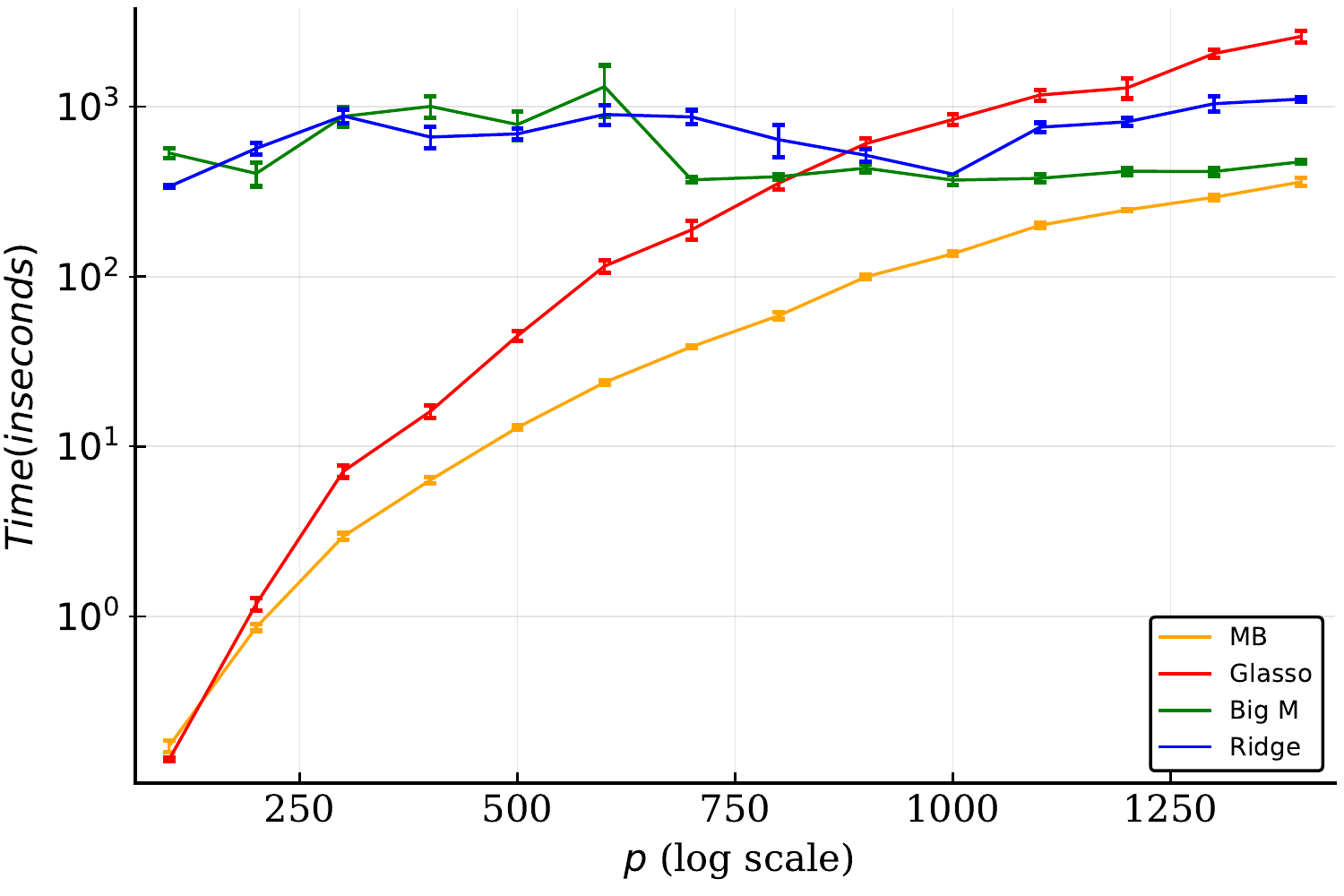}
	\caption{$BIC_{1/2}$ as a CV criterion.}
\end{subfigure} %
~
\begin{subfigure}[t]{.4\linewidth}
	\centering
	\includegraphics[width=\linewidth]{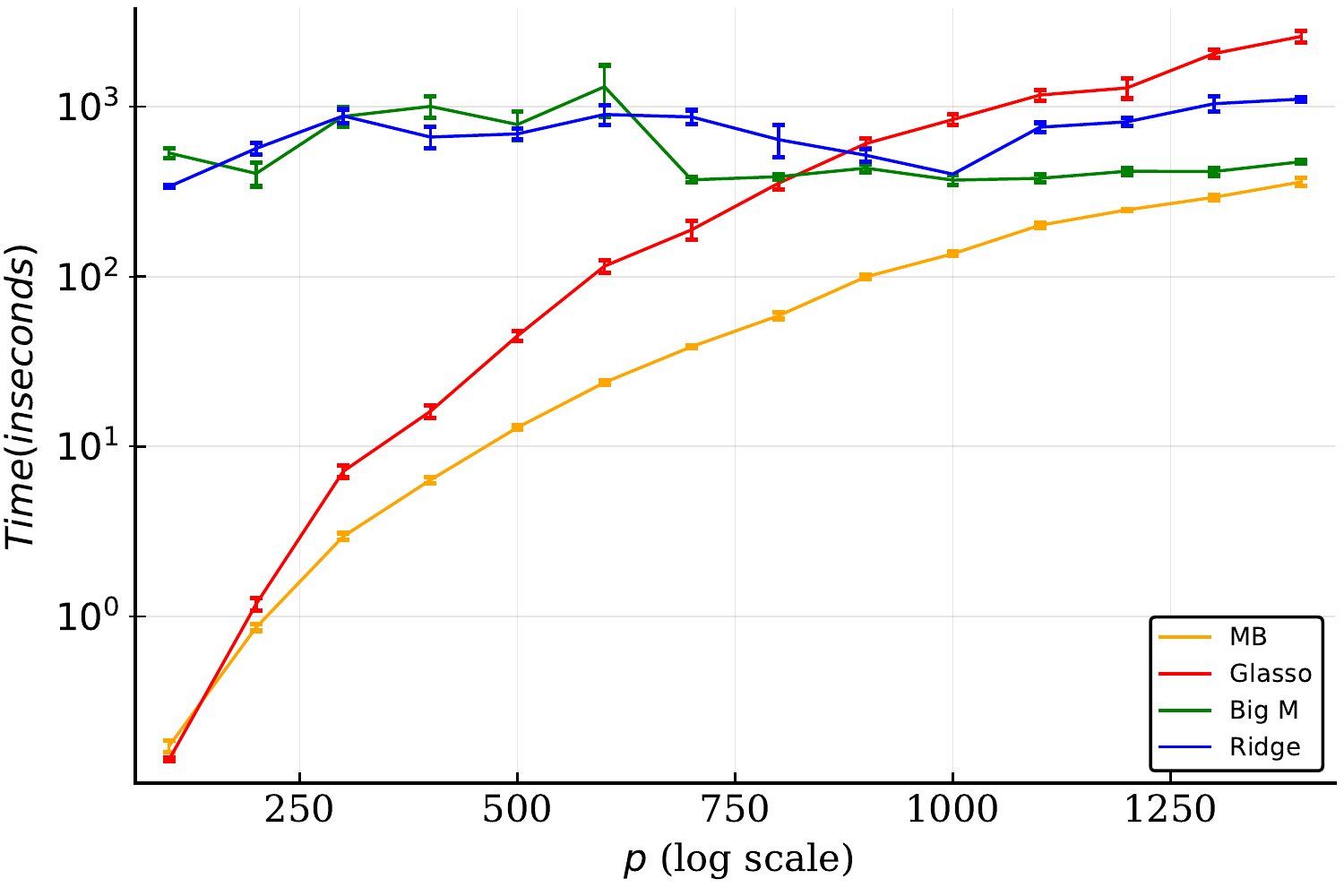}
	\caption{$-LL$ as a CV criterion.}
\end{subfigure}
\caption{Impact of the dimension $p$  on computational time. Results are averaged over $10$ instances with $n=p$, $t=1\%$. Recall that discrete formulations big-$M$ and ridge are stopped after $5$ minutes.}
\label{fig:scale.p.time}
\end{figure}


\bibliographystyle{plain}
\bibliography{biblio}
\end{document}